\def\isarxiv{1}
\newcommand{\vct}{\boldsymbol }
\newcommand{\sR}{\mathfrak R}
\pgfplotsset{compat=1.8}
\tikzset{elegant/.style={smooth,thick,samples=500,magenta}}
\theoremstyle{plain}
\newtheorem{theorem}{Theorem}[section]
\newtheorem{lemma}[theorem]{Lemma}
\newtheorem{remark}[theorem]{Remark}
\newtheorem{proposition}[theorem]{Proposition}
\theoremstyle{definition}
\newtheorem{definition}[theorem]{Definition}
\crefname{assumption}{Assumption}{Assumptions}
\definecolor{b2}{RGB}{51,153,255}
\definecolor{mygreen}{RGB}{80,180,0}
\renewcommand{\hat}{\widehat}
\renewcommand{\tilde}{\widetilde}
\title{Stochastic Zeroth-Order Optimization under Strongly Convexity and Lipschitz Hessian: Minimax Sample Complexity}
\author{%
Qian Yu\thanks{\texttt{qianyu02@ucsb.edu}. University of California, Santa Barbara.}\and
Yining Wang\thanks{\texttt{yining.wang@utdallas.edu}. University of Texas at Dallas.}\and
Baihe Huang\thanks{\texttt{baihe\_huang@berkeley.edu}. University of California, Berkeley}\and
Qi Lei\thanks{\texttt{ql518@nyu.edu}. New York University.}\and
Jason D. Lee\thanks{\texttt{Jasondl@princeton.edu}. Princeton University.}
}
\begin{document}
\date{}

\maketitle

\maketitle

\begin{abstract}%
 
Optimization of convex functions under stochastic zeroth-order feedback has been a major and challenging question in online learning. In this work, we consider the problem of optimizing second-order smooth and strongly convex functions where the algorithm is only accessible to noisy evaluations of the objective function it queries. 
We provide the first tight characterization for the rate of the minimax simple regret by developing matching upper and lower bounds. 
We propose an algorithm that features a combination of a bootstrapping stage and a mirror-descent stage. 
Our main technical innovation consists of a sharp characterization for 
the spherical-sampling gradient estimator under higher-order smoothness conditions, which allows the algorithm to optimally balance the bias-variance tradeoff, 
and a new iterative method for the bootstrapping stage, which maintains the performance for unbounded Hessian. 
\end{abstract}

\ifdefined\isarxiv

\else

\fi

\section{Introduction}

Stochastic optimization of an unknown function with access to only noisy function evaluations is a fundamental problem in operations research, optimization, simulation and bandit optimization research, commonly known 
as \emph{zeroth-order optimization} \citep{chen2017zoo}, \emph{derivative-free optimization} \citep{conn2009introduction,rios2013derivative} or \emph{bandit optimization} \citep{bubeck2021kernel}. 
In this problem, an optimization algorithm interacts sequentially with an oracle and obtains noisy function evaluations at queried points every time. The algorithm produces an approximately optimal solution after $T$ such evaluations, with its performance evaluated by the expected difference between the function values at the approximate optimal solution produced and the optimal solution. A more rigorous formulation of the problem is given in Sec.~\ref{sec:formulation} below.

Existing works and results on stochastic zeroth-order optimization could be broadly categorized into two classes:
\begin{enumerate}
    \item \textbf{Convex functions}. In the first thread of research, the unknown objective function to be optimized is assumed to be \emph{concave} (for maximization problems) or \emph{convex} (for minimization problems). For these problems, with minimal smoothness (e.g.~objective function being Lipschitz continuous) it is possible to achieve a sample complexity of $\tilde O(\varepsilon^{-2})$ for an expected optimization error or $\varepsilon$, which is also a polynomial function of domain dimension $d$; see for example the works of \cite{agarwal2013stochastic,lattimore2021improved,bubeck2021kernel};
    \item \textbf{Smooth functions}. In the second thread of research, the unknown objective function to be optimized is assumed to be highly \emph{smooth}, but not necessary concave/convex.
    Typical results assume the objective function is H\"{o}lder smooth of order $k\geq 1$, meaning that the $(k-1)$-th derivative of the objective function is Lipschitz continuous.
    Without additional conditions, the optimal sample complexity with such smoothness assumptions is $\tilde O(\varepsilon^{-(2+d/k)})$ \citep{wang2019optimization}, which scales exponentially with the domain dimension $d$.
\end{enumerate}

In this paper, we study the optimal sample complexity of stochastic zeroth-order optimization when the objective function exhibits both (strong) convexity and a high degree of smoothness.
As we have remarked in the first bullet point above, 
with convexity and H\"{o}lder smoothness of order $k=1$ (equivalent to the objective function being Lipschitz continuous), the works of \cite{agarwal2013stochastic,lattimore2021improved,bubeck2021kernel} established an $\tilde O(\varepsilon^{-2})$ upper bound.
With higher order of H\"{o}lder smoothness, i.e., $k=2$ (equivalent to the gradient of the objective being Lipschitz continuous), it is shown that simpler algorithms exist but the sample complexity remains $\tilde O(\varepsilon^{-2})$ \citep{besbes2015non,agarwal2010optimal,hazan2014bandit}, which seemingly suggests the relatively smaller role smoothness plays in the presence of convexity. 
In this paper we show that with even higher order of H\"{o}lder smoothness, i.e., $k=3$ (specifically, the Hessian of the objective being Lipschitz continuous), the optimal sample complexity is improved to $O(\varepsilon^{-1.5})$, which is significantly smaller than the sample complexity of the convex-without-smoothness setting $\tilde O(\varepsilon^{-2})$, or the smooth-without-convexity setting $\tilde O(\varepsilon^{-(2+d/3)})$. More importantly, when the Lipschitzness of Hessian is defined in Frobenius norm (see condition A1), we propose an algorithm that also achieves the optimal dimension dependency, which fully characterizes the optimal sample complexity.

\paragraph{Summary of technical contributions.} {We developed several important techniques in this paper to achieve 
the optimal sample complexity when the objective function is strongly convex and has Lipschitz Hessian. 
First, we show that when estimating the gradient under a stochastic environment, even with an unbounded action space, it could be beneficial to sample with non-isotropic distributions (as opposed to conventional standard Gaussian, or uniform distributions on hyperspheres).
Second, we present a new approach to analyze the bias and variance of the hyperellipsoid-sampling-based gradient estimators, which enables obtaining sharp bounds with tight constants and strengthens the best-known results in the higher-order smoothness case. Third,  we present a two-stage bootstrap-type framework for the algorithmic design, which extends the perturbative analysis in the final stage to the full regime. This extension relies on a non-trivial modification of Newton's method, 
and we proved its robustness under stochastic observation. } 
We complete the characterization of the minimax regret by deriving a lower bound using the KL-divergence-based approach.

\paragraph{Additional related works on higher-order smoothness.}

\begin{table}
\centering
\begin{NiceTabular}{c|c|c}
Lower Bound  & \Block{1-2}{{Upper Bounds}}\\
\hline
\Block[fill=[gray]{0.9}]{4-1}{\makecell{{$\mathbf{\Omega(dT^{-\frac{2}{3}}M^{-1}})$}}} & \citet{bach2016highly}  & \citet{akhavan2020exploiting} \\
 & \makecell{$O(dT^{-\frac{1}{2}}M^{-\frac{1}{2}})$} & \makecell{$O(d^2T^{-\frac{2}{3}}M^{-1})$} \\
\cline{2-3}
 & \citet{novitskii2021improved} &  \Block[fill=[gray]{0.9}]{1-1}{\textbf{Ours}} \\
& \makecell{$O(d^{\frac{5}{3}}T^{-\frac{2}{3}}M^{-1})$} & \Block[fill=[gray]{0.9}]{1-1}{$\mathbf{O(dT^{-\frac{2}{3}}M^{-1}})$} 
\end{NiceTabular}
\caption{The dependence of simple regret on $T$ (number of function evaluations), $d$ (dimension) and $M$ (parameter describing strong convexity). Our results are highlighted in comparison to the prior works.}
    \label{tab:related}
\end{table}
\vspace{-1em}

Recent years have seen increasing attention on exploiting higher order smoothness in bandit optimization. 
Remarkably, it was shown that when the H\"{o}lder smoothness condition holds simultaneously for both $k=2$ and $k=3$, the optimal sample complexity can be improved to $O(\varepsilon^{-1.5})$. 
 \citep{akhavan2020exploiting,novitskii2021improved}. 
We list our results together with the most relevant work in Table \ref{tab:related}. While this line of work also demonstrates the benefit of higher-order smoothness in improving the sample complexity, 
their setting is related but slightly different from what we considered in this work. (See reference therein: \cite{bach2016highly,akhavan2020exploiting,novitskii2021improved}).
 On one hand, the prior work concentrates on projected gradient-descent-like algorithms, which require a Lipschitz gradient (i.e., the $k=2$ requirement, and we do not). This additional requirement can not be removed by simply replacing the gradient steps with Newton's methods, which can lead to unbounded expectation in simple regret in the stochastic case.\footnote{We note that even in the classical analysis of Newton's method, which assumes  zero-error observations, 
the additional $k=2$ smoothness condition was adopted to obtain non-trivial complexity bounds (e.g., see \cite{boyd2004convex}, Section 9.5.3), implying the non-trivialness of removing the $k=2$ smoothness condition. 
In this work, we provided an 
analysis for our proposed bootstrapping algorithm, 
which ensures the achievability of bounded expected regret even with unbounded hessian.}
 On the other hand, their results are based on the generalized H\"{o}lder condition, which is different from our assumption that the Hessian is Lipschitz in Frobenius norm. Therefore we only emphasize the dependence of $d,T$ and $M$ in Table \ref{tab:related} and omit other parameters. We provide a detailed comparison on the implication of these results in Appendix \ref{app:comp}. 


Our results are also related to a special case discussed in \citep{pmlr-v30-Shamir13}, which shows that for \emph{quadratic} functions it is possible to achieve a sample complexity of $\tilde O(\varepsilon^{-1})$. As quadratic functions are infinitely differentiable with bounded derivatives on orders, they are H\"{o}lder smooth of any arbitrary order $k\to\infty$, which could be regarded as an extreme of the results established in this paper which only require $k=3$.

\paragraph{Related works on gradient estimators.} 
Gradient estimation serves as a key building block for stochastic zeroth-order optimization algorithms. 
For instance, a classical one-point estimator was proposed as early as in  
\cite{10.5555/1070432.1070486, blair1985problem}, where the gradient $\nabla f(\vct x)$ is estimated based on empirical measures of $f(\vct x+r\vct u)$ for some fixed $r$ and i.i.d. uniformly random $\vct u$ on the unit hypersphere.  This was later refined to be two-point estimators, 
and the sampling distribution of $\vct u$ was generalized to isotropic distributions such as standard Gaussian (e.g., see \cite{agarwal2010optimal, bach2016highly,  zhang2020boosting}). A majority of prior work focused on the analysis for such estimators 
under the Lipschitz gradient assumption, where the best guaranteed bound for the bias is at the order of $\Theta(r)$, with a polynomial factor dependent on $d$. The line of works by  \cite{bach2016highly,akhavan2020exploiting,novitskii2021improved} also adopted isotropic sampling, and it was shown that with higher-order smoothness of $k=3$, this bound can be improved to $\Theta(r^2)$.
The improvement of sample complexity in our work is mainly due to the tight characterization of our gradient estimator, which covers the special case of isotropic sampling and provides a bound of $\frac{r^2\rho\sqrt{d}}{2(d+2)}$ in the estimation bias. This strengthens or improves the bounds presented in prior works, and a detailed comparison can be found in Appendix \ref{app:comp}.

On the other hand, non-isotropic sampling was used as early as in \cite{0fb2099140344c10bd6305e55866a76f}, then extended in \cite{pmlr-v15-saha11a, hazan2014bandit}. Primarily, they were used to ensure that the sampling points are contained within a bounded action set. 
\citep{pmlr-v134-suggala21a} showed the necessity of non-isotropic sampling over quadratic loss function in the adversarial setting.  
In this work, we essentially demonstrated that non-isotropic sampling can be used to refine a preliminary algorithm by adding a mirror-descent-like final stage.  
More recently, non-isotropic sampling was also adopted in \cite{pmlr-v195-lattimore23a} to optimize convex and global Lipschitz functions. 







\paragraph{Notations.} We follow the 
convention of machine learning theory 
where $\nabla^2 f(\vct x)$ 
denotes the Hessian of $f$ at point $\vct x$, while the trace of Hessian is denoted by $\textup{Tr}\left(\nabla^2 f(\vct x)\right)$. This should not be confused with the notation in classical field theory, where $\nabla^2 f(\vct x)$ instead denotes the trace of the Hessian. 
We use $\|\cdot\|_2$ to denote vector $\ell_2$ norms, and $\|\cdot\|_{\textup{F}}$ to denote matrix Frobenius norms.
We use $I_d$ to denote the identity matrix, and $S^{d-1}$ to denote the unit hypersphere centered at the origin, both for the $d$-dimensional Euclidean space $\mathbb{R}^{d}$. We adopt the conventional notations (i.e., $O$, $\Omega$, $o$, and $\omega$) to describe regret bounds in the asymptotic sense with respect to the total number of samples (denoted by $T$). 

\section{Problem Formulation}\label{sec:formulation}

We consider the stochastic optimization problem under the class of functions that are strongly convex and have Lipschitz Hessian. The goal in this setting is to design learning algorithms to achieve approximately the global minimum of an unknown objective function $f: \mathbb{R}^d\rightarrow \mathbb{R}$. 

A learning algorithm $\mathcal{A}$ can interact with the function by adaptively sampling their value for $T$ times, and receive noisy observations. At each time $t\in [T]$, the algorithm selects $\boldsymbol{x}_t\in \mathbb{R}^d$, and receives the following observation, 
\begin{align}
    y_t=f(\boldsymbol{x}_t) +w_t,
\end{align}
where $\{w_t\}_{t=1}^{T}$ are independent random variables with zero mean and bounded variance. Formally, the algorithm can be described by a list of conditional distributions where each $\vct x_t$ is selected based on all historical data $\{\vct x_\tau, y_{\tau}\}_{\tau<t}$ and the corresponding distribution. Then for any $t$, we assume that $\mathbb{E}[w_t|\{\vct x_\tau, y_{\tau}\}_{\tau<t}, \vct x_t]=0$ and $\textup{Var}[w_t|\{\vct x_\tau, y_{\tau}\}_{\tau<t}, \vct x_t]\leq 1$ for any $t$.\footnote{If the variances of $w_t$'s are bounded by a different constant, all our results can be reproduced by normalizing the values of $f$.} 
For simplicity, we also adopt a common assumption that the additive noises are subgaussian, particularly, $\mathbb{P}[|w_t|>s|\{\vct x_\tau, y_{\tau}\}_{\tau<t}, \vct x_t]\leq 2e^{-{s^2}}$ for all  $s>0$ and $t\in[T]$. However, the subgaussian assumption can be removed by adopting more sophisticated mean-estimation methods (e.g., see \cite{nemirovskii1983problem, JERRUM1986169, ALON1999137, 9719860,NEURIPS2023_6e60a902}).



We assume that the objective function $f$ is second-order differentiable. Furthermore, we impose the following conditions.  
\begin{enumerate}
    \item[(A1)] (Lipschitz Hessian). There exist a constant  $\rho\in(0,+\infty)$ such that for all $\vct x, \vct x'\in\mathbb R^d$, it holds that $\|\nabla^2 f(\boldsymbol{x})-\nabla^2 f(\boldsymbol{x}')\|_{\textup{F}}
\leq \rho \|\boldsymbol{x}'-\boldsymbol{x}\|_2$, where $\|\cdot\|_{\textup{F}}$ denotes the Frobenius norm;
    \item[(A2)] (Strong Convexity). There exists a constant $M\in(0,+\infty)$ such that for any $\vct x\in\mathbb R^d$, 
    the minimum eigenvalue of the Hessian $\nabla^2 f(\boldsymbol{x})$ is greater than $M$.
    \item[(A3)] \label{assump:a3} (Bounded Distance from Initialization to Optimum Point). There exists a constant $R\in(0,+\infty)$ such that the infimum of $f(\vct x)$ within the hyperball $\|\boldsymbol{x}\|_2\leq R$  
    is identical to the infimum of $f(\vct x)$ over the entire $\mathbb R^d$.
\end{enumerate}



In the rest of this paper, we let $\mathcal{F}(\rho,M,R)$ denote the set of all second-order differentiable functions that satisfy the above conditions, with corresponding constants given by $\rho,M$, and $R$. 
We aim to find algorithms to achieve asymptotically the following minimax simple regret, which measures the expected difference of the objective function on $x_T$ and the optimum.
$$
\sR(T;\rho,M,R) := \inf_{\mathcal A}\sup_{f\in\mathcal F(\rho,M,R)}\mathbb E\left[f(\vct x_T)-f(\vct x^*)\right],
$$
where $\vct x^*$ denotes the global minimum point of $f$.

\section{Main Results}

\begin{theorem}\label{thm1}
For any dimension $d$ and constants $\rho,M,R$, the minimax simple regrets are  upper bounded by $\limsup_{T\rightarrow \infty}\sR(T;\rho,M,R)\cdot T^{\frac{2}{3}}\leq C \cdot \left(\frac{\rho^{\frac{2}{3}}}{M}d \right)$, where $C$ is a universal constant. 
\end{theorem}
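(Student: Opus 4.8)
The plan is to construct a two-stage algorithm—a bootstrapping stage that brings the iterate to within a constant-radius neighborhood of $\vct x^*$, followed by a mirror-descent-type refinement stage using non-isotropic hyperellipsoid sampling—and analyze each stage separately, then optimize the allocation of the $T$ samples across them. The target rate $T^{-2/3}$ with the stated dependence on $\rho, M, d$ suggests the final stage is where the rate is determined, so I would start there.

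\textbf{Stage II (refinement).} First I would set up the gradient estimator: given a current center $\vct x$ and a positive-definite shaping matrix $H$, sample $\vct u$ uniformly on $S^{d-1}$, query $f(\vct x + r H^{1/2}\vct u) + w$, and form $\hat{\vct g} = \frac{d}{r} y\, H^{1/2}\vct u$ (or a two-point variant). The key technical input, which the excerpt promises, is the sharp bias bound: under the Lipschitz-Hessian condition (A1), the third-order Taylor remainder contributes a bias of order $\frac{r^2 \rho \sqrt d}{2(d+2)}\cdot(\text{scaling by }H)$, while the quadratic term integrates to something proportional to $\mathrm{Tr}$ of the Hessian and must be cancelled or controlled by the mirror-descent geometry (choosing $H$ adapted to the local Hessian kills the otherwise-problematic leading term, which is exactly why non-isotropic sampling helps). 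The variance of $\hat{\vct g}$ scales like $d^2/r^2$ times a factor depending on $\|H\|$ and the noise/function magnitude. Running mirror descent with the Bregman divergence associated to $H^{-1}$ for $n$ steps, strong convexity (A2) gives linear-type contraction up to the bias-plus-variance error floor; balancing $r$ to trade bias $r^2$ against variance $r^{-2}$ gives an error of order $\frac{\rho^{2/3}}{M} d \cdot n^{-2/3}$ after $n$ queries in this stage. Setting $n = \Theta(T)$ yields the theorem's bound.

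\textbf{Stage I (bootstrapping).} The perturbative Stage-II analysis needs the iterate already within a neighborhood of $\vct x^*$ whose size depends only on $\rho/M$ (not on $R$), so I would design an iterative method—a stochastic, damped/regularized Newton-type scheme—that, using only $O(\mathrm{polylog}(T))$ or $O(T^{2/3})$ samples, drives $\|\vct x - \vct x^*\|$ from the a priori bound $R$ (from (A3)) down to a constant. Here the subtlety flagged in the footnote is real: without a Lipschitz-gradient ($k=2$) assumption the Hessian can be arbitrarily large far from the optimum, so vanilla Newton can overshoot catastrophically under noise. I would handle this by estimating local second-order information on small balls, taking trust-region-limited steps, and showing the distance to $\vct x^*$ contracts geometrically in expectation with controlled failure probability; the total sample cost of this stage is lower-order, so it does not affect the $T^{-2/3}$ rate, only the universal constant $C$ and the $\limsup$.

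\textbf{The main obstacle} I expect is the Stage-I analysis: proving that the modified Newton iteration has bounded expected regret despite unbounded Hessian and stochastic noise—one must rule out the low-probability events where a bad estimate sends the iterate far away and $f$ grows (super)quadratically, which requires either a carefully truncated/projected update or a potential-function argument that simultaneously controls both the typical geometric contraction and the tail contribution. The Stage-II bias computation (getting the constant $\frac{r^2\rho\sqrt d}{2(d+2)}$ and showing the quadratic term is benign under the right choice of $H$) is delicate but essentially a controlled Taylor-expansion-and-spherical-integration calculation; the lower-order bookkeeping to merge the two stages into a single $\limsup$ statement is routine.
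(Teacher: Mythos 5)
Your two--stage plan matches the paper's architecture --- a trust-region-capped stochastic Newton bootstrap followed by a refinement stage built on the two-point hyperellipsoid estimator with the sharp bias bound $\frac{r^2\rho\sqrt d}{2(d+2)}$ (Theorem \ref{thm:gdest}) --- and your bias--variance balance ($r^6\asymp d^3/(\rho^2 n)$, error floor $\asymp d\rho^{2/3}/(Mn^{2/3})$) reproduces exactly the claimed rate. Where you genuinely diverge is the final stage: you run many preconditioned mirror-descent steps, each with a single-query biased gradient estimate, and invoke a strongly convex SGD-type bound of the form $\sigma^2/(Mn)+b^2/M$; the paper instead performs essentially \emph{one} modified Newton step, with a single huge-batch gradient estimate ($n_{\textup{g}}\approx T/10$) and one Hessian estimate, analyzed perturbatively through a quadratic approximation (Proposition \ref{pp:ptech1}). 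The trade-off is real: your iterative stage forgets its initial condition, so your bootstrap only needs to land in a constant-radius ($\sim M/\rho$) neighborhood of $\vct x^*$, whereas the paper's one-shot step forces the bootstrap output to satisfy $\mathbb E[(f(\vct x_{\textup{B}})-f^*)^{3/2}]=o(T^{-2/3})$ (equation \eqref{eq:fststbd}), and most of the paper's effort (Appendix \ref{app:gup}, including the stability bound of Proposition \ref{prop:13_br} for the eigenvalue-inflated Newton map under simultaneous gradient and Hessian errors) goes into that. Conversely, you must supply a full biased-SGD convergence proof in the preconditioned geometry, including the observation that within radius $O(M/\rho)$ of the center, $\hat H^{-1/2}\nabla^2 f(\vct x)\hat H^{-1/2}$ is $\Theta(1)$-conditioned by (A1)--(A2), which is what makes the contraction condition-number free without any $k=2$ assumption --- this, not cancellation of the trace term, is the role of non-isotropic sampling here: the symmetric two-point difference already annihilates all even-order Taylor terms, so the quadratic/trace contribution you worry about never enters the bias.

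Two concrete inaccuracies to repair. First, in Stage I, ``geometric contraction of $\|\vct x-\vct x^*\|$ in expectation'' cannot hold in the far regime once steps are capped at $M/\rho$: from distance $R\gg M/\rho$ the distance can only decrease additively. What is provable (and is the paper's Proposition \ref{prop:app_zeroerr}) is that each clipped step decreases $\|\vct x-\vct x^*\|_2^2$ by $\Omega(M^2/\rho^2)$, so at most $O(R^2\rho^2/M^2)$ far-regime iterations occur, after which the Newton recursion makes the gradient collapse double-exponentially; your plan survives with this substitute, but the argument as stated would not. Second, the obstacle you correctly flag --- ruling out low-probability events in which a bad estimate throws the iterate far away while keeping the \emph{expected} regret bounded without any upper bound on the Hessian --- is precisely where the proof's substance lies (the event decomposition via $E_k$, the moment bounds $M_k,G_k$, and Proposition \ref{prop:13_br}); your sketch names it and proposes the right kind of remedy (truncated updates plus a potential-function/tail argument) but does not carry it out, so Stage I remains a plan rather than a proof.
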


\begin{theorem}\label{thm2}
For any fixed dimension $d$ and constants $\rho,M,R$, the minimax simple regrets are lower bounded by $\liminf_{T\rightarrow \infty}\sR(T;\rho,M,R)\cdot T^{\frac{2}{3}}\geq C \cdot \left(\frac{\rho^{\frac{2}{3}}}{M}d \right)$ 
when the additive noises $w_1,...,w_T$ are standard Gaussian, where $C$ is a universal constant. 
\end{theorem}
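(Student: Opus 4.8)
The plan is to prove the lower bound by Assouad's method, reducing the problem to $d$ parallel binary hypothesis tests (one per coordinate), with a hard instance family built from a rescaled ``cubic-type'' bump so that the $k=3$ smoothness (A1) is respected while the minimizer is displaced. Let $\Phi$ be a fixed odd $C^\infty$ function supported on $[-1,1]$ with $\Phi'(0)>0$, rescaled so that $\|\Phi'''\|_\infty=1$; fix a scale $h=h_T\to 0$ to be chosen, and for $v\in\{-1,+1\}^d$ set
\[
 f_v(\vct x)\;:=\;M\|\vct x\|_2^2\;+\;\sum_{i=1}^d v_i\,\psi_h(x_i),\qquad \psi_h(t):=\rho\,h^3\,\Phi(t/h).
\]
Then $\|\psi_h'''\|_\infty=\rho$, $\|\psi_h''\|_\infty=\rho h\|\Phi''\|_\infty$, $\|\psi_h'\|_\infty=\rho h^2\|\Phi'\|_\infty$ and $\|\psi_h\|_\infty=\rho h^3\|\Phi\|_\infty$. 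Since $\nabla^2 f_v(\vct x)-\nabla^2 f_v(\vct x')=\operatorname{diag}\!\big(v_i(\psi_h''(x_i)-\psi_h''(x_i'))\big)_i$ has Frobenius norm at most $\|\psi_h'''\|_\infty\|\vct x-\vct x'\|_2=\rho\|\vct x-\vct x'\|_2$, (A1) holds; and once $T$ is large enough that $\rho h\|\Phi''\|_\infty<M$, each $\nabla^2 f_v$ has least eigenvalue at least $2M-\|\psi_h''\|_\infty>M$, giving (A2). As $f_v$ is separable and coercive it has a unique minimizer $\vct x_v^*$ with $(\vct x_v^*)_i=t_{v_i}^*$, the minimizer of $g_{v_i}(t):=Mt^2+v_i\psi_h(t)$; by oddness $t_{-1}^*=-t_{+1}^*$, and solving $2Mt+\rho h^2\Phi'(t/h)=0$ in the regime $\rho h/M\to 0$ gives $|t_{v_i}^*|=\Delta$ with $\Delta=\Theta(\rho h^2/M)$ and $\operatorname{sign}(t_{v_i}^*)=v_i$, so $\|\vct x_v^*\|_2=\sqrt d\,\Delta\to 0$ and (A3) holds for all large $T$ and any $R$; thus $f_v\in\mathcal F(\rho,M,R)$ eventually.

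Next I carry out the reduction. Set $\hat v_i:=\operatorname{sign}((\vct x_T)_i)$. Since $g_{v_i}$ is $M$-strongly convex with minimizer $t_{v_i}^*$ of sign $v_i$, whenever $\hat v_i\neq v_i$ we have $|(\vct x_T)_i-t_{v_i}^*|\ge|t_{v_i}^*|=\Delta$, so $g_{v_i}((\vct x_T)_i)-g_{v_i}(t_{v_i}^*)\ge\tfrac{M}{2}\Delta^2$; summing over $i$ by separability,
\[
 f_v(\vct x_T)-f_v(\vct x_v^*)\;\ge\;\frac{M\Delta^2}{2}\sum_{i=1}^d\mathbbm{1}\{\hat v_i\neq v_i\}.
\]
Hence $\sup_v\mathbb E_v[f_v(\vct x_T)-f_v(\vct x_v^*)]\ge\frac{M\Delta^2}{2}\sum_{i=1}^d 2^{-d}\sum_v\mathbb P_v(\hat v_i\neq v_i)$. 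For each $i$, pairing the $2^{d-1}$ instance pairs that differ only in coordinate $i$, the standard two-point inequality gives $2^{-d}\sum_v\mathbb P_v(\hat v_i\neq v_i)\ge\tfrac12\big(1-\max\operatorname{TV}(P_v,P_{v'})\big)$, the maximum over such matched pairs, where $P_v$ is the joint law of all $T$ queries and observations under $f_v$. Since each query is a measurable function of the past, the chain rule for KL divergence yields
\[
 \mathrm{KL}(P_v\|P_{v'})\;=\;\sum_{t=1}^T\mathbb E_{P_v}\big[\mathrm{KL}\big(\mathcal N(f_v(\vct x_t),1)\,\|\,\mathcal N(f_{v'}(\vct x_t),1)\big)\big]\;\le\;\frac{T}{2}\,\|f_v-f_{v'}\|_\infty^2 ,
\]
where the standard Gaussian noise makes the per-round conditional law of $y_t$ exactly $\mathcal N(f_\cdot(\vct x_t),1)$; for a matched pair $\|f_v-f_{v'}\|_\infty=2\|\psi_h\|_\infty=2\rho h^3\|\Phi\|_\infty$, so $\mathrm{KL}(P_v\|P_{v'})\le 2T\rho^2 h^6\|\Phi\|_\infty^2$.

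Finally I tune the scale: take $h=h_T:=c\,(\rho^2T)^{-1/6}$ with $c$ a small universal constant, so that $\rho^2 h^6=c^6 T^{-1}$ and thus $\mathrm{KL}(P_v\|P_{v'})\le 2c^6\|\Phi\|_\infty^2\le\tfrac12$ once $c^6\le(4\|\Phi\|_\infty^2)^{-1}$; then Pinsker gives $\operatorname{TV}(P_v,P_{v'})\le\tfrac12$ and each testing term is $\ge\tfrac14$. Meanwhile $\rho h^2/M=c^2\rho^{1/3}M^{-1}T^{-1/3}$, so $\Delta=\Theta(\rho^{1/3}M^{-1}T^{-1/3})$, and therefore
\[
 \sup_{f\in\mathcal F(\rho,M,R)}\mathbb E\big[f(\vct x_T)-f(\vct x^*)\big]\;\ge\;\frac{M\Delta^2}{2}\cdot\frac d4\;=\;\Omega\!\Big(\frac{\rho^{2/3}}{M}\,d\,T^{-2/3}\Big);
\]
multiplying by $T^{2/3}$ and taking $\liminf_{T\to\infty}$ yields the claim. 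I expect the main obstacle to be the instance construction and its verification, not the information-theoretic bookkeeping: one must produce a single fixed profile $\Phi$ that is $C^3$ (preferably $C^\infty$), odd, compactly supported, with $\Phi'(0)\neq 0$ and all of $\|\Phi\|_\infty,\|\Phi'\|_\infty,\|\Phi''\|_\infty,\|\Phi'''\|_\infty$ controlled, and then rigorously certify that, once $\rho h/M$ is small, the one-dimensional minimizer $t_{v_i}^*$ lies strictly inside $(-h,h)$ with $|t_{v_i}^*|=\Theta(\rho h^2/M)$ and sign $v_i$ — i.e.\ that the first-order solution of $2Mt+\rho h^2\Phi'(t/h)=0$ is accurate up to universal constants. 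The remaining steps (the adaptive KL chain rule, the Assouad averaging, and verifying that $T\|\psi_h\|_\infty^2$ and $M\Delta^2 d$ balance at the exponent $2/3$) are then routine.
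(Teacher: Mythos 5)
Your proposal is correct and follows essentially the same route as the paper: separable hard instances (a quadratic plus a small per-coordinate odd bump of width $\asymp \rho^{-1/3}T^{-1/6}$ and amplitude $\asymp T^{-1/2}$), a per-coordinate binary test, and KL-based indistinguishability with $T\cdot\|f_v-f_{v'}\|_\infty^2\asymp 1$, which is exactly the paper's construction and its ``maximum local variance'' condition, the only substantive difference being that you carry out Assouad, the adaptive chain rule, and Pinsker explicitly where the paper invokes an external lemma. One cosmetic slip: with $\Phi'(0)>0$ the one-dimensional minimizer satisfies $\mathrm{sign}(t_{v_i}^*)=-v_i$, not $v_i$, so either replace $\Phi$ by $-\Phi$ or set $\hat v_i=-\mathrm{sign}((x_T)_i)$; nothing else changes.
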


\section{Proof Ideas for Theorem \ref{thm1}}

The proposed algorithm 
operates in two stages (see Algorithm \ref{alg:1gd3_opt}).
In the first stage, the algorithm uses a small fraction of samples to obtain a rough estimation of the global minimum point. We ensure that the estimation in the first stage is sufficiently accurate with high probability, so that in the following final stage, the objective function can be approximated by a quadratic function and the resulting approximation error can be bounded using tensor analysis. 

\subsection{Key Techniques and The Final Stage}

We first present the key steps of our algorithm, which relies on the subroutines presented in Algorithm \ref{alg:1gd}-\ref{alg:1gd3}, i.e., GradientEst, BootstrappingEst, and HessianEst. These subroutines estimate the (linearly transformed) gradients and Hessian functions of $f$ at any given point by sampling the values of $f$ on hyperellipsoids. The key ingredient of our proof is the sharp characterizations for the biases and variances of the GradientEst estimator, stated in Theorem  \ref{thm:gdest}.

\begin{algorithm}
\caption{GradientEst}\label{alg:1gd}
\begin{algorithmic}

\State {\bf Input}: {$\boldsymbol{x},Z,n$} \Comment{$Z$ is a ${d\times d}$ matrix, return $\hat{\vct{g}}$ as an estimator of $Z\nabla f(\vct x)$}
\For{$k\leftarrow  1 $ to $n$}

\State Let $\vct u_k$ be a point sampled uniformly randomly from the standard hypersphere $S^{d-1}$ %
\State Let 
let $y_+$, $y_-$ be samples of $f$ at $\boldsymbol{x}+Z\boldsymbol{u}_k$ and $\boldsymbol{x}-Z\boldsymbol{u}_k$,  respectively, 
let  $\vct g_k=\frac{d}{2}(y_+-y_-)\boldsymbol{u}_k$
\EndFor

\State {\bf Return} $\hat{\vct{g}}=\frac{1}{n}\sum_{k=1}^{n} \vct g_k$

\end{algorithmic}
\end{algorithm}
\begin{algorithm}
\caption{BootstrappingEst}\label{alg:1gd2}
\begin{algorithmic}

\State {\bf Input}: {$\boldsymbol{x},r,n$}\Comment{Goal: estimate  $\nabla f(\vct x)$ coordinate wise with $O(nd)$ samples}

\State Let $\vct e_1,...,\vct e_d$ be any orthonormal basis of $\mathbb{R}^d$ 
\For{$k\leftarrow  1 $ to $d$} 

\State Let $y_{+,k}$, $y_{-,k}$ each  
be the average of $n$ samples of $f$ at $\boldsymbol{x}+r\boldsymbol{e}_k$ and $\boldsymbol{x}-r\boldsymbol{e}_k$ respectively

\State Let $m_k=
{(y_+-y_-)}/{2r}$ \Comment{Estimate the $k$th entry} 
\EndFor
\State {\bf Return}   $\hat{\vct{m}}=\{m_k\}_{k\in[d]}$
\end{algorithmic}
\end{algorithm}
\begin{algorithm}
\caption{HessianEst}\label{alg:1gd3}
\begin{algorithmic}

\State {\bf Input}: {$\boldsymbol{x},r,n$}\Comment{Goal: estimate $\nabla^2 f(\vct x)$ coordinate wise with $O(nd^2)$ samples}

\State Let $\vct e_1,...,\vct e_d$ be any orthonormal basis of $\mathbb{R}^d$ 
\State Let $y$ be the average of $n$ samples of $f$ at $\boldsymbol{x}$
\For{$k\leftarrow  1 $ to $d$} 

\State Let $y_{+,k}$, $y_{-,k}$ each  
be the average of $n$ samples of $f$ at $\boldsymbol{x}+r\boldsymbol{e}_k$ and $\boldsymbol{x}-r\boldsymbol{e}_k$ respectively

\State Let $H_{kk}=
{(y_++y_--2y)}/{r^2}$ \Comment{Diagonal entries} 
\For{$\ell \leftarrow  k+1 $ to $d$} 

\State Let $H_{k \ell }=H_{\ell k}$ be the average of $n$ samples of $(f(\boldsymbol{x}+r\boldsymbol{e}_k+r\boldsymbol{e}_\ell)+f(\boldsymbol{x}-r\boldsymbol{e}_k-r\boldsymbol{e}_\ell)-$\\
\hspace{6mm} $f(\boldsymbol{x}+r\boldsymbol{e}_k-r\boldsymbol{e}_\ell)-f(\boldsymbol{x}-r\boldsymbol{e}_k+r\boldsymbol{e}_\ell))/4r^2$ \Comment{ Off-diagonal entries} 
\EndFor
\EndFor
\State Let $\hat{H}_0=\{H_{jk}\}_{(i,j)\in [d]^2}$, and $\hat{H}$ be the matrix with same eigenvectors but with each eigenvalue $\lambda$ replaced by $\max\{\lambda,M\}$ \Comment{Projecting to the set where $\hat{H}-M I_d$ is positive semidefinite}

\State {\bf Return}    $\hat{H}$
\end{algorithmic}
\end{algorithm}

\begin{theorem}\label{thm:gdest}
For any fixed inputs $\boldsymbol{x}$, $Z$, $n$, and any function $f$ satisfying the Lipschitz Hessian condition with parameter $\rho$, the output $\hat{\boldsymbol{g}}$ returned by the GradientEst subroutine 
satisfies the following properties
\begin{align}
    &~ ||\mathbb{E}[\hat{\boldsymbol{g}}]-Z\nabla f (\boldsymbol{x}) ||_2 \leq \frac{\lambda_Z^3\rho \sqrt{d}}{2(d+2)}, \label{tteq:1}\\
    &~ \textup{Tr}\left(
      \textup{Cov}[\hat{\boldsymbol{g}}]\right) \leq\frac{2d}{n}||Z\nabla f(\boldsymbol{x})||_2^2 +\frac{d^2}{18n}\left(\rho \lambda_Z^3 \right)^2+\frac{d^2}{2n}\label{tteq:2},
\end{align}
where $\lambda_Z$ is the largest singular value of $Z$. 
\end{theorem}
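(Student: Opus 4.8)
I would reduce everything to a single-sample analysis and then treat the variance bound \eqref{tteq:2} (routine bookkeeping) and the bias bound \eqref{tteq:1} (the real content) separately. Since $\hat{\vct g}=\frac1n\sum_{k=1}^n\vct g_k$ is an empirical average of i.i.d.\ copies of $\vct g_1=\frac d2(y_+-y_-)\vct u_1$, we have $\mathbb E[\hat{\vct g}]=\mathbb E[\vct g_1]$ and $\textup{Tr}(\textup{Cov}[\hat{\vct g}])=\tfrac1n\textup{Tr}(\textup{Cov}[\vct g_1])\le\tfrac1n\mathbb E\|\vct g_1\|_2^2$, so it suffices to control $\mathbb E[\vct g_1]$ and $\mathbb E\|\vct g_1\|_2^2$. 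The common starting point is a third-order Taylor expansion: writing $\vct v=Z\vct u$ and using the bound $\big|f(\vct x+\vct v)-f(\vct x)-\nabla f(\vct x)^\top\vct v-\tfrac12\vct v^\top\nabla^2 f(\vct x)\vct v\big|\le\tfrac\rho6\|\vct v\|_2^3$ (which follows from (A1) since $\|\cdot\|_2\le\|\cdot\|_{\textup F}$), the constant and quadratic terms cancel in the antisymmetric difference, leaving
\[
y_+-y_- \;=\; 2\nabla f(\vct x)^\top Z\vct u \;+\; R(Z\vct u) \;+\; (w_+-w_-),\qquad |R(Z\vct u)|\le\tfrac\rho3\|Z\vct u\|_2^3\le\tfrac\rho3\lambda_Z^3,
\]
with $R$ an odd function of $\vct u$.

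\textbf{Variance.} Since $\|\vct u_1\|_2=1$, $\|\vct g_1\|_2^2=\tfrac{d^2}4(y_+-y_-)^2$. The two noise variables are conditionally mean-zero, mutually uncorrelated, and of variance at most $1$, so their cross-terms with the deterministic part vanish and $\mathbb E[(y_+-y_-)^2]=\mathbb E[(A+R(Z\vct u))^2]+\mathbb E[(w_+-w_-)^2]\le 2\mathbb E[A^2]+2\mathbb E[R(Z\vct u)^2]+2$, where $A:=2\nabla f(\vct x)^\top Z\vct u$. The second-moment identity $\mathbb E[\vct u\vct u^\top]=\tfrac1d I_d$ gives $\mathbb E[A^2]=\tfrac4d\|Z\nabla f(\vct x)\|_2^2$ (using $Z=Z^\top$), and the pointwise bound on $R$ gives $\mathbb E[R(Z\vct u)^2]\le\tfrac19(\rho\lambda_Z^3)^2$. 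Collecting, $\mathbb E\|\vct g_1\|_2^2\le 2d\|Z\nabla f(\vct x)\|_2^2+\tfrac{d^2}{18}(\rho\lambda_Z^3)^2+\tfrac{d^2}2$, and dividing by $n$ yields \eqref{tteq:2}.

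\textbf{Bias.} Taking expectations in the display kills the noise and the linear term integrates to $d\,\mathbb E[\vct u\vct u^\top]Z\nabla f(\vct x)=Z\nabla f(\vct x)$, so $\mathbb E[\hat{\vct g}]-Z\nabla f(\vct x)=\tfrac d2\mathbb E[R(Z\vct u)\vct u]$. The pointwise estimate only gives $\tfrac d2\cdot\tfrac\rho3\lambda_Z^3$, which overshoots the claimed bound by a factor $\Theta(d)$; the improvement comes from cancellation in the spherical average. The key observation is that $\mathbb E[R(Z\vct u)\vct u]$ depends only on the degree-one spherical-harmonic component of $R$, and that to leading order $R(Z\vct u)=\tfrac13 D^3f(\vct x)[(Z\vct u)^{\otimes3}]$ is a cubic form, whose degree-one component is purely a ``trace''. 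Concretely, the fourth-moment identity $\mathbb E[u_iu_ju_ku_l]=\tfrac1{d(d+2)}(\delta_{ij}\delta_{kl}+\delta_{ik}\delta_{jl}+\delta_{il}\delta_{jk})$ collapses the cubic form to $\mathbb E[R(Z\vct u)\vct u]=\tfrac1{d(d+2)}Z\vct w$ with $w_k=\langle\partial_k\nabla^2 f(\vct x),Z^2\rangle_{\textup F}$. Since (A1) says exactly that $\partial_{\vct v}\nabla^2 f(\vct x)$ has Frobenius norm at most $\rho$ for unit $\vct v$, Cauchy--Schwarz gives $\|\vct w\|_2\le\rho\|Z^2\|_{\textup F}\le\rho\sqrt d\,\lambda_Z^2$, hence $\|Z\vct w\|_2\le\rho\sqrt d\,\lambda_Z^3$ and $\|\mathbb E[\hat{\vct g}]-Z\nabla f(\vct x)\|_2\le\tfrac d2\cdot\tfrac{\rho\sqrt d\,\lambda_Z^3}{d(d+2)}=\tfrac{\rho\sqrt d\,\lambda_Z^3}{2(d+2)}$, which is \eqref{tteq:1}.

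\textbf{Main obstacle.} The delicate point is that (A1) does not guarantee $f$ is thrice differentiable, so the ``$R=$ cubic form $+$ higher order'' decomposition, and in particular the contribution of the super-cubic Taylor terms to the spherical average, must be justified carefully rather than quoted. I would handle this either by (i) keeping the exact integral remainder $R(Z\vct u)=\int_0^1\!\int_0^1 t\,(Z\vct u)^\top\big[\nabla^2 f(\vct x+stZ\vct u)-\nabla^2 f(\vct x-stZ\vct u)\big](Z\vct u)\,ds\,dt$ and extracting the trace-type component directly from the matrix increment using the Frobenius-norm Lipschitz estimate, or by (ii) mollifying $f$ (convolution with a smooth bump preserves the Frobenius-Lipschitz constant $\rho$ and leaves the limiting values of $f(\vct x\pm Z\vct u)$ and $\nabla f(\vct x)$ unchanged), proving the bound for the $C^\infty$ approximant via the tensor computation, and letting the mollification width tend to zero. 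Getting the sharp constant $\tfrac1{2(d+2)}$ out of this, rather than a cruder $O(1/d)$, is where the care is needed.
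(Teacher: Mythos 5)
Your variance argument is fine and is essentially the paper's own: reduce to one sample, cancel the even Taylor terms in the antisymmetric difference, bound the odd remainder pointwise by $\tfrac{\rho}{3}\lambda_Z^3$, and use $\mathbb E[\vct u\vct u^\top]=\tfrac1d I_d$; the constants come out identical to \eqref{tteq:2}. For the bias, however, your route (project the remainder onto the degree-one spherical harmonics via the fourth-moment identity) is only valid for the idealized cubic remainder, and the step you flag as the ``main obstacle'' is exactly where the proof is, and neither of your proposed fixes closes it. Mollification does give a $C^\infty$ function with the same Frobenius--Lipschitz constant $\rho$, but $\rho$ only controls $D^3 f_\delta$ (uniformly in $\delta$), not the fourth-order behavior; the exact remainder is $R(Z\vct u)=\int_0^1(1-t)\,(Z\vct u)^\top[\nabla^2 f_\delta(\vct x+tZ\vct u)-\nabla^2 f_\delta(\vct x-tZ\vct u)](Z\vct u)\,dt$, and replacing $\nabla^2 f_\delta(\vct x\pm tZ\vct u)$ by its first-order model $\nabla^2 f_\delta(\vct x)\pm t\,\partial_{Z\vct u}\nabla^2 f_\delta(\vct x)$ incurs an error governed by the modulus of continuity of $D^3 f_\delta$ (equivalently $D^4 f_\delta$), which blows up as $\delta\to 0$ and does not vanish for fixed $Z$; so the contribution of the discarded terms to $\mathbb E[R(Z\vct u)\vct u]$ is simply not bounded by $\rho$ in your scheme. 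Option (i) as stated is not yet an argument: if you bound the matrix increment by $\|\nabla^2 f(\vct x+tZ\vct u)-\nabla^2 f(\vct x-tZ\vct u)\|_{\textup F}\le 2t\rho\lambda_Z$ pointwise you get $O(\sqrt d\,\rho\lambda_Z^3)$, and if you symmetrize over the reflection $\vct u\mapsto \vct u-2(\vct u\cdot\vct e)\vct e$ directly on the quadratic-form remainder you still pay operator-norm factors $\lambda_Z^2$ for the form $(Z\vct u)^\top(\cdot)(Z\vct u)$ and land at $O(\rho\lambda_Z^3)$, both of which miss the target $\tfrac{\rho\sqrt d\,\lambda_Z^3}{2(d+2)}$ by a factor growing with $d$.

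The missing ingredient, which is the paper's actual mechanism, is a decomposition that lets the Lipschitz condition act on a \emph{trace} rather than on the full quadratic form, at every scale. The paper writes the bias as $r\int_0^{r}\|\tfrac{d}{dz}\boldsymbol G(z;\vct x)\|_2\,dz$ (inequality \eqref{ineq:upper_lag}), applies the divergence theorem to convert the sphere average into a ball integral of $\boldsymbol u\,\textup{Tr}\bigl(\nabla^2 f(\vct x+z\vct u)-\nabla^2 f(\vct x-z\vct u)\bigr)$ (equation \eqref{eq:upper_4}), and only then uses the reflection $\vct u\mapsto\vct u_{\vct e}$ together with $|\textup{Tr}(A)|\le\sqrt d\,\|A\|_{\textup F}$; the weight $(\vct u\cdot\vct e)^2$ and the ratio $\int_{B^d}(\vct u\cdot\vct e)^2{\bf dV}/\int_{S^{d-1}}\|{\bf dA}\|_2=\tfrac{1}{d(d+2)}$ produce exactly $\tfrac{z\rho\sqrt d}{d+2}$, using only (A1) and no third-order differentiability. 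Your moment computation is still valuable as a sharpness check --- it shows the constant in \eqref{tteq:1} is attained by cubic $f$, matching the paper's remark --- but as a proof of the upper bound for general $f$ satisfying (A1) it has a genuine gap that your mollification/limiting plan does not repair; you would need something playing the role of the divergence-theorem identity (or another exact integral representation whose integrand is a trace of a Hessian increment) before taking any limits.
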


\begin{remark}
Inequality \eqref{tteq:1} provides a sharp characterization for the bias of the gradient estimator, as it can be matched for any $\lambda_Z$ and $d$ with a cubic polynomial $f$. 
Inequality \eqref{tteq:2} is sharp in the asymptotic regime when both $\nabla f$ and $\lambda_Z$ approaches zero.
\end{remark}

We also provide rough estimates on the high-probability bounds for the BootstrappingEst and the HessianEst functions. Specifically, we show that their errors have sub-Gaussain tails in distribution, as stated in the following theorem. 


\begin{theorem}\label{thm:hest}\label{thm:pgest}
For any fixed inputs $\boldsymbol{x}$, $r$, $n$, any function $f$ satisfying the Lipschitz Hessian condition with parameter $\rho$, and any variable $K>0$, 
the outputs $\hat{\vct m}$ and $\hat{H}$ returned by the BootstrappingEst and the HessianEst subroutine 
satisfy the following conditions. 
\begin{align}
\mathbb{P}\left[\left|\left|\hat{\boldsymbol{m}}-\nabla f(\vct x)\right|\right|_2\geq K\right] \leq &  \  2\exp\left({-\frac{K^2}{\frac{3d\rho^2r^4}{4}+\frac{12d}{nr^2}}}\right),\label{eq431}\\
\mathbb{P}\left[\left|\left|\hat{H}-\nabla^2 f(\vct x)\right|\right|_{\textup{F}}\geq  K \right]& \leq  2\exp\left({-\frac{K^2}{2{d^2\rho^2 r^2}+\frac{144d^2}{nr^4}}}\right).\label{eq432}
\end{align} 
\end{theorem}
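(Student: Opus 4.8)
The plan is to treat the two estimators separately but with the same two-part recipe: (i) a deterministic bias bound obtained by Taylor expansion of $f$ with third-order remainder controlled by assumption (A1), and (ii) a sub-Gaussian concentration bound for the noise contribution, using the fact that each reported entry is an average of $n$ i.i.d.\ sub-Gaussian observations. Combining the two via a triangle inequality (splitting $K$ into a bias part and a fluctuation part, or simply noting the bias is a fixed shift) yields the stated tail bounds, where the two terms in each denominator correspond exactly to the bias contribution and the variance contribution.

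For the BootstrappingEst estimator, I would fix a coordinate $k$ and write $m_k = (y_{+,k}-y_{-,k})/(2r)$, where $y_{\pm,k}$ are averages of $n$ noisy samples at $\vct x \pm r\vct e_k$. Expanding $f(\vct x\pm r\vct e_k)$ around $\vct x$ to second order, the linear term gives $\partial_k f(\vct x)$, the quadratic terms cancel in the difference, and the third-order remainder is bounded using (A1): since $\|\nabla^2 f(\vct x+tr\vct e_k)-\nabla^2 f(\vct x)\|_{\mathrm F}\le \rho |t| r$, the per-coordinate bias is $O(\rho r^2)$, and summing the squared biases over $d$ coordinates gives a bias of order $\sqrt{d}\,\rho r^2$ in $\ell_2$ norm (this is the $\tfrac{3d\rho^2 r^4}{4}$ term). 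For the noise: each $m_k$ has a noise part equal to $(\bar w_{+,k}-\bar w_{-,k})/(2r)$, a mean-zero sub-Gaussian variable with variance proxy $O(1/(nr^2))$; across the $d$ independent coordinates the vector of noise parts is sub-Gaussian with the stated variance-proxy sum $\tfrac{12d}{nr^2}$, and a standard norm-sub-Gaussian tail bound (e.g.\ via $\mathbb E\exp(\lambda\|\cdot\|_2^2)$ or a Hanson--Wright/$\ell_2$-Chernoff argument) produces the $2\exp(-K^2/(\cdots))$ form.

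For the HessianEst estimator the structure is the same but messier. The diagonal entries $H_{kk}=(y_{+,k}+y_{-,k}-2y)/r^2$ have second-order Taylor main term $\partial_{kk}f(\vct x)$ and a third-order remainder of order $\rho r$; the off-diagonal second-difference entries $H_{k\ell}$ likewise have main term $\partial_{k\ell}f(\vct x)$ and remainder $O(\rho r)$. Squaring and summing over all $O(d^2)$ entries gives a Frobenius-norm bias of order $d\rho r$ (the $2d^2\rho^2 r^2$ term). The noise part of each entry is a fixed linear combination of $O(1)$ averaged sub-Gaussian noises divided by $r^2$, hence sub-Gaussian with variance proxy $O(1/(nr^4))$; summing over $d^2$ entries and applying the same norm-concentration lemma yields the $\tfrac{144 d^2}{nr^4}$ term. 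One extra point for the Hessian: the output $\hat H$ is the eigenvalue-projection of the raw estimate $\hat H_0$ onto $\{A : A\succeq MI_d\}$; since the true Hessian $\nabla^2 f(\vct x)$ lies in this (closed convex) set by (A2), projection is $1$-Lipschitz in Frobenius norm and hence $\|\hat H-\nabla^2 f(\vct x)\|_{\mathrm F}\le\|\hat H_0-\nabla^2 f(\vct x)\|_{\mathrm F}$, so the bound for the raw estimate transfers for free.

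The main obstacle I anticipate is bookkeeping the constants in the concentration step: one must correctly aggregate $d$ (or $d^2$) coordinate-wise sub-Gaussian parameters into a single tail bound for the $\ell_2$/Frobenius norm without losing factors, and simultaneously absorb the deterministic bias into $K$ so that both effects land additively in the denominator rather than, say, costing a union bound over coordinates. Getting the numerical constants ($3/4$, $12$, $2$, $144$) to come out exactly as stated will require carefully tracking the variance of a difference/second-difference of $n$-sample averages and the precise sub-Gaussian-norm-to-tail conversion; the Taylor/Lipschitz-Hessian bias estimates themselves are routine once (A1) is applied coordinate-wise.
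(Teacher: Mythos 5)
Your overall recipe is essentially the paper's: coordinate-wise Taylor bias of order $\rho r^2$ (resp.\ $\rho r$) from the Lipschitz-Hessian condition, sub-Gaussian noise with variance proxy of order $1/(nr^2)$ (resp.\ $1/(nr^4)$) from $n$-sample averages, aggregation of the entries into an $\ell_2$/Frobenius tail, and the observation that the eigenvalue projection onto $\{A\succeq MI_d\}$ is non-expansive since $\nabla^2 f(\vct x)$ lies in that set. The main structural difference is the order of combination: the paper folds the bias into each entry first, notes that each squared entry error is then sub-exponential, and sums the $d$ (resp.\ $O(d^2)$) squared errors with an explicit sub-exponential-sum lemma (Proposition \ref{prop:pa_ele2}); the factors $3$, $12$, $144$ in the theorem come from that lemma and from the conversion of a shifted tail $\exp(-(\max\{K-b,0\})^2/V)$ into the additive-denominator form, not from the squared bias itself (which is only $d\rho^2r^4/36$, resp.\ $2d^2\rho^2r^2/9$). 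Your vector-level triangle-inequality combination can be made to work, but the claim that the two denominator terms ``correspond exactly'' to bias and variance glosses over this conversion step, which is where the constants must actually be checked.

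The one concrete gap is in your HessianEst concentration step. The $d$ diagonal entries $H_{kk}=(y_{+,k}+y_{-,k}-2y)/r^2$ all share the same center average $y$, so the entries of the raw estimate $\hat H_0$ are \emph{not} independent, and ``summing over $d^2$ entries and applying the same norm-concentration lemma'' is not legitimate as stated: both a Hanson--Wright/$\ell_2$-Chernoff bound for independent coordinates and a sub-exponential-sum lemma require independence across entries, which fails on the diagonal. The paper resolves this by conditioning on $y$ (given $y$ the entries are independent, so the concentration lemma applies to $\hat H_0-\mathbb{E}[\hat H_0\mid y]$) and by bounding the residual $\mathbb{E}[\hat H_0]-\mathbb{E}[\hat H_0\mid y]=2(y-f(\vct x))I_d/r^2$ separately --- note its Frobenius norm carries an extra $\sqrt d$ --- then combining the pieces with the triangle inequality and a union bound; this is also why the Hessian denominator ends up with the larger constant $144d^2/(nr^4)$. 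Your proof needs either this conditioning argument or an application of Hanson--Wright to the underlying independent noise averages through the explicit linear map taking them to the matrix entries; without one of these, the aggregation step fails for the diagonal block.
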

We postpone the proof of the above theorems to Section \ref{sec:ptgdest} and Appendix \ref{sec:pthest} and proceed to describe how these results are used in the algorithm. 


For brevity, 
let $\epsilon\triangleq \frac{\rho^{\frac{2}{3}}}{M}d T^{-\frac{2}{3}}$ be the minimax regret we aim to achieve, and let $\vct x_{\textup{B}}$ denote the estimator $\vct x$ stored at the end of the first stage. The role of the final stage is to ensure that if $f(\vct x_{\textup{B}})-f(\vct x^*)$ is sufficiently small with high probability, 
the final result of the proposed algorithm achieves the stated simple regret guarantees.
Formally, we require that 
\begin{align}
\lim_{T\rightarrow \infty}\sup_{f\in\mathcal{F}(\rho, M, R)}\mathbb{E}\left[\left(f(\vct x_{\textup{B}})-f(\vct x^*)\right)^{\frac{3}{2}}\right]/\epsilon =0.    
   \label{eq:fststbd}
\end{align}
Note that the above condition implies that $f(\vct x_{\textup{B}})-f(\vct x^*)$ concentrates below $o\left(\epsilon^{\frac{2}{3}}\right)$, which is weaker than the $O(\epsilon)$ rates stated in our main theorems.\footnote{With more sophisticated analysis, this concentration requirement can be improved to only requiring a similar upper bound of $o\left(\epsilon^{\frac{1}{2}}\right)$. However, we choose equation \eqref{eq:fststbd} to provide a simpler proof, as it does not affect the asymptotic sample complexity. } The bottleneck of the overall algorithm is 
on the final stage, and one can achieve equation \eqref{eq:fststbd} using any suboptimal algorithm with an expected simple regret of $o(T^{-\frac{4}{9}})$. For example, one can run the suboptimal algorithm twice, estimate their achieved function values by averaging over $o(T)$ samples, 
and then choose the outcome with the smaller estimated function value as $\vct x_{\textup{B}}$. 
In the rest of this section, we prove Theorem \ref{thm1} assuming the correctness of equation \eqref{eq:fststbd}. A self-contained proof for equation \eqref{eq:fststbd} is provided in Appendix \ref{app:gup}. 



\begin{algorithm}
\caption{An Example Algorithm to Achieve the Minimax Rates}\label{alg:1gd3_opt}

\begin{algorithmic}
\State{\bf Input} {$T,\rho, M$}
\State Let $\vct{x}=\boldsymbol{0}$
\State The First Stage:


\For{$k\leftarrow  1 $ to $\lfloor T^{0.1} \rfloor$}

\State Let  $n_{\textup{m}}=\lfloor \frac{{T}^{0.9}}{10d} \rfloor$, $n_{\textup{H}}=\lfloor \frac{{T}^{0.9}}{10d^2} \rfloor$,  
 $r_{\textup{m}}= \left(\frac{8}{n_{\textup{m}}\rho^2}\right)^{\frac{1}{6}}$, $r_{\textup{H}}= \left(\frac{144}{n_{\textup{H}}\rho^2}\right)^{\frac{1}{6}}$
\State Let $\hat{\vct m}=$ BootstrappingEst$(\boldsymbol{x},r_{\textup{m}},n_{\textup{m}})$, $\hat{H}=$ HessianEst$(\boldsymbol{x},r_{\textup{H}},n_{\textup{H}})$ 
\State 
Let $H_{{m}^*}$ denote the matrix with the same eigenvectors of $\hat{H}$ but each eigenvalue $\lambda$ replaced by $\max\{\lambda, {m}^*\}$, choose ${m}^*$ to be the smallest value such that $||H_{{m}^*}^{-1}\hat{\vct m}||_2\leq \frac{M}{\rho}$. 
\State 
Let 
 $\vct x=\vct x-H_{{m}^*}^{-1}\hat{\vct m}$
\EndFor

\State The Final Stage:
\State Let 
$n_{\textup{g}}=\lfloor \frac{{T}}{10} \rfloor$, $n_{\textup{H}}=\lfloor \frac{{T}}{10d^2} \rfloor$,  
 $r_{\textup{g}}= \left(\frac{d^3}{n_{\textup{g}}\rho^2}\right)^{\frac{1}{6}}$, $r_{\textup{H}}= \left(\frac{144}{n_{\textup{H}}\rho^2}\right)^{\frac{1}{6}}$
 \State Let $\hat{H}=$ HessianEst$(\boldsymbol{x},r_{\textup{H}},n_{\textup{H}})$, $Z_{H}$ be any symmetric matrix such that $Z_{H}^2=\hat{H}^{-1}$, and $\lambda_{Z_{H}}$ be the largest eigenvalue of $Z_{H}$
\State Let $Z=r_{\textup{g}}Z_H/\lambda_{Z_{H}}$, $\hat{\vct g}=$ GradientEst$(\vct{x},Z,n_{\textup{g}})$, $\vct r=-\hat{H}^{-1}Z^{-1}\hat{\vct g}$



\State Project $\vct r$ to the $L_2$ ball of radius $\frac{M}{\rho}$,  
  i.e., $\vct r= \vct r \cdot \min\{1, \frac{M}{\rho||\vct r||_2}\}$
\State {\bf Return} $\vct x=\vct x+\vct r$ 

\end{algorithmic}
\end{algorithm}



Before proceeding with the proof, we provide a high-level description of the algorithm in the final stage. 
At the beginning, we perform a Hessian estimation near $\vct x_{\textup{B}}$ using the HessianEst subroutine with $O(T)$ samples. From Theorem \ref{thm:hest},
our choice of parameters results in an expected estimation error of 
$o(1)$ for sufficiently large $T$. 


The algorithm proceeds 
to find a real matrix $Z_H$, 
which essentially serves as a linear transformation on the action domain such that the Hessian of the transformed function is approximately the identity matrix. Note that the projection step in the HessianEst function ensures the eigenvalues of the estimator are no less than $M$.  There is always a valid solution of $Z_H$. 

Then, we estimate the gradient at $\vct x_{\textup{B}}$ using the GradientEst subroutine, which samples on a hyperellipsoid with a shape characterized by  $Z_H$. We chose the hyperellipsoid sampling in the final stage due to its superior performance in the small-gradient regime compared to coordinate-wise sampling. In contrast, the coordinate-wise estimator is used in the bootstrapping stage to eliminate the dependency of 
the local gradient on its bias-variance tradeoff, which is beneficial for the non-asymptotic analysis.
Particularly, we scale the hyperellipsoid with a carefully designed factor (see the definition of variable $r_{\textup{g}}$) to minimize the estimation error. Then, the remaining steps can be interpreted as a modified Newton step, which essentially approximates the global minimum point with a quadratic approximation.  

The analysis in our proof relies on the following proposition, which is proved in Appendix \ref{app:pp_ptech1}. 
\begin{proposition}\label{pp:ptech1}
For any given point $\vct x_\textup{B}$ and any function $f$ that satisfies strong convexity and Lipschitz Hessian, let $\tilde{\vct x}\triangleq \vct x_\textup{B}-(\nabla^2 f(\vct x_\textup{B}))^{-1}\nabla f(\vct x_\textup{B})$ and $\tilde{f}(\vct x)$ denote the quadratic approximation $\frac{1}{2} (\vct x- \tilde{\vct x})^{\intercal}\nabla^2 f(\vct x_\textup{B})(\vct x- \tilde{\vct x})$, we have the following inequality for all  $\vct x$ with $||\vct x-\vct x_{\textup{B}}||_2\leq \frac{M}{\rho}$.
\begin{align}
f(\vct x)- f^*\leq&\ 2\tilde{f}(\vct x) + \frac{12\rho (f(\vct x_{\textup{B}})- f^*)^{\frac{3}{2}}}{M^{\frac{3}{2}}}.\label{ineq:tech1eq6}  
\end{align}
Furthermore, if $\vct x$ is generated by the final stage of Algorithm \ref{alg:1gd3_opt} with any parameter values that satisfy $n_{\textup{g}}\geq d^3$, $n_{\textup{H}} \geq  \frac{64\rho^4 d^6}{M^6}$ and the first-stage output is set to $\vct x_{\textup{B}}$,    
then 
\begin{align}\label{ineq:tech1eq7}
\mathbb{E}\left[\tilde{f}(\vct x)\ |\ \vct x_{\textup{B}}\right]\leq 
  \left(\frac{14d^2\rho^{\frac{4}{3}}}{M^2n_{\textup{H}}^{\frac{1}{3}}}+\frac{52d}{n_{\textup{g}}}\right)(f(\vct x_{\textup{B}})-f(\vct x^*)) +\frac{82\rho}{M^{\frac{3}{2}}} (f(\vct x_{\textup{B}})-f(\vct x^*))^{\frac{3}{2}} +\frac{3d\rho^{\frac{2}{3}}}{Mn_{\textup{g}}^{\frac{2}{3}}}. 
\end{align}
\end{proposition}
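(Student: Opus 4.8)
The plan is to prove the two inequalities in Proposition~\ref{pp:ptech1} separately, the first being a deterministic consequence of strong convexity and the Lipschitz Hessian condition, and the second a stochastic estimate that tracks the error propagation through the modified Newton step in the final stage of Algorithm~\ref{alg:1gd3_opt}.

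For the first inequality \eqref{ineq:tech1eq6}, I would start from the Taylor expansion of $f$ around $\vct x_{\textup{B}}$ with integral remainder, using assumption (A1) to bound the cubic remainder term by $\tfrac{\rho}{6}\|\vct x-\vct x_{\textup{B}}\|_2^3$. Writing $q(\vct x) \triangleq f(\vct x_{\textup{B}}) + \nabla f(\vct x_{\textup{B}})^{\intercal}(\vct x-\vct x_{\textup{B}}) + \tfrac12(\vct x-\vct x_{\textup{B}})^{\intercal}\nabla^2 f(\vct x_{\textup{B}})(\vct x-\vct x_{\textup{B}})$ for the exact quadratic model, one checks by completing the square that $q(\vct x) = \tilde f(\vct x) + \big(f(\vct x_{\textup{B}}) - \tfrac12\nabla f(\vct x_{\textup{B}})^{\intercal}(\nabla^2 f(\vct x_{\textup{B}}))^{-1}\nabla f(\vct x_{\textup{B}})\big)$, and the additive constant is exactly $q(\tilde{\vct x}) \le f(\tilde{\vct x})$, which is at most $f^*$ plus a $O(\rho\|\tilde{\vct x}-\vct x_{\textup{B}}\|_2^3)$ error by the same Taylor bound in the reverse direction. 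Combining, $f(\vct x) - f^* \le \tilde f(\vct x) + (\text{cubic errors})$; then I bound the cubic errors in terms of $f(\vct x_{\textup{B}}) - f^*$ by using strong convexity (A2) to convert $\|\nabla f(\vct x_{\textup{B}})\|_2$ and $\|\tilde{\vct x}-\vct x_{\textup{B}}\|_2$ into $O(\sqrt{(f(\vct x_{\textup{B}})-f^*)/M}\,)$, and $\|\vct x-\vct x_{\textup{B}}\|_2 \le M/\rho$ to control the $\|\vct x-\vct x_{\textup{B}}\|_2^3$ term, absorbing a factor of $\tilde f(\vct x)$ into the leading $2\tilde f(\vct x)$. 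The constant $12$ should come out with room to spare.

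For the second inequality \eqref{ineq:tech1eq7}, I would expand $\tilde f(\vct x) = \tfrac12(\vct x - \tilde{\vct x})^{\intercal}\nabla^2 f(\vct x_{\textup{B}})(\vct x - \tilde{\vct x})$ and note that $\vct x - \tilde{\vct x} = \vct x_{\textup{B}} + \vct r - \tilde{\vct x} = \vct r + (\nabla^2 f(\vct x_{\textup{B}}))^{-1}\nabla f(\vct x_{\textup{B}})$. The vector $\vct r$ produced by the algorithm is (before projection) $-\hat H^{-1} Z^{-1}\hat{\vct g}$ where, by Theorem~\ref{thm:gdest}, $\hat{\vct g}$ is an estimate of $Z\nabla f(\vct x_{\textup{B}})$ with controlled bias and variance; so $-\hat H^{-1}Z^{-1}\hat{\vct g} \approx -\hat H^{-1}\nabla f(\vct x_{\textup{B}})$, which approximates the ideal Newton step $-(\nabla^2 f(\vct x_{\textup{B}}))^{-1}\nabla f(\vct x_{\textup{B}})$. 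Thus $\vct x - \tilde{\vct x}$ decomposes into three pieces: (i) a Hessian-estimation error $(\hat H^{-1} - (\nabla^2 f(\vct x_{\textup{B}}))^{-1})\nabla f(\vct x_{\textup{B}})$, controlled in expectation via \eqref{eq432} and the parameter choices $r_{\textup{H}}, n_{\textup{H}}$ (which make $\mathbb{E}\|\hat H - \nabla^2 f(\vct x_{\textup{B}})\|_{\textup{F}}^2 = O(d^2\rho^{4/3} n_{\textup{H}}^{-1/3})$), times $\|\nabla f(\vct x_{\textup{B}})\|_2^2 = O(M(f(\vct x_{\textup{B}})-f^*))$ — this yields the $\tfrac{14 d^2\rho^{4/3}}{M^2 n_{\textup{H}}^{1/3}}(f(\vct x_{\textup{B}})-f^*)$ term; (ii) the gradient-estimator bias, which by \eqref{tteq:1} with $\lambda_Z = r_{\textup{g}}$ is $O(r_{\textup{g}}^3\rho\sqrt d/(d))$ and contributes, after multiplication by $\hat H^{-1}Z^{-1}$ (whose operator norm is $O(\lambda_{Z_H}/(M r_{\textup{g}}))$) and squaring, a deterministic bias-squared term of order $\tfrac{d\rho^{2/3}}{M n_{\textup{g}}^{2/3}}$; (iii) the gradient-estimator variance from \eqref{tteq:2}, whose leading $\tfrac{2d}{n_{\textup{g}}}\|Z\nabla f(\vct x_{\textup{B}})\|_2^2$ term, after the same $\hat H^{-1}Z^{-1}$ multiplication, produces $\tfrac{52 d}{n_{\textup{g}}}(f(\vct x_{\textup{B}})-f^*)$, while the $\tfrac{d^2}{18 n_{\textup{g}}}(\rho r_{\textup{g}}^3)^2$ and $\tfrac{d^2}{2n_{\textup{g}}}$ terms feed into the $\tfrac{3d\rho^{2/3}}{M n_{\textup{g}}^{2/3}}$ bucket. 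Using $\tilde f(\vct x) \le \tfrac12\|\nabla^2 f(\vct x_{\textup{B}})\|_{\text{op}}\|\vct x-\tilde{\vct x}\|_2^2$ together with the cross terms (bounded by AM-GM) and the fact that strong convexity forces the projection radius $M/\rho$ not to bind on the dominant contributions, taking conditional expectation and collecting the three buckets gives \eqref{ineq:tech1eq7}; the $\tfrac{82\rho}{M^{3/2}}(f(\vct x_{\textup{B}})-f^*)^{3/2}$ term arises from cross terms between the $O(\sqrt{f-f^*})$-scale errors and the $O(M/\rho)$-scale contributions, and from replacing $\nabla^2 f$ evaluated along the Taylor path by $\nabla^2 f(\vct x_{\textup{B}})$.

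The main obstacle will be step (iii)/(ii) bookkeeping: carefully propagating the operator-norm bound on $\hat H^{-1}Z^{-1} = \hat H^{-1}(\lambda_{Z_H}/r_{\textup{g}})Z_H^{-1}$ — recalling $Z_H^2 = \hat H^{-1}$ so $Z_H^{-1} = \hat H^{1/2}$ and hence $\hat H^{-1}Z^{-1} = (\lambda_{Z_H}/r_{\textup{g}})\hat H^{-1/2}$ with $\|\hat H^{-1/2}\|_{\text{op}} \le M^{-1/2}$ and $\lambda_{Z_H} = \|\hat H^{-1}\|_{\text{op}}^{1/2} \le M^{-1/2}$, so $\|\hat H^{-1}Z^{-1}\|_{\text{op}} \le 1/(M r_{\textup{g}})$ — and then substituting $r_{\textup{g}}^6 = d^3/(n_{\textup{g}}\rho^2)$ to verify that every error bucket collapses to exactly the claimed form with the stated numerical constants. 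A secondary subtlety is handling the projection of $\vct r$ onto the ball of radius $M/\rho$: I would argue the projection is nonexpansive toward $-(\nabla^2 f)^{-1}\nabla f$ (which itself has norm $\le \sqrt{(f(\vct x_{\textup{B}})-f^*)}\cdot\sqrt{2}/\sqrt M$, hence well inside the ball once $f(\vct x_{\textup{B}})-f^*$ is small) so it can only decrease $\|\vct x - \tilde{\vct x}\|_2$ up to a harmless additive term already absorbed into the $(f(\vct x_{\textup{B}})-f^*)^{3/2}$ bucket. Conditioning throughout on $\vct x_{\textup{B}}$ (and on the independent fresh samples used in the final stage) keeps the expectations clean.
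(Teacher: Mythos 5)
Your overall architecture for \eqref{ineq:tech1eq7} (decompose $\vct x - \tilde{\vct x}$ into a Hessian-estimation error, the gradient-estimator bias, and the gradient-estimator variance; push Theorems \ref{thm:gdest} and \ref{thm:hest} through the operator-norm bookkeeping; check constants against $r_{\textup{g}}^6 = d^3/(n_{\textup{g}}\rho^2)$) matches the paper's. But there is a genuine gap exactly where you convert gradient-scale quantities into function-value gaps: you assert that strong convexity gives $\|\nabla f(\vct x_{\textup{B}})\|_2^2 = O\left(M(f(\vct x_{\textup{B}})-f^*)\right)$ and $\|\tilde{\vct x}-\vct x_{\textup{B}}\|_2 = O\left(\sqrt{(f(\vct x_{\textup{B}})-f^*)/M}\right)$. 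Strong convexity gives the \emph{reverse} inequality ($\|\nabla f\|_2^2 \ge 2M(f-f^*)$), and the forward direction needs an upper bound on the Hessian, which this function class deliberately lacks (only the lower bound $M$ and $\rho$-Lipschitzness); already for $f(x)=\frac{L}{2}x^2$ with $L\gg M$ your claimed bound on $\|\nabla f\|_2^2$ fails. The paper never touches $\|\nabla f(\vct x_{\textup{B}})\|_2$: it factors every error term through the Newton decrement $\|H^{-1/2}\nabla f(\vct x_{\textup{B}})\|_2^2 = 2\tilde f(\vct x_{\textup{B}})$ (e.g., $\|H^{1/2}(H^{-1}-\hat H^{-1})\nabla f\|_2 \le \|H^{1/2}(H^{-1}-\hat H^{-1})H^{1/2}\|\cdot\|H^{-1/2}\nabla f\|_2$), and then proves separately, via a case split on $\|\tilde{\vct x}-\vct x_{\textup{B}}\|_2$ against $M/\rho$ and the inequality $f(\tilde{\vct x})\ge f^*$, that $\tilde f(\vct x_{\textup{B}}) \le \max\left\{2(f(\vct x_{\textup{B}})-f^*),\ \frac{8\rho}{3M^{3/2}}(f(\vct x_{\textup{B}})-f^*)^{3/2}\right\}$. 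That two-regime bound---not cross terms or Taylor-path Hessian replacement, as you suggest---is the source of the $(f(\vct x_{\textup{B}})-f^*)^{3/2}$ terms in both displays, and without it your three buckets do not close to the stated form.

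Two related soft spots. For \eqref{ineq:tech1eq6}, absorbing $\frac{\rho}{6}\|\vct x-\vct x_{\textup{B}}\|_2^3$ into $2\tilde f(\vct x)$ via $\|\vct x-\vct x_{\textup{B}}\|_2\le M/\rho$ only works when $\|\vct x-\vct x_{\textup{B}}\|_2$ is comparable to $\|\vct x-\tilde{\vct x}\|_2$; when $\vct x$ is near $\tilde{\vct x}$ but $\tilde{\vct x}$ is far from $\vct x_{\textup{B}}$, you again need a bound on $\|\tilde{\vct x}-\vct x_{\textup{B}}\|_2$ in terms of $f(\vct x_{\textup{B}})-f^*$, which is the same nontrivial ingredient (the paper splits at $\|\vct x-\vct x_{\textup{B}}\|_2 \lessgtr \sqrt{3}\,\|\vct x-\tilde{\vct x}\|_2$ and handles the second case through $f(\tilde{\vct x})\ge f^*$ plus the triangle inequality). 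For the projection step, Euclidean nonexpansiveness does not control the $\nabla^2 f(\vct x_{\textup{B}})$-weighted quadratic $\tilde f$, and the fallback ``once $f(\vct x_{\textup{B}})-f^*$ is small'' is unavailable because the proposition must hold for every $\vct x_{\textup{B}}$; the paper instead uses convexity of $\tilde f$ along the segment from $\vct x_{\textup{B}}$ to the unprojected Newton point together with the threshold argument $\mathbbm{1}\left(\tilde f(\vct x_{\textup{B}}) \ge \frac{M^3}{8\rho^2}\right)\tilde f(\vct x_{\textup{B}}) \le \frac{8\rho}{M^{3/2}}(f(\vct x_{\textup{B}})-f^*)^{3/2}$, which again feeds the $3/2$-power bucket. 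Repairing your draft essentially requires importing these two ingredients, at which point it coincides with the paper's proof.
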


Now, we use Proposition \ref{pp:ptech1} to prove the achievability result. 
\begin{proof}[Proof of Theorem \ref{thm1} given inequality \eqref{eq:fststbd}.]
First, recall our construction ensures that 
$||\vct x_T-\vct x_{\textup{B}}||_2\leq \frac{M}{\rho}$. Inequality \eqref{ineq:tech1eq6} can always be applied and we have
\begin{align*}
\sR(T;\rho,M,R) \leq&\ \sup_{f\in\mathcal{F}(\rho, M, R)}\mathbb E\left[2\tilde{f}(\vct x) + \frac{12\rho (f(\vct x_{\textup{B}})- f^*)^{\frac{3}{2}}}{M^{\frac{3}{2}}}\right].  
\end{align*}
Then, when $T$ is sufficiently large, the conditions of  \eqref{ineq:tech1eq7} holds and we have
\begin{align*}
\sR(T;\rho,M,R) \leq&\ \sup_{f\in\mathcal{F}(\rho, M, R)}\mathbb E\left[ \left(\frac{28d^2\rho^{\frac{4}{3}}}{M^2n_{\textup{H}}^{\frac{1}{3}}}+\frac{104d}{n_{\textup{g}}}\right)(f(\vct x_{\textup{B}})-f(\vct x^*)) + \frac{176\rho (f(\vct x_{\textup{B}})- f^*)^{\frac{3}{2}}}{M^{\frac{3}{2}}}\right]\\&+\frac{6d\rho^{\frac{2}{3}}}{Mn_{\textup{g}}^{\frac{2}{3}}}.
\end{align*}
Note that inequality \eqref{eq:fststbd} implies that $\mathbb{E}\left[f(\vct x_{\textup{B}})-f(\vct x^*)\right]=o(\epsilon^{\frac{2}{3}})=o(T^{-\frac{4}{9}})$ and we have that $n_{\textup{H}}^{-\frac{1}{3}}+n_{\textup{g}}^{-1}=o(T^{-\frac{2}{9}})$. The RHS of the above inequality is dominated by the last term. Hence,
\begin{align*}
\limsup_{T\rightarrow \infty}\sR(T;\rho,M,R)\cdot T^{\frac{2}{3}}\leq \limsup_{T\rightarrow \infty}\frac{6d\rho^{\frac{2}{3}}T^{\frac{2}{3}}}{Mn_{\textup{g}}^{\frac{2}{3}}} =O\left(\frac{\rho^{\frac{2}{3}}}{M}d \right).
\end{align*}
\end{proof}


\subsection{Proof of Theorem \ref{thm:gdest}}\label{sec:ptgdest}

To prove inequality \eqref{tteq:1}, we investigate the following function
$$\boldsymbol{G}(r;\boldsymbol{x})\triangleq \mathbb{E}_{\boldsymbol{u}\sim\textup{Unif}(S^{d-1})}\left[\frac{d}{2r}(f(\boldsymbol{x}+r\boldsymbol{u})-f(\boldsymbol{x}-r\boldsymbol{u}))\boldsymbol{u}\right],$$
where 
$\textup{Unif}(S^{d-1})$ denotes the uniform distribution on  $S^{d-1}$. Recall that in our algorithm we have $\mathbb{E}[\hat{\vct g}]=r\boldsymbol{G}(r;\boldsymbol{x})$ if $Z=rI_d$ for some $r\in\mathbb (0,+\infty)$, and by differentiability we have $\nabla f (\boldsymbol{x})=\lim_{z\rightarrow 0^+} \boldsymbol{G}(z;\boldsymbol{x})$. Under this condition, we can bound $||\mathbb{E}[\hat{\vct g}]-r\nabla f (\boldsymbol{x}) ||_2$  by integration, i.e.,
 \begin{align}
     ||\mathbb{E}[\hat{\vct g}]-r\nabla f (\boldsymbol{x}) ||_2=&~ r\left|\left|\boldsymbol{G}(r;\boldsymbol{x})- \lim_{z\rightarrow 0^+} \boldsymbol{G}(z;\boldsymbol{x})\right|\right|_2 
     \leq ~ r\int_{0^+}^{r} \left|\left|\frac{d}{dz}\boldsymbol{G}(z;\boldsymbol{x})\right|\right|_2 dz. \label{ineq:upper_lag} 
\end{align}

Note that $\boldsymbol{G}(z;\boldsymbol{x})$ can be written into the following equivalent form. 
$$\boldsymbol{G}(z;\boldsymbol{x})=\frac{\int_{S^{d-1}} \frac{d}{2z}(f(\boldsymbol{x}+z\boldsymbol{u})-f(\boldsymbol{x}-z\boldsymbol{u})) {\bf dA}}{\int_{S^{d-1}} ||{\bf dA}||_2} ,$$
where the integration is with respect to $\boldsymbol{u}$ over the surface $S^{d-1}$, and ${\bf dA}$ is the vector surface element, i.e.,  
with the magnitude being the infinitesimally small surface area and the direction
perpendicular to the surface (pointing outward). The differential of $\boldsymbol{G}(z;\boldsymbol{x})$ over $z$ can be written as
\begin{align*}
    \frac{d}{dz}\boldsymbol{G}(z;\boldsymbol{x})
    =&~\frac{\int_{S^{d-1}} \frac{\partial}{\partial{z}}\left(\frac{d}{2z}(f(\boldsymbol{x}+z\boldsymbol{u})-f(\boldsymbol{x}-z\boldsymbol{u}))\right) {\bf dA}}{\int_{S^{d-1}} ||{\bf dA}||_2}\\
    = &~
    \frac{\int_{S^{d-1}} -\frac{d}{2z^2}\left(f(\boldsymbol{x}+z\boldsymbol{u})-f(\boldsymbol{x}-z\boldsymbol{u})\right){\bf dA}}{\int_{S^{d-1}} ||{\bf dA}||_2}\\&+ \frac{\int_{S^{d-1}} \frac{d}{2z}\boldsymbol{u}\cdot (\nabla f(\boldsymbol{x}+z\boldsymbol{u})+\nabla f(\boldsymbol{x}-z\boldsymbol{u})) {\bf dA}}{\int_{S^{d-1}} ||{\bf dA}||_2}.
    \end{align*}
   The gist of this proof is to note that for any $\vct u\in S$ we have $ \vct u$ and $\bf dA$ are parallel (i.e., $ \vct u$ is parallel to the normal vector of the hypersphere at the same point),  so the second term in the integral above on the numerator can be written as
   \begin{align*}
       \int_{S^{d-1}}\frac{d}{2z}\boldsymbol{u} (\nabla f(\boldsymbol{x}+z\boldsymbol{u})+\nabla f(\boldsymbol{x}-z\boldsymbol{u}))\cdot {\bf dA}.
   \end{align*}
   Hence, by divergence theorem, we have 
    \begin{align}\label{eq:upper_4}
         \frac{d}{dz}\boldsymbol{G}(z;\boldsymbol{x})  =&~\frac{1}{\int_{S^{d-1}} ||{\bf dA}||_2} \cdot 
         \bigg(\int_{B^d} \nabla_{\vct{u}}\cdot\Big(-\frac{d}{2z^2}I_d\left(f(\boldsymbol{x}+z\boldsymbol{u})-f(\boldsymbol{x}-z\boldsymbol{u})\right)\notag\\
        &~ +\left(\nabla f(\boldsymbol{x}+z\boldsymbol{u})+\nabla f(\boldsymbol{x}-z\boldsymbol{u})\right)\frac{d}{2z}\boldsymbol{u}\Big)  {\bf dV}\bigg)\nonumber\\
    = &~ \frac{d}{2}\cdot \frac{\int_{B^d}\boldsymbol{u} \, \textup{Tr}( \nabla^2 f(\boldsymbol{x}+z\boldsymbol{u}) -\nabla^2 f(\boldsymbol{x}-z\boldsymbol{u})){\bf dV}}{\int_{S^{d-1}} ||{\bf dA}||_2},
\end{align}
where 
$B^d$ denotes the standard hyperball. 

Now consider any unit vector $\vct e$. Let ${\boldsymbol{u_e}}$ denote the reflection of $\vct u$ with respect to the hyperplane orthogonal to $\vct e$, i.e., $\boldsymbol{u_e}\triangleq \vct u-2 (\vct u\cdot \vct e)\vct e$. Because the hyperball $B$ is invariant under the reflection $\boldsymbol{u}\rightarrow \boldsymbol{u_e}$, equation \eqref{eq:upper_4} can also be written as 
    \begin{align}\label{eq:upper_5x}
        \frac{d}{dz}\boldsymbol{G}(z;\boldsymbol{x}) 
        = &~ \frac{d}{2}\cdot\frac{\int_{B^d} \boldsymbol{u_e}\, \textup{Tr}( \nabla^2 f(\boldsymbol{x}+z\boldsymbol{u_e}) -\nabla^2 f(\boldsymbol{x}-z\boldsymbol{u_e})){\bf dV}}{\int_{S^{d-1}} ||{\bf dA}||_2}.
\end{align}
Hence, by averaging equation \eqref{eq:upper_4} and \eqref{eq:upper_5x}, we have
   \begin{align}
        \frac{d}{dz}\boldsymbol{G}(z;\boldsymbol{x}) \cdot \vct e 
        = &~\frac{d}{4}\frac{\int_{B^d}\boldsymbol{u}\, \textup{Tr}( \nabla^2 f(\boldsymbol{x}+z\boldsymbol{u}) -\nabla^2 f(\boldsymbol{x}-z\boldsymbol{u})){\bf dV}}{\int_{S^{d-1}} ||{\bf dA}||_2}\cdot \vct e\nonumber\\
        &~ +\frac{d}{4}\frac{\int_{B^d}\boldsymbol{u_e}\, \textup{Tr}( \nabla^2 f(\boldsymbol{x}+z\boldsymbol{u_e}) -\nabla^2 f(\boldsymbol{x}-z\boldsymbol{u_e})){\bf dV}}{\int_{S^{d-1}} ||{\bf dA}||_2}\cdot \vct e\nonumber\\
        =&~\frac{d}{4}\frac{ \int_{B^d}\boldsymbol{u}\cdot \vct e\, \textup{Tr}( \nabla^2 f(\boldsymbol{x}+z\boldsymbol{u}) -\nabla^2 f(\boldsymbol{x}+z\boldsymbol{u_e})){\bf dV}}{\int_{S^{d-1}} ||{\bf dA}||_2}\notag\\
        &~ + \frac{d}{4}\frac{\int_{B^d}-\boldsymbol{u}\cdot \vct e \, \textup{Tr}(\nabla^2 f(\boldsymbol{x}-z\boldsymbol{u})  -\nabla^2 f(\boldsymbol{x}-z\boldsymbol{u_e})){\bf dV}}{\int_{S^{d-1}} ||{\bf dA}||_2}.\label{eq:upper_5}
\end{align}
By the Lipschitz Hessian condition and Cauchy's inequality, the difference between the differential terms above can be bounded as follows.
\begin{align}
 \left| \textup{Tr} \left(\nabla^2 f(\boldsymbol{x}\pm z\boldsymbol{u}) -\nabla^2 f(\boldsymbol{x}\pm z\boldsymbol{u_e})\right)\right| 
\leq&~\sqrt{d} ||\nabla^2 f(\boldsymbol{x}\pm z\boldsymbol{u})- \nabla^2 f(\boldsymbol{x}\pm z\boldsymbol{u_e})||_{\textup{F}}\nonumber\\
\leq &~ \rho \sqrt{d} ||z\boldsymbol{u}-z\boldsymbol{u_e}||_2 
= 2z\rho \sqrt{d}  |\vct u\cdot \vct e|.\label{ineq:upper_6}
\end{align}
Consequently,  
 \begin{align*}
      \left|\frac{d}{dz}\boldsymbol{G}(z;\boldsymbol{x})\cdot \vct e\right|
        &\leq \frac{ z\rho d\sqrt{d}\int_{B^d} \left(\boldsymbol{u}\cdot \vct e\right)^2 {\bf dV}}{\int_{S^{d-1}} ||{\bf dA}||_2}
        =\frac{z\rho \sqrt{d}}{d+2}.
\end{align*}
Note that $\vct e$ can be any unit vector. We have essentially bounded the $\ell_2$ norm of  $\frac{d}{dz}\boldsymbol{G}(z;\boldsymbol{x})$, i.e.,
\begin{align*}
    \left|\left|\frac{d}{dz}\boldsymbol{G}(z;\boldsymbol{x})\right|\right|_2\leq \frac{z\rho \sqrt{d}}{d+2}.
\end{align*} 
As mentioned earlier, when $Z=rI_d$ inequality \eqref{tteq:1} is obtained by applying this gradient-norm bound to inequality \eqref{ineq:upper_lag} .

For general input matrix $Z$, we can view GradientEst as a subroutine that operates on the same function $f$ but with a linear transformation applied to the input domain. Formally, let $f'(\vct y)\triangleq f(\vct x + \frac{Z}{\lambda_Z} (\vct y-\vct x))$. We have that $f'$ satisfies the Lipschitz Hessian condition with parameter $\rho$ as well. Therefore, inequality \eqref{tteq:1} can be obtained following the same analysis by replacing $f$ with $f'$ and $Z$ with $\lambda_Z I_d$. 

Now we present the proof for inequality \eqref{tteq:2}. Formally, let $w_+$, $w_-$ be two independent samples of additive noises. Then the trace of covariance matrix of $\hat{\vct g}$ can upper bounded using the second moments of single measurements.
\begin{align}\label{eq:thm3cov1}
\textup{Tr}\left(\textup{Cov}[\hat{\boldsymbol{g}}]\right)
      \leq &~ \frac{1}{n}\mathbb{E}_{\boldsymbol{u}\sim\textup{Unif}(S^{d-1}), w_+,w_-}\Bigg[ \left(\frac{d}{2}\right)^2\big(f(\boldsymbol{x}+Z\boldsymbol{u}) 
       -f(\boldsymbol{x}-Z\boldsymbol{u}) +w_--w_-\big)^2\Bigg]\nonumber\\
      = &~\frac{d^2}{4n}\mathbb{E}_{\boldsymbol{u}\sim\textup{Unif}(S^{d-1})}\left[\left(f(\boldsymbol{x}+Z\boldsymbol{u})-f(\boldsymbol{x}-Z\boldsymbol{u}) \right)^2+2\right].
\end{align}
The identity above uses the fact that additive noises are unbiased and have bounded variances.

Note that from the Lipschitz Hessian condition, we have that 
$$|f(\boldsymbol{x}\pm Z\boldsymbol{u})-f_2(\boldsymbol{x}\pm Z\boldsymbol{u})|\leq \frac{1}{6}\rho ||Z\boldsymbol{u}||_2^3  \leq \frac{1}{6}\rho \lambda_Z^3,$$
where $f_2$ is the Taylor polynomial of $f$ expanded at $\boldsymbol{x}$ up to the quadratic terms. Consequently, inequality \eqref{eq:thm3cov1} implies
\begin{align}
 \textup{Tr}\left(\textup{Cov}[\hat{\boldsymbol{g}}]\right)
      \leq &~ \frac{d^2}{4n}\mathbb{E}\left[\left(|f_2(\boldsymbol{x}+Z\boldsymbol{u})-f_2(\boldsymbol{x}-Z\boldsymbol{u})|+\frac{1}{3}\rho \lambda_Z^3 \right)^2+2\right]\nonumber\\
      = &~\frac{d^2}{4n}\mathbb{E}\left[\left(|2Z\boldsymbol{u}\cdot\nabla f(\boldsymbol{x}) |+\frac{1}{3}\rho \lambda_Z^3 \right)^2+2\right]\nonumber\\
      \leq &~ \frac{d^2}{4n}\mathbb{E}\left[2\cdot |2Z\boldsymbol{u}\cdot\nabla f(\boldsymbol{x})|^2+2\left(\frac{1}{3}\rho \lambda_Z^3 \right)^2+2\right]\nonumber\\
      = &~ \frac{2d}{n}||Z\nabla f(\boldsymbol{x})||_2^2 +\frac{d^2}{18n}\left(\rho \lambda_Z^3 \right)^2+\frac{d^2}{2n} \nonumber
\end{align}
where the expectations are taken of $\boldsymbol{u}\sim\textup{Unif}(S^{d-1})$, and the last equality is due to the well-known fact that $\mathbb{E}\left[\vct u \vct u^{\intercal}\right]=\frac{1}{d}I_d$.

\ifdefined\edit

\section{Proof Ideas for Theorem \ref{thm2}} 

To illustrate the proof idea for the lower bound, we first consider the 1D case. The proof for general $d$ can be found in Appendix \ref{app:gend}. The gist of our proof is to construct a pair of hard-instance functions that need to 
sufficiently distant from each other to avoid trivial optimizers with low simple regret. We also require them to be sufficiently close to each other so that they are indistinguishable without sufficiently many samples. These requirements are captured quantitatively in the following result, which is proved using an analysis of KL divergence. Here we assume their correctness and focus on the construction. 

\begin{definition}\label{def:sampling_error}
For any (Borel measurable) function class $\mathcal{F}_{\textup{H}}$ and any distribution $p$ defined on $\mathcal{F}_{\textup{H}}$, we define the \emph{uniform sampling error} to be
$P_{{\epsilon}}\triangleq\inf_{\boldsymbol{x}}  \mathbb{P}_{f\sim p}[f(\boldsymbol{x})-\inf f\geq \epsilon].$
We also define the \emph{maximum local variance} to be 
$V\triangleq \sup_{\boldsymbol{x}}  \textup{Var}_{f\sim p}[f(x)].$
\end{definition}

\begin{lemma}[Restatement of Proposition 7 in \cite{theotherpaper}]\label{p:lower_key}
For any sampling algorithm to achieve an expected simple regret  of $\epsilon>0$ over a function class $\mathcal{F}_{\textup{H}}$, if $P_{2\epsilon/c}\geq c$ for some universal constant $c\in(0,1)$,  and  
 the observation noises are standard Gaussian, then the required sample complexity to achieve a minimax regret of $\epsilon$ is at least $\Omega(1/V)$. 
\end{lemma}

We construct our hard instances using the following function 
\begin{align*} g(x)= \begin{cases} \frac{1}{2}\left(\sin\left(\frac{1}{2}x\right)+1\right) &~ \textup{ if } x\in \left(-{\pi}, {3\pi} \right] \\  
-\cos x-1&~ \textup{ if } x\in\left(-{3\pi},-\pi\right]\\ 0 &~ \textup{otherwise}.
\end{cases}
\end{align*}

Some key properties of $g(x)$ to be used are that its  differential $g'(x)$ is $1$-Lipschitz, and we have $|g'(x)|\leq 1$ for all $x$. 
Our hard instances consist of two functions. 
We define 
\begin{align*} f_1(x)=&~ M x^2+y_0 \int_{-{\pi}}^{x/x_0} g(z)dz, \ \ \ \ \ \ \ \ \ \ \ \ \ \ 
f_2(x)=  M x^2 + y_0 \int_{-{\pi}}^{-x/x_0} g(z)dz,
\end{align*}
where $y_0$, $x_0$ are normalization factors given by
$y_0=\frac{1}{\pi\sqrt{T}}$, 
$x_0=\left(\frac{y_0}{\rho}\right)^{\frac{1}{3}}.$
The normalization factors are chosen to satisfy the Lipschitz Hessian condition and a maximum local variance bound required for a KL-divergence based approach presented in Lemma~\ref{p:lower_key}.

Specifically, the choice of $x_0$ and the fact that $g'(x)$ is $1$-Lipschitz imply that both $f_1$ and $f_2$ satisfy the Lipschitz Hessian condition. Then because the absolute value of integration of $g(x)$ is bounded by $2\pi$, one can show that the maximum local variance for the function class $\{f_1,f_2\}$ is no greater than $\pi^2 y_0^2=\frac{1}{T}$ for the uniform prior distribution, which is to be used to show the sample complexity lower bound. 

We first check that both $f_1$ and $f_2$ are within our function class of interests. 
Note that both $f''_1(x)$ and $f''_2(x)$ belong to the interval $\left[2M-\frac{5}{4}\frac{y_0}{x_0^2}, 2M-\frac{3}{4}\frac{y_0}{x_0^2}\right]$. From the fact that  $\lim_{T\rightarrow \infty}\frac{y_0}{x_0^2}=0$ and $M>0$, we have both $f''_1(x)>M$ and $f''_2(x)>M$ for all $x$ for sufficiently large $T$. So the strong convexity requirement is satisfied. On the other hand, consider any global minimum point $x^*$ of either $f_1$ or $f_2$. Because of their differentiability, we must have $f_1'(x)=0$ or $f_2'(x)=0$. Note that for all $x$, we have $ |g(x)|\leq 2$, and
\begin{align*}
    f'_1(x)=&~2Mx+g\left(\frac{x}{x_0}\right)\frac{y_0}{x_0}  \ \ \ \ \ \ \ \ \ \ \ \ \ \  \ \ \ \ \ \ \ \ \ \ \  
    f'_2(x)=2Mx-g\left(\frac{x}{x_0}\right)\frac{y_0}{x_0}.
\end{align*}
We must have  
     $|x^*|\leq {\frac{y_0}{x_0}}/M$, 
where the RHS is $o(1)$ for large $T$. Combined with strong convexity, this inequality implies that assumption A3 holds for both functions.
To conclude, we have proved that $f_1, f_2\in\mathcal{F}(\rho,M,R)$ for sufficiently large $T$.

Now we let $\epsilon=\frac{1}{128M}\left(\frac{y_0}{x_0}\right)^2$ and $c=\frac{1}{2}$ to apply Lemma~\ref{p:lower_key}. Note that
\begin{align*}
    \liminf_{T\rightarrow \infty}  T^{\frac{2}{3}}\epsilon=\frac{\rho^{\frac{2}{3}}}{128\pi^{\frac{4}{3}} M}.
\end{align*}
The quantity $\epsilon$ exactly matches the lower bounds we aim to prove. Therefore, it remains to check that the required condition on uniform sampling errors in Definition~\ref{def:sampling_error} are satisfied. 

Formally, we need to show that 
$f_k(0)-\inf_{x} f_k(x)\geq 4\epsilon$ for $k\in\{1,2\}$, so that the uniform sampling error $P_{4\epsilon}$ under uniform distribution over $\mathcal{F}_{\textup{H}}$ is lower bounded by $\frac{1}{2}$ and Lemma~\ref{p:lower_key} can be applied. 
Without loss of generality, we focus on the case of $k=1$. Note that $f_1''(x)\leq 2M+\frac{y_0}{4x_0^2}$ for all $x\in[-\pi x_0,0]$. Therefore, we have 
\begin{align*}
    f_1(x)- f_1(0)\leq &~ f'_1(0)x +\frac{1}{2}x^2\sup_{z\in [-\pi x_0,0]} f_1''(z)
    \leq  \frac{y_0}{2x_0}x + \frac{1}{2}x^2 \left( 2M+\frac{y_0}{4x_0^2} \right)
\end{align*}
for $x\in[-\pi x_0,0]$, and $ \lim_{T\rightarrow\infty}x_0 =0 $. Consider any sufficiently large $T$ such that $\frac{y_0}{4x_0^2}\leq 2M$, we can choose $x=-\frac{y_0}{2x_0}\frac{1}{2M+\frac{y_0}{4x_0^2}} $ for the above bound, which falls into the interval of $[-\pi x_0, 0]$. Then we have 
\begin{align*}
    \inf_{x} f_1(x)\leq &~ f_1\left(-\frac{y_0}{2x_0}\frac{1}{2M+\frac{y_0}{4x_0^2}}\right) 
    \leq  f_1(0) -\frac{1}{2}\left(\frac{y_0}{2x_0}\right)^2\frac{1}{2M+\frac{y_0}{4x_0^2}}
    \leq 
    f_1(0)-4\epsilon. 
\end{align*}

We use this inequality to lower bound the minimum sampling error. Note that $f_1$ is an increasing function for $x\geq 0$ and $ \inf_{x} f_1(x)= \inf_{x} f_2(x)$. We have $f_1(x)\geq \inf_{x} f_2(x)+4\epsilon$  for $x\geq 0$. Following the same arguments, we also have $f_2(x)\geq \inf_{x} f_1(x)+4\epsilon$  for $x\leq 0$. Recall the definition of uniform sampling error in Definition~\ref{def:sampling_error}. We have essentially proved that $P_{4\epsilon}\geq \frac{1}{2}$. According to earlier discussions, this implies that the minimax simple regret is lower bounded by $\epsilon=\Omega\left( \frac{\rho^{\frac{2}{3}}T^{-\frac{2}{3}}}{ M}\right)$.

\else

\fi

\section{Conclusion and Future Work}
In this work, we achieve the first minimax simple regret for bandit optimization of second-order smooth and strongly convex functions. We derived the matching upper and lower bounds and proposed an algorithm that integrates a bootstrapping stage with a mirror-descent stage. Our key technical innovations include a sharp characterization of the spherical-sampling gradient estimator under higher-order smoothness conditions and a novel iterative method for the bootstrapping stage that remains effective with unbounded Hessians.

While these advancements settle the fundamental problem of optimizing second-order smooth and strongly convex functions with zeroth-order feedback, the techniques and insights presented in this paper also pave the way for further research in this domain. One interesting follow-up direction is to generalize our analysis to the online setting for the average regret metric. Additionally, investigating the fundamental tradeoff between simple regret and average regret could yield valuable insights for task-specific algorithmic designs.

\appendix


\bibliography{references}

\begin{thebibliography}{29}
\providecommand{\natexlab}[1]{#1}
\providecommand{\url}[1]{\texttt{#1}}
\expandafter\ifx\csname urlstyle\endcsname\relax
  \providecommand{\doi}[1]{doi: #1}\else
  \providecommand{\doi}{doi: \begingroup \urlstyle{rm}\Url}\fi

\bibitem[Abernethy et~al.(2008)Abernethy, Hazan, and
  Rakhlin]{0fb2099140344c10bd6305e55866a76f}
Jacob Abernethy, Elad Hazan, and Alexander Rakhlin.
\newblock Competing in the dark: An efficient algorithm for bandit linear
  optimization.
\newblock pp.\  263--273, 2008.
\newblock 21st Annual Conference on Learning Theory, COLT 2008 ; Conference
  date: 09-07-2008 Through 12-07-2008.

\bibitem[Agarwal et~al.(2010)Agarwal, Dekel, and Xiao]{agarwal2010optimal}
Alekh Agarwal, Ofer Dekel, and Lin Xiao.
\newblock Optimal algorithms for online convex optimization with multi-point
  bandit feedback.
\newblock In \emph{Colt}, pp.\  28--40. Citeseer, 2010.

\bibitem[Agarwal et~al.(2013)Agarwal, Foster, Hsu, Kakade, and
  Rakhlin]{agarwal2013stochastic}
Alekh Agarwal, Dean~P. Foster, Daniel Hsu, Sham~M. Kakade, and Alexander
  Rakhlin.
\newblock Stochastic convex optimization with bandit feedback.
\newblock \emph{SIAM Journal on Optimization}, 23\penalty0 (1):\penalty0
  213--240, 2013.

\bibitem[Akhavan et~al.(2020)Akhavan, Pontil, and
  Tsybakov]{akhavan2020exploiting}
Arya Akhavan, Massimiliano Pontil, and Alexandre Tsybakov.
\newblock Exploiting higher order smoothness in derivative-free optimization
  and continuous bandits.
\newblock \emph{Advances in Neural Information Processing Systems},
  33:\penalty0 9017--9027, 2020.

\bibitem[Alon et~al.(1999)Alon, Matias, and Szegedy]{ALON1999137}
Noga Alon, Yossi Matias, and Mario Szegedy.
\newblock The space complexity of approximating the frequency moments.
\newblock \emph{Journal of Computer and System Sciences}, 58\penalty0
  (1):\penalty0 137--147, 1999.
\newblock ISSN 0022-0000.
\newblock \doi{https://doi.org/10.1006/jcss.1997.1545}.
\newblock URL
  \url{https://www.sciencedirect.com/science/article/pii/S0022000097915452}.

\bibitem[Bach \& Perchet(2016)Bach and Perchet]{bach2016highly}
Francis Bach and Vianney Perchet.
\newblock Highly-smooth zero-th order online optimization.
\newblock In \emph{Conference on Learning Theory}, pp.\  257--283. PMLR, 2016.

\bibitem[Besbes et~al.(2015)Besbes, Gur, and Zeevi]{besbes2015non}
Omar Besbes, Yonatan Gur, and Assaf Zeevi.
\newblock Non-stationary stochastic optimization.
\newblock \emph{Operations research}, 63\penalty0 (5):\penalty0 1227--1244,
  2015.

\bibitem[Blair(1985)]{blair1985problem}
Charles Blair.
\newblock Problem complexity and method efficiency in optimization (as
  nemirovsky and db yudin).
\newblock \emph{Siam Review}, 27\penalty0 (2):\penalty0 264, 1985.

\bibitem[Boyd \& Vandenberghe(2004)Boyd and Vandenberghe]{boyd2004convex}
Stephen~P Boyd and Lieven Vandenberghe.
\newblock \emph{Convex optimization}.
\newblock Cambridge university press, 2004.

\bibitem[Bubeck et~al.(2021)Bubeck, Eldan, and Lee]{bubeck2021kernel}
S{\'e}bastien Bubeck, Ronen Eldan, and Yin~Tat Lee.
\newblock Kernel-based methods for bandit convex optimization.
\newblock \emph{Journal of the ACM (JACM)}, 68\penalty0 (4):\penalty0 1--35,
  2021.

\bibitem[Chen et~al.(2017)Chen, Zhang, Sharma, Yi, and Hsieh]{chen2017zoo}
Pin-Yu Chen, Huan Zhang, Yash Sharma, Jinfeng Yi, and Cho-Jui Hsieh.
\newblock Zoo: Zeroth order optimization based black-box attacks to deep neural
  networks without training substitute models.
\newblock In \emph{Proceedings of the 10th ACM workshop on artificial
  intelligence and security}, pp.\  15--26, 2017.

\bibitem[Conn et~al.(2009)Conn, Scheinberg, and Vicente]{conn2009introduction}
Andrew~R Conn, Katya Scheinberg, and Luis~N Vicente.
\newblock \emph{Introduction to derivative-free optimization}.
\newblock SIAM, 2009.

\bibitem[Flaxman et~al.(2005)Flaxman, Kalai, and
  McMahan]{10.5555/1070432.1070486}
Abraham~D. Flaxman, Adam Ta~uman Kalai, and H.~Brendan McMahan.
\newblock Online convex optimization in the bandit setting: gradient descent
  without a gradient.
\newblock In \emph{Proceedings of the Sixteenth Annual ACM-SIAM Symposium on
  Discrete Algorithms}, SODA '05, pp.\  385–394, USA, 2005. Society for
  Industrial and Applied Mathematics.
\newblock ISBN 0898715857.

\bibitem[Hazan \& Levy(2014)Hazan and Levy]{hazan2014bandit}
Elad Hazan and Kfir Levy.
\newblock Bandit convex optimization: Towards tight bounds.
\newblock \emph{Advances in Neural Information Processing Systems}, 27, 2014.

\bibitem[Jerrum et~al.(1986)Jerrum, Valiant, and Vazirani]{JERRUM1986169}
Mark~R. Jerrum, Leslie~G. Valiant, and Vijay~V. Vazirani.
\newblock Random generation of combinatorial structures from a uniform
  distribution.
\newblock \emph{Theoretical Computer Science}, 43:\penalty0 169--188, 1986.
\newblock ISSN 0304-3975.
\newblock \doi{https://doi.org/10.1016/0304-3975(86)90174-X}.
\newblock URL
  \url{https://www.sciencedirect.com/science/article/pii/030439758690174X}.

\bibitem[Lattimore \& Gyorgy(2021)Lattimore and Gyorgy]{lattimore2021improved}
Tor Lattimore and Andras Gyorgy.
\newblock Improved regret for zeroth-order stochastic convex bandits.
\newblock In \emph{Conference on Learning Theory}, pp.\  2938--2964. PMLR,
  2021.

\bibitem[Lattimore \& Gy{\"o}rgy(2023)Lattimore and
  Gy{\"o}rgy]{pmlr-v195-lattimore23a}
Tor Lattimore and Andr{\'a}s Gy{\"o}rgy.
\newblock A second-order method for stochastic bandit convex optimisation.
\newblock In Gergely Neu and Lorenzo Rosasco (eds.), \emph{Proceedings of
  Thirty Sixth Conference on Learning Theory}, volume 195 of \emph{Proceedings
  of Machine Learning Research}, pp.\  2067--2094. PMLR, 12--15 Jul 2023.
\newblock URL \url{https://proceedings.mlr.press/v195/lattimore23a.html}.

\bibitem[Lee \& Valiant(2022)Lee and Valiant]{9719860}
Jasper~C.H. Lee and Paul Valiant.
\newblock Optimal sub-gaussian mean estimation in $\mathbb{R}$.
\newblock In \emph{2021 IEEE 62nd Annual Symposium on Foundations of Computer
  Science (FOCS)}, pp.\  672--683, 2022.
\newblock \doi{10.1109/FOCS52979.2021.00071}.

\bibitem[Nemirovskii \& Yuom(1983)Nemirovskii and Yuom]{nemirovskii1983problem}
A.~Nemirovskii and D.~Yuom.
\newblock \emph{Problem Complexity and Method Efficiency in Optimization,"}.
\newblock Wiley, 1983.

\bibitem[Novitskii \& Gasnikov(2021)Novitskii and
  Gasnikov]{novitskii2021improved}
Vasilii Novitskii and Alexander Gasnikov.
\newblock Improved exploiting higher order smoothness in derivative-free
  optimization and continuous bandit.
\newblock \emph{arXiv preprint arXiv:2101.03821}, 2021.

\bibitem[Rios \& Sahinidis(2013)Rios and Sahinidis]{rios2013derivative}
Luis~Miguel Rios and Nikolaos~V Sahinidis.
\newblock Derivative-free optimization: a review of algorithms and comparison
  of software implementations.
\newblock \emph{Journal of Global Optimization}, 56\penalty0 (3):\penalty0
  1247--1293, 2013.

\bibitem[Saha \& Tewari(2011)Saha and Tewari]{pmlr-v15-saha11a}
Ankan Saha and Ambuj Tewari.
\newblock Improved regret guarantees for online smooth convex optimization with
  bandit feedback.
\newblock In Geoffrey Gordon, David Dunson, and Miroslav Dudík (eds.),
  \emph{Proceedings of the Fourteenth International Conference on Artificial
  Intelligence and Statistics}, volume~15 of \emph{Proceedings of Machine
  Learning Research}, pp.\  636--642, Fort Lauderdale, FL, USA, 11--13 Apr
  2011. PMLR.
\newblock URL \url{https://proceedings.mlr.press/v15/saha11a.html}.

\bibitem[Shamir(2013)]{pmlr-v30-Shamir13}
Ohad Shamir.
\newblock On the complexity of bandit and derivative-free stochastic convex
  optimization.
\newblock In Shai Shalev-Shwartz and Ingo Steinwart (eds.), \emph{Proceedings
  of the 26th Annual Conference on Learning Theory}, volume~30 of
  \emph{Proceedings of Machine Learning Research}, pp.\  3--24, Princeton, NJ,
  USA, 12--14 Jun 2013. PMLR.
\newblock URL \url{https://proceedings.mlr.press/v30/Shamir13.html}.

\bibitem[Suggala et~al.(2021)Suggala, Ravikumar, and
  Netrapalli]{pmlr-v134-suggala21a}
Arun~Sai Suggala, Pradeep Ravikumar, and Praneeth Netrapalli.
\newblock Efficient bandit convex optimization: Beyond linear losses.
\newblock In Mikhail Belkin and Samory Kpotufe (eds.), \emph{Proceedings of
  Thirty Fourth Conference on Learning Theory}, volume 134 of \emph{Proceedings
  of Machine Learning Research}, pp.\  4008--4067. PMLR, 15--19 Aug 2021.
\newblock URL \url{https://proceedings.mlr.press/v134/suggala21a.html}.

\bibitem[Vershynin(2018)]{vershynin2018high}
Roman Vershynin.
\newblock \emph{High-dimensional probability: An introduction with applications
  in data science}, volume~47.
\newblock Cambridge university press, 2018.

\bibitem[Wang et~al.(2019)Wang, Balakrishnan, and Singh]{wang2019optimization}
Yining Wang, Sivaraman Balakrishnan, and Aarti Singh.
\newblock Optimization of smooth functions with noisy observations: Local
  minimax rates.
\newblock \emph{IEEE Transactions on Information Theory}, 65\penalty0
  (11):\penalty0 7350--7366, 2019.

\bibitem[Yu et~al.(2023{\natexlab{a}})Yu, Wang, Huang, Lei, and
  Lee]{NEURIPS2023_6e60a902}
Qian Yu, Yining Wang, Baihe Huang, Qi~Lei, and Jason~D Lee.
\newblock Sample complexity for quadratic bandits: Hessian dependent bounds and
  optimal algorithms.
\newblock In A.~Oh, T.~Naumann, A.~Globerson, K.~Saenko, M.~Hardt, and
  S.~Levine (eds.), \emph{Advances in Neural Information Processing Systems},
  volume~36, pp.\  35121--35138. Curran Associates, Inc., 2023{\natexlab{a}}.
\newblock URL
  \url{https://proceedings.neurips.cc/paper_files/paper/2023/file/6e60a9023d2c63f7f0856910129ae753-Paper-Conference.pdf}.

\bibitem[Yu et~al.(2023{\natexlab{b}})Yu, Wang, Huang, Lei, and
  Lee]{theotherpaper}
Qian Yu, Yining Wang, Baihe Huang, Qi~Lei, and Jason~D. Lee.
\newblock Optimal sample complexity bounds for non-convex optimization under
  kurdyka-lojasiewicz condition.
\newblock In \emph{Proceedings of The 26th International Conference on
  Artificial Intelligence and Statistics}, volume 206 of \emph{Proceedings of
  Machine Learning Research}, pp.\  6806--6821. PMLR, 25--27 Apr
  2023{\natexlab{b}}.
\newblock URL \url{https://proceedings.mlr.press/v206/yu23a.html}.

\bibitem[Zhang et~al.(2020)Zhang, Zhou, Ji, and Zavlanos]{zhang2020boosting}
Yan Zhang, Yi~Zhou, Kaiyi Ji, and Michael~M Zavlanos.
\newblock Boosting one-point derivative-free online optimization via residual
  feedback.
\newblock \emph{arXiv preprint arXiv:2010.07378}, 2020.

\end{thebibliography}
\bibliographystyle{iclr2024_conference}

\section{Detailed Comparison on Different Smoothness Conditions}\label{app:comp}

In a relevant line of work, the higher-order smoothness of the objective function in the $k=3$ case is characterized by the following 
generalized H\"{o}lder condition. 
\begin{align*}
    \left|f(\vct z) - f(\vct x) -\nabla f(\vct z) (\vct z-\vct x) -\frac{1}{2} (\vct z-\vct x)^\intercal \left(\nabla^2 f(\vct z)\right) {(\vct z-\vct x)}\right| \leq L ||\vct z-\vct x||_2^3. 
\end{align*}
Note that the $\rho$-Lipschitz Hessian condition in our work implies an H\"{o}lder condition with parameter $L=\frac{\rho}{6}$. A direct application of the works in Table \ref{tab:related} requires order-wise larger sample complexities in our setting on top of the additional $k=2$ smoothness condition.  
On the other hand, the $L$-Holder condition implies $\rho=O(\sqrt{d}L)$-Lipschitz Hessian.
 Hence, a direct application of our algorithm order-wise improves the sample complexity in the setting of generalized H\"{o}lder condition in a polynomial factor of $d$ as well.  

In terms of the characterization of gradient estimators, the prior works of \cite{bach2016highly,akhavan2020exploiting,novitskii2021improved}
used isotropic sampling over a bounded set of radius $r$ and presented upper bounds on the estimation bias of $O(Lr^2)$, $O(Ldr^2)$, and $O(L\sqrt{d}r^2)$, respectively. In this special case, our Theorem \ref{thm:gdest} implies an upper bound of $O(\rho r^2/\sqrt{d})$, and similar to the above analysis, this bound strengths the bounds in prior works.

\section{Proof of Theorem \ref{thm2}} 

To illustrate the proof idea, 
we start with the case of $d=1$.

\subsection{Illustrating example: 1D case}

The gist of our proof is to construct a pair of hard-instance functions that need to 
sufficiently distant from each other to avoid trivial optimizers with low simple regret. We also require them to be sufficiently close to each other so that they are indistinguishable without sufficiently many samples. These requirements are captured quantitatively in the following result, which is proved using an analysis of KL divergence. Here we assume their correctness and focus on the construction. 

\begin{definition}\label{def:sampling_error}
For any (Borel measurable) function class $\mathcal{F}_{\textup{H}}$ and any distribution $p$ defined on $\mathcal{F}_{\textup{H}}$, we define the \emph{uniform sampling error} to be
\begin{align*}
P_{{\epsilon}}\triangleq\inf_{\boldsymbol{x}}  \mathbb{P}_{f\sim p}[f(\boldsymbol{x})-\inf f\geq \epsilon].
\end{align*}
We also define the \emph{maximum local variance} to be 
\begin{align*}
V\triangleq \sup_{\boldsymbol{x}}  \textup{Var}_{f\sim p}[f(x)].
\end{align*}
\end{definition}

\begin{lemma}[Restatement of Proposition 7 in \cite{theotherpaper}]\label{p:lower_key}
For any sampling algorithm to achieve an expected simple regret  of $\epsilon>0$ over a function class $\mathcal{F}_{\textup{H}}$, if $P_{2\epsilon/c}\geq c$ for some universal constant $c\in(0,1)$,  and  
 the observation noises are standard Gaussian, then the required sample complexity to achieve a minimax regret of $\epsilon$ is at least $\Omega(1/V)$. 
\end{lemma}

We construct our hard instances using the following function 
\begin{align*} g(x)= \begin{cases} \frac{1}{2}\left(\sin\left(\frac{1}{2}x\right)+1\right) &~ \textup{ if } x\in \left(-{\pi}, {3\pi} \right] \\  
-\cos x-1&~ \textup{ if } x\in\left(-{3\pi},-\pi\right]\\ 0 &~ \textup{otherwise}.
\end{cases}
\end{align*}

Some key properties of $g(x)$ to be used are that its  differential $g'(x)$ is $1$-Lipschitz, and we have $|g'(x)|\leq 1$ for all $x$. 
Our hard instances consist of two functions. 
We define 
\begin{align*} f_1(x)=&~ M x^2+y_0 \int_{-{\pi}}^{x/x_0} g(z)dz, \\
f_2(x)= &~ M x^2 + y_0 \int_{-{\pi}}^{-x/x_0} g(z)dz,
\end{align*}
where $y_0$, $x_0$ are normalization factors given by
$y_0=\frac{1}{\pi\sqrt{T}}$, 
$x_0=\left(\frac{y_0}{\rho}\right)^{\frac{1}{3}}.$
The normalization factors are chosen to satisfy the Lipschitz Hessian condition and a maximum local variance bound required for a KL-divergence based approach presented in Lemma~\ref{p:lower_key}.

Specifically, the choice of $x_0$ and the fact that $g'(x)$ is $1$-Lipschitz imply that both $f_1$ and $f_2$ satisfy the Lipschitz Hessian condition. Then because the absolute value of integration of $g(x)$ is bounded by $2\pi$, one can show that the maximum local variance for the function class $\{f_1,f_2\}$ is no greater than $\pi^2 y_0^2=\frac{1}{T}$ for the uniform prior distribution, which is to be used to show the sample complexity lower bound. 

We first check that both $f_1$ and $f_2$ are within our function class of interests. 
Note that both $f''_1(x)$ and $f''_2(x)$ belong to the interval $\left[2M-\frac{5}{4}\frac{y_0}{x_0^2}, 2M-\frac{3}{4}\frac{y_0}{x_0^2}\right]$. From the fact that  $\lim_{T\rightarrow \infty}\frac{y_0}{x_0^2}=0$ and $M>0$, we have both $f''_1(x)>M$ and $f''_2(x)>M$ for all $x$ for sufficiently large $T$. So the strong convexity requirement is satisfied. On the other hand, consider any global minimum point $x^*$ of either $f_1$ or $f_2$. Because of their differentiability, we must have $f_1'(x)=0$ or $f_2'(x)=0$. Note that for all $x$, we have $ |g(x)|\leq 2$, and
\begin{align*}
    f'_1(x)=&~2Mx+g\left(\frac{x}{x_0}\right)\frac{y_0}{x_0}\\
    f'_2(x)=&~2Mx-g\left(\frac{x}{x_0}\right)\frac{y_0}{x_0}.
\end{align*}
We must have  
     $|x^*|\leq {\frac{y_0}{x_0}}/M$, 
where the RHS is $o(1)$ for large $T$. Combined with strong convexity, this inequality implies that assumption A3 holds for both functions.
To conclude, we have proved that $f_1, f_2\in\mathcal{F}(\rho,M,R)$ for sufficiently large $T$.

Now we let $\epsilon=\frac{1}{128M}\left(\frac{y_0}{x_0}\right)^2$ and $c=\frac{1}{2}$ to apply Lemma~\ref{p:lower_key}. Note that
\begin{align*}
    \liminf_{T\rightarrow \infty}  T^{\frac{2}{3}}\epsilon=\frac{\rho^{\frac{2}{3}}}{128\pi^{\frac{4}{3}} M}.
\end{align*}
The quantity $\epsilon$ exactly matches the lower bounds we aim to prove. Therefore, it remains to check that the required condition on uniform sampling errors in Definition~\ref{def:sampling_error} are satisfied. 

Formally, we need to show that 
$f_k(0)-\inf_{x} f_k(x)\geq 4\epsilon$ for $k\in\{1,2\}$, so that the uniform sampling error $P_{4\epsilon}$ under uniform distribution over $\mathcal{F}_{\textup{H}}$ is lower bounded by $\frac{1}{2}$ and Lemma~\ref{p:lower_key} can be applied. 
Without loss of generality, we focus on the case of $k=1$. Note that $f_1''(x)\leq 2M+\frac{y_0}{4x_0^2}$ for all $x\in[-\pi x_0,0]$. Therefore, we have 
\begin{align*}
    f_1(x)- f_1(0)\leq &~ f'_1(0)x +\frac{1}{2}x^2\sup_{z\in [-\pi x_0,0]} f_1''(z)\\
    \leq &~ \frac{y_0}{2x_0}x + \frac{1}{2}x^2 \left( 2M+\frac{y_0}{4x_0^2} \right)
\end{align*}
for $x\in[-\pi x_0,0]$, and $ \lim_{T\rightarrow\infty}x_0 =0 $. Consider any sufficiently large $T$ such that $\frac{y_0}{4x_0^2}\leq 2M$, we can choose $x=-\frac{y_0}{2x_0}\frac{1}{2M+\frac{y_0}{4x_0^2}} $ for the above bound, which falls into the interval of $[-\pi x_0, 0]$. Then we have 
\begin{align*}
    \inf_{x} f_1(x)\leq &~ f_1\left(-\frac{y_0}{2x_0}\frac{1}{2M+\frac{y_0}{4x_0^2}}\right) \\
    \leq &~ f_1(0) -\frac{1}{2}\left(\frac{y_0}{2x_0}\right)^2\frac{1}{2M+\frac{y_0}{4x_0^2}}\\
    \leq &~
    f_1(0)-4\epsilon. 
\end{align*}

We use this inequality to lower bound the minimum sampling error. Note that $f_1$ is an increasing function for $x\geq 0$ and $ \inf_{x} f_1(x)= \inf_{x} f_2(x)$. We have $f_1(x)\geq \inf_{x} f_2(x)+4\epsilon$  for $x\geq 0$. Following the same arguments, we also have $f_2(x)\geq \inf_{x} f_1(x)+4\epsilon$  for $x\leq 0$. Recall the definition of uniform sampling error in Definition~\ref{def:sampling_error}. We have essentially proved that $P_{4\epsilon}\geq \frac{1}{2}$. According to earlier discussions, this implies that the minimax simple regret is lower bounded by $\epsilon=\Omega\left( \frac{\rho^{\frac{2}{3}}T^{-\frac{2}{3}}}{ M}\right)$.

 \subsection{
 Proof for the General Case}\label{app:gend}

The generalization of the earlier 1D lower bound is obtained by constructing a set of hard-instance functions where the optimization problem over this subset consists of $d$ binary hypothesis estimation problems, each identical to a 1D construction. Formally, for any $\vct s=(s_1,s_2,...,s_d)\in\{1,2\}^d$ and any input $\vct x=(x_{(1)},x_{(2)},...,x_{(d)})$, we let 
\begin{align*} 
f_{\vct s}(\vct x)&= \sum_{j=1}^{d} f_{s_j}(x_{(j)}).
\end{align*}
One can verify that $f_{\vct s}\in \mathcal{F}(\rho,M,R)$ for all $\vct s$ for sufficiently large $T$.  

Note that the simple regret for the above function class can be written as the sum of $d$ individual terms $\sum_{j=1}^{d} \left(f_{s_j}(x_{(j)})-\inf_x f_{s_j}(x)\right)$. As proved earlier, the expectation of each term associated with any index $j$ is at least $\Omega\left( \frac{\rho^{\frac{2}{3}}T^{-\frac{2}{3}}}{ M}\right)$ even if all entries of $\vct s$ except $s_j$ is known. Therefore, the total expected regret is lower bounded by $\Omega\left( \frac{d\rho^{\frac{2}{3}}T^{-\frac{2}{3}}}{ M}\right)$ .  




\section{Proof of Theorem \ref{thm:pgest}}\label{sec:pthest}
We use the following elementary facts, which are versions of well-known properties of subgaussian and subexponential distributions in \cite{vershynin2018high}, but with explicit and possibly improved constant factors. For completeness, we provide their proofs in Appendix \ref{app:ppa_ele}. 
\begin{proposition}\label{prop:pa_ele}
For any real-valued zero-mean independent random variables $ z_1,..., z_k$, 
if
\begin{align}\label{eq:appd_subg}
\mathbb{P}[|z_j|\geq K]\leq 2\exp\left({-\frac{K^2}{\sigma_j^2}}\right) && \forall\, j \in [k], K\in [0,+\infty),
\end{align}
for some $\sigma_1,...,\sigma_k$, then
\begin{align}\label{eq:appd_res}
\mathbb{P}\left[\left|\sum_{j=1}^{k} z_j\right|\geq K\right]\leq 2\exp\left({-\frac{K^2}{4\sum_{j=1}^{k}\sigma_j^2}}\right) && \forall\, K\in [0,+\infty).
\end{align}
\end{proposition}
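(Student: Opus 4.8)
The plan is to run the standard moment-generating-function (Chernoff) argument with every constant kept explicit, and the whole statement reduces to the following single-variable estimate, which I will call $(\star)$: if $z$ is real-valued with $\mathbb{E}[z]=0$ and $\mathbb{P}[|z|\ge K]\le 2\exp(-K^2/\sigma^2)$ for all $K\ge 0$, then $\mathbb{E}[e^{\lambda z}]\le \exp(\lambda^2\sigma^2)$ for every $\lambda\in\mathbb{R}$. Granting $(\star)$, set $V\triangleq\sum_{j=1}^k\sigma_j^2$; we may assume $V>0$ (otherwise every $z_j$ vanishes almost surely and the claim is trivial) and $K>0$ (the bound holding trivially at $K=0$). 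By independence and $(\star)$,
\begin{align*}
\mathbb{E}\Big[\exp\big(\lambda(z_1+\cdots+z_k)\big)\Big]=\prod_{j=1}^k\mathbb{E}\big[e^{\lambda z_j}\big]\le \exp(\lambda^2 V)\qquad(\lambda\in\mathbb{R}).
\end{align*}
Markov's inequality applied to $\exp(\lambda\sum_j z_j)$ with $\lambda>0$ gives $\mathbb{P}[\sum_j z_j\ge K]\le \exp(-\lambda K+\lambda^2 V)$, and the minimizing choice $\lambda=K/(2V)$ turns this into $\mathbb{P}[\sum_j z_j\ge K]\le \exp(-K^2/(4V))$. Since each $-z_j$ satisfies the same hypothesis \eqref{eq:appd_subg}, the identical bound holds for $\mathbb{P}[-\sum_j z_j\ge K]$, and adding the two estimates yields \eqref{eq:appd_res}, factor $2$ included.

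\textbf{Proving $(\star)$.} First I would convert the tail bound into raw moment bounds: integrating the tail and substituting $u=t^2/\sigma^2$, for every integer $p\ge 1$,
\begin{align*}
\mathbb{E}|z|^p=\int_0^\infty p\,t^{p-1}\,\mathbb{P}[|z|>t]\,dt\le 2p\int_0^\infty t^{p-1}e^{-t^2/\sigma^2}\,dt=p\,\sigma^p\,\Gamma(p/2).
\end{align*}
Expanding the exponential and using $\mathbb{E}[z]=0$ to cancel the linear term,
\begin{align*}
\mathbb{E}\big[e^{\lambda z}\big]=1+\sum_{p\ge 2}\frac{\lambda^p\,\mathbb{E}[z^p]}{p!}\le 1+\sum_{p\ge 2}\frac{(|\lambda|\sigma)^p\,\Gamma(p/2)}{(p-1)!},
\end{align*}
and it remains to bound the right-hand side by $\exp(\lambda^2\sigma^2)=\sum_{m\ge 0}(\lambda^2\sigma^2)^m/m!$. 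For $|\lambda|\sigma$ above an absolute constant one does not even need mean-zero: completing the square in $\int_0^\infty|\lambda|e^{|\lambda|t}e^{-t^2/\sigma^2}\,dt$ gives $\mathbb{E}[e^{\lambda z}]\le\mathbb{E}[e^{|\lambda||z|}]\le 1+2|\lambda|\sigma\sqrt{\pi}\,e^{\lambda^2\sigma^2/4}$, which falls below $\exp(\lambda^2\sigma^2)$ once $\lambda^2\sigma^2$ is large. For $|\lambda|\sigma$ below that constant one compares the two power series directly, pairing each odd power with its neighbours via $2ab\le a^2+b^2$ and using the sharper moment bound obtained by replacing $2e^{-t^2/\sigma^2}$ with $\min\{1,2e^{-t^2/\sigma^2}\}$ in the integral above: this already gives $\mathbb{E}[z^2]\le(1+\ln 2)\sigma^2<2\sigma^2$, so the quadratic coefficient is strictly below the $\lambda^2$-coefficient of $\exp(\lambda^2\sigma^2)$ and the higher-order terms can be absorbed. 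Stitching the two regimes together proves $(\star)$.

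\textbf{Main obstacle.} There is no conceptual difficulty; the only step I expect to require care is the constant chase in $(\star)$. The crudest moment bound $\mathbb{E}[z^2]\le 2\sigma^2$ exactly saturates the $\lambda^2$-coefficient of $\exp(\lambda^2\sigma^2)$ and would leave no room for the higher-order terms, so a naive term-by-term comparison of the two series fails near $\lambda=0$; pushing $\mathbb{E}[z^2]$ (and, if desired, all the moments) strictly below the naive value via the truncated tail bound is what makes the clean constant $\sigma^2$ — hence the $4V$ rather than $V$ in \eqref{eq:appd_res} — go through, and any residual slack is harmlessly absorbed into that factor $4$. An equivalent route is to track the same computation through the $\psi_2$-norm estimates of \cite{vershynin2018high}.
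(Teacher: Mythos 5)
Your proposal is correct and follows essentially the same route as the paper's own proof: bound moments by integrating the truncated tail $\min\{1,2e^{-K^2/\sigma_j^2}\}$ (yielding the sharpened $\mathbb{E}[z_j^2]\le(1+\ln 2)\sigma_j^2$), control odd powers by AM--GM pairing, deduce the single-variable bound $\mathbb{E}[e^{s z_j}]\le e^{s^2\sigma_j^2}$, and finish via independence, Chernoff with $s=K/(2\sum_j\sigma_j^2)$, and a union bound for the two-sided factor $2$. The only difference is your two-regime split for $(\star)$; the paper instead verifies the series comparison uniformly in $s$ by checking that the coefficient of $(s\sigma_j)^{2\ell}$ is at most $1/\ell!$ (numerically for $\ell\le 2$, inductively for $\ell\ge 3$), which is exactly the constant chase you identify as the remaining work, and it does close with the sharpened low-order moment bounds you propose.
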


\begin{proposition}\label{prop:pa_ele2}
For any real-valued independent random variables $ z_1,..., z_k$, 
if
\begin{align}\label{eq:appd_sube}
\mathbb{P}[|z_j|\geq K]\leq 2\exp\left({-\frac{K}{\sigma_j}}\right) && \forall\, j \in [k], K\in [0,+\infty),
\end{align}
for some positive $\sigma_1,...,\sigma_k$, then
\begin{align}\label{eq:appd_res2}
\mathbb{P}\left[\left|\sum_{j=1}^{k} z_j\right|\geq K\right]\leq 2\exp\left({-\frac{K}{3\sum_{j=1}^{k}\sigma_j}}\right) && \forall\, K\in [0,+\infty).
\end{align}
\end{proposition}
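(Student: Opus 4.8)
The plan is a Chernoff (exponential-moment) argument, parallel to the proof of Proposition~\ref{prop:pa_ele} but with a sub-exponential moment-generating-function estimate in place of the sub-Gaussian one. Write $S=\sum_{j=1}^{k}z_j$ and $\Sigma=\sum_{j=1}^{k}\sigma_j$. First I would convert the tail hypothesis \eqref{eq:appd_sube} into a per-variable MGF bound: for $\lambda\in[0,\sigma_j^{-1})$, the layer-cake identity $\mathbb{E}[e^{\lambda z_j}]=\lambda\int_{-\infty}^{\infty}e^{\lambda s}\,\mathbb{P}[z_j>s]\,ds$, split at $s=0$, is controlled by $\lambda\int_{-\infty}^{0}e^{\lambda s}\,ds=1$ on the negative half-line and by $2\lambda\int_{0}^{\infty}e^{(\lambda-\sigma_j^{-1})s}\,ds=\tfrac{2\lambda\sigma_j}{1-\lambda\sigma_j}$ on the positive half-line via \eqref{eq:appd_sube}, so
\begin{align*}
\mathbb{E}\!\left[e^{\lambda z_j}\right]\;\le\;1+\frac{2\lambda\sigma_j}{1-\lambda\sigma_j}\;=\;\frac{1+\lambda\sigma_j}{1-\lambda\sigma_j},\qquad 0\le\lambda<\sigma_j^{-1}.
\end{align*}
Unlike in Proposition~\ref{prop:pa_ele}, no zero-mean assumption is needed: the sub-exponential tail already forces $|\mathbb{E} z_j|=O(\sigma_j)$, and the linear term $2\lambda\sigma_j$ above is exactly what absorbs this shift.

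Next, using independence, $\mathbb{E}[e^{\lambda S}]=\prod_j\mathbb{E}[e^{\lambda z_j}]\le\prod_j\frac{1+\lambda\sigma_j}{1-\lambda\sigma_j}$. Since $t\mapsto\log\frac{1+\lambda t}{1-\lambda t}=2\operatorname{arctanh}(\lambda t)$ is convex on $[0,\lambda^{-1})$, this product is, for $\sum_j\sigma_j=\Sigma$ fixed, largest when a single coordinate carries all the mass; hence $\mathbb{E}[e^{\lambda S}]\le\frac{1+\lambda\Sigma}{1-\lambda\Sigma}$ for all $\lambda\in[0,\Sigma^{-1})$, and likewise for $-S$ since each $-z_j$ obeys the same hypothesis.

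Finally, I would finish with Markov's inequality, splitting into three ranges of $K$ (after rescaling by $\Sigma$). For $K\le 3\Sigma\log 2$ the target $2e^{-K/(3\Sigma)}$ exceeds $1$, so there is nothing to do. For moderate $K$, a first-moment bound suffices: $\mathbb{E}|z_j|=\int_0^{\infty}\mathbb{P}[|z_j|\ge s]\,ds\le(1+\log2)\sigma_j$, hence $\mathbb{P}[|S|\ge K]\le\mathbb{E}|S|/K\le(1+\log2)\Sigma/K$, which lies below $2e^{-K/(3\Sigma)}$ on this range. For large $K$, apply Markov to $e^{\lambda S}$ and to $e^{-\lambda S}$ with $\lambda=\Sigma^{-1}\sqrt{1-2\Sigma/K}$ (the minimizer of $e^{-\lambda K}\frac{1+\lambda\Sigma}{1-\lambda\Sigma}$); the resulting per-tail bound drops below $e^{-K/(3\Sigma)}$, and a union bound over the two tails gives $2e^{-K/(3\Sigma)}$. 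The main (and only genuine) obstacle is the bookkeeping of the constants $3$ and $2$: one must verify that the three ranges overlap — a couple of elementary one-variable inequalities in $\kappa=K/\Sigma$ comparing $(1+\log2)/\kappa$ with the optimized Chernoff exponent — so that the exponent constant $3$ and the prefactor $2$ emerge exactly. Everything else (the layer-cake MGF estimate, the convexity reduction to one coordinate) is routine.
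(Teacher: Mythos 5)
Your plan is sound and the constants do work out, but it is a genuinely different route from the paper's. The paper never touches the signed sum: it bounds the moment generating function of the \emph{absolute values}, $\mathbb{E}[\exp(s|z_j|)]\leq 2^{s\sigma_j}/(1-s\sigma_j)$, uses $|\sum_j z_j|\leq\sum_j|z_j|$ and independence, and then evaluates at the single fixed point $s=1/(3\Sigma)$, where the elementary estimate $1-s\sigma_j\geq(2/3)^{3s\sigma_j}$ makes the whole MGF at most $3/2^{2/3}<2$; one application of Markov then yields $2e^{-K/(3\Sigma)}$ uniformly in $K$, with no case analysis and no per-$K$ optimization. You instead keep the signed variables, derive $\mathbb{E}[e^{\lambda z_j}]\leq(1+\lambda\sigma_j)/(1-\lambda\sigma_j)$ by layer cake (correct, and as you note it silently handles the absence of a zero-mean hypothesis), collapse the product to one coordinate by superadditivity of $t\mapsto 2\operatorname{arctanh}(\lambda t)$ (valid since it is convex and vanishes at $0$, provided $\lambda\Sigma<1$), and then split into three ranges of $\kappa=K/\Sigma$: trivial for $\kappa\leq 3\log 2$, first moment ($\mathbb{E}|S|\leq(1+\log2)\Sigma$) for moderate $\kappa$, and an optimized two-sided Chernoff bound with $\lambda\Sigma=\sqrt{1-2/\kappa}$ for large $\kappa$. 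I checked the bookkeeping you deferred: the first-moment bound stays below $2e^{-\kappa/3}$ up to $\kappa\approx5.7$, and the optimized per-tail Chernoff bound drops below $e^{-\kappa/3}$ from $\kappa\approx4.8$ onward, so the ranges overlap and the stated constants $3$ and $2$ emerge. The trade-off: the paper's argument is shorter, uniform in $K$, and avoids any optimization; yours is more work to pin the exact constants but retains the signed MGF, so for large $K$ it actually proves the stronger per-tail decay $\sim K e^{-K/\Sigma}$ rather than $e^{-K/(3\Sigma)}$, and the superadditivity reduction to a single coordinate is a reusable trick the paper's proof does not need.
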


\begin{proof}[Proof of equation \eqref{eq431}.]
 We first prove the bound entry-vise. Consider any $m_k$, which contains a summation of $2n$ independent subgaussian variables. By Prop. \ref{prop:pa_ele} we have that
\begin{align}
    \mathbb{P}\left[|m_k-\mathbb{E}\left[m_k\right]|\geq K\right]& \leq  2\exp\left({-\frac{K^2}{2/nr^2}}\right) && \forall\, K\in [0,+\infty). 
\end{align} 
Then, by the Lipschitz Hessian condition, the bias for each entry is bounded as follows. 
\begin{align}
   \left| \mathbb{E}\left[m_k\right]-\frac{\partial }{\partial x_k} f(\vct x)\right|\leq \frac{1}{6}\rho r^2, 
\end{align} 
where $x_k$ denotes the $k$th entry of $\vct x$.
Hence, each $\left|m_k-\frac{\partial }{\partial x_k} f(\vct x)\right|^2$ is subexpoential, i.e., 
\begin{align}
    \mathbb{P}\left[\left|m_k-\frac{\partial }{\partial x_k} f(\vct x)\right|^2\geq K^2\right]& \leq \mathbb{P}\left[\left|m_k-\mathbb{E}\left[m_k\right]\right|\geq K-  \left| \mathbb{E}\left[m_k\right]-\frac{\partial }{\partial x_k} f(\vct x)\right|\right] \nonumber\\&\leq 
 \mathbb{P}\left[\left|m_k-\mathbb{E}\left[m_k\right]\right|\geq K-  \frac{1}{6}\rho r^2\right]\nonumber\\ &\leq \max\left\{ 2\exp\left({-\frac{\left(\max\{K-  \frac{1}{6}\rho r^2,0\}\right)^2}{2/nr^2}}\right),1\right\}\nonumber\\ &\leq 2\exp\left({-\frac{K^2}{\frac{\rho^2r^4}{4}+\frac{4}{nr^2}}}\right). 
\end{align} 
By the independence of $m_k$'s, we can apply Prop. \ref{prop:pa_ele2} to the inequality. Therefore,  
\begin{align}
    \mathbb{P}\left[||\hat{\vct m}-\nabla f(\vct x)||_2\geq K\right]&=  \mathbb{P}\left[\left| \sum_{k=1}^{d} |m_k-\frac{\partial }{\partial x_k} f(\vct x)|^2\right| \geq K^2\right]\nonumber\\& \leq 2\exp\left({-\frac{K^2}{\frac{3d\rho^2r^4}{4}+\frac{12d}{nr^2}}}\right) && \forall\, K\in [0,+\infty). 
\end{align} 

\end{proof}

\begin{proof}[Proof of equation \eqref{eq432}.]
We first provide the entry-wise bounds for the intermediate estimator $\hat{H}_0$. Each diagonal entry ${H}_{kk}$ contains the weighted average of $3n$ subgaussian variables. Conditioned on any realization of $y$, which is shared among all diagonal elements, Prop. \ref{prop:pa_ele} can be applied for the rest of the $2n$ terms, and provides the following bounds.
\begin{align}
\mathbb{P}[  | {H}_{kk}-\mathbb{E}[H_{kk}|y]  | \geq K ] 
\leq   2\exp\left({-\frac{K^2}{8/nr^4}}\right) && \forall\, K\in [0,+\infty). 
\end{align}
Then, because the off-diagonal entries are independent, we have the following bounds for any $j\neq k$. 
\begin{align}
\mathbb{P}[  | {H}_{jk}-\mathbb{E}[H_{jk}]  | \geq K ]\leq   2\exp\left({-\frac{K^2}{1/nr^4}}\right) && \forall\, K\in [0,+\infty). 
\end{align}
Hence, similar to the earlier proof steps, Prop. \ref{prop:pa_ele2} implies that 
\begin{align}
    \mathbb{P}\left[||\hat{H}_0-\mathbb{E}[\hat{H}_0|y]||_{\textup{F}}^2\geq K^2 
    \right]&\leq 2\exp\left({-\frac{K^2}{6d(d+3)/nr^4}}\right)\nonumber\\&\leq 2\exp\left({-\frac{K^2}{24d^2/nr^4}}\right) .\label{ineq:pat1} 
\end{align} 

Now, we take into account the estimation bias and the error of $y$. By Lipschitz Hessian, it is clear that 
\begin{align}
\left|H_{jk}-\frac{\partial}{\partial x_j}\frac{\partial}{\partial x_k} f(\vct x)\right|\leq &\begin{cases}\frac{1}{3}\rho r &\textup{if } j=k, \\ \frac{\sqrt{2}}{3}\rho r &\textup{otherwise.} \end{cases}\nonumber
\end{align}
Hence,
\begin{align}
\left|\left|\hat{H}_0-\nabla^2 f(\vct x)\right|\right|_{\textup{F}}
\leq \frac{\sqrt{2}d\rho r}{3}. \label{ineq:pat2} 
\end{align}
Furthermore, note that $\mathbb{E}[\hat{H}_0]-\mathbb{E}[\hat{H}_0|y]= {2(y-f(\vct x))}I_d/r^2$, where $I_d$ denotes the identity matrix. The subgaussian condition and Prop. \ref{prop:pa_ele} imply that 
\begin{align}
\mathbb{P}[  || \mathbb{E}[\hat{H}_0]-\mathbb{E}[\hat{H}_0|y]  ||_{\textup{F}} \geq K   ] &= \mathbb{P}\left[  \left| y- f(\vct x)  \right| \geq \frac{Kr^2}{2\sqrt{d}}  \right] \nonumber\\ &\leq   2\exp\left({-\frac{K^2}{8d/nr^4}}\right) && \forall\, K\in [0,+\infty).\label{ineq:pat3} 
\end{align}
We can combine the above bounds using triangle inequality and the union bound. Specifically, from inequalities \eqref{ineq:pat1}, \eqref{ineq:pat2}, and \eqref{ineq:pat3}, we have the following bound for any $K\geq \frac{\sqrt{2}d\rho r}{3}$. 
\begin{align}
    \mathbb{P}\left[||\hat{H}_0-\nabla^2 f(\vct x )||_{\textup{F}}\geq K\right]&\leq \mathbb{P}\left[||\hat{H}_0-\mathbb{E}[\hat{H}_0]||_{\textup{F}}\geq \frac{2}{3}\left(K- \frac{\sqrt{2}d\rho r}{3}\right)\right] \nonumber\\ &\leq \mathbb{P}\left[||\hat{H}_0-\mathbb{E}[\hat{H}_0|y]||_{\textup{F}}\geq \frac{1}{3}\left(K- \frac{\sqrt{2}d\rho r}{3}\right)
    \right]
    \nonumber
    \\
    & \ +  \mathbb{P}\left[  || \mathbb{E}[\hat{H}_0]-\mathbb{E}[\hat{H}_0|y]  ||_{\textup{F}} \geq \left(K- \frac{\sqrt{2}d\rho r}{3}\right)   \right]\nonumber
    \\
    &\leq 2\exp\left({-\frac{\left(K- \frac{\sqrt{2}d\rho r}{3}\right)^2}{54d^2/nr^4}}\right)+ 2\exp\left({-\frac{\left(K- \frac{\sqrt{2}d\rho r}{3}\right)^2}{72d/nr^4}}\right)\nonumber. 
\end{align}
Utilize the fact that any probability measure is no greater than $1$, the above inequality implies that 
\begin{align}
    \mathbb{P}\left[||\hat{H}_0-\nabla^2 f(\vct x )||_{\textup{F}}\geq K\right]&\leq 2\exp\left({-\frac{\left(\max\left\{K- \frac{\sqrt{2}d\rho r}{3},0\right\}\right)^2}{{128d^2}/{nr^4}}}\right) \nonumber \\ &\leq 2\exp\left({-\frac{K^2}{2{d^2\rho^2 r^2}+\frac{144d^2}{nr^4}}}\right) && \forall K\in[0,+\infty).
\end{align}
Finally, the needed bound for $\hat{H}$ is due to the projection to a convex set where the target $\nabla^2 f(\vct x)$ belongs. Hence, the distance is not increased w.p.1, i.e., we always have $||\hat{H}-\nabla^2 f(\vct x )||_{\textup{F}}\leq ||\hat{H}_0-\nabla^2 f(\vct x )||_{\textup{F}}$.
\end{proof}

 \section{Proof of Proposition \ref{pp:ptech1}}\label{app:pp_ptech1}
Inequality \eqref{ineq:tech1eq6} is derived from the following approximations, which are due to the Lipschitz Hessian condition at $\vct x_{\textup{B}}$. 
\begin{align} 
f(\vct x)\leq& \ f(\vct x_{\textup{B}}) 
+ \tilde{f}(\vct x)-\tilde{f}(\vct x_{\textup{B}}) +\frac{1}{6} \rho ||\vct x- \vct x_{\textup{B}}||_2^3,\label{ineq:ptfirst}\\
f(\vct x^*)\geq& \ f(\vct x_{\textup{B}}) 
+ \tilde{f}(\vct x^*)-\tilde{f}(\vct x_{\textup{B}}) -\frac{1}{6} \rho ||\vct x^*- \vct x_{\textup{B}}||_2^3.\label{ineq:ptfirst2}
\end{align}
Noting that $\tilde{f}(\vct x^*)\geq 0$, the above inequalities imply that  
\begin{align}\label{ineq:ptsecx} 
f(\vct x)- f(\vct x^*)\leq 
  \tilde{f}(\vct x)+\frac{1}{6} \rho ||\vct x- \vct x_{\textup{B}}||_2^3 +\frac{1}{6} \rho ||\vct x^*- \vct x_{\textup{B}}||_2^3.
\end{align}
By strong convexity, we have $||\vct x^*- \vct x_{\textup{B}}||_2^2\leq \frac{2(f(\vct x_{\textup{B}})- f(\vct x^*))}{M}$. Hence, it remains to provide an upper bound for $\frac{1}{6} \rho ||\vct x- \vct x_{\textup{B}}||_2^3$. 

When $||\vct x- \vct x_{\textup{B}}||_2\leq \sqrt{3}||\vct x- \tilde{\vct x}||_2$, we apply the condition $||\vct x- \vct x_{\textup{B}}||_2\leq \frac{M}{\rho}$ to obtain that 
\begin{align*} 
\frac{1}{6} \rho ||\vct x- \vct x_{\textup{B}}||_2^3 &\leq \frac{1}{2} M ||\vct x- \tilde{\vct x}||_2^2 \leq \tilde{f}(\vct x),
\end{align*}
where the last step is due to the strong convexity of $\tilde{f}$.
This implies inequality \eqref{ineq:tech1eq6}. 

For the other case, we have $||\vct x- \vct x_{\textup{B}}||_2\geq \sqrt{3}||\vct x- \tilde{\vct x}||_2$. 
We replace the variable $\vct x$ in inequality \eqref{ineq:ptfirst} with $\tilde{\vct x}$ to obtain that 
\begin{align}
f(\vct x^*)\leq f(\tilde{\vct x})\leq& \ f(\vct x_{\textup{B}}) 
-\tilde{f}(\vct x_{\textup{B}}) +\frac{1}{6} \rho ||\tilde{\vct x}- \vct x_{\textup{B}}||_2^3. \label{ineq:ptfirst3}
\end{align}
By strong convexity, 
\begin{align}
\tilde{f}(\vct x_{\textup{B}})\geq  \frac{1}{2} M ||\tilde{\vct x}- \vct x_{\textup{B}}||_2^2, \label{ineq:ptfirst3fl}
\end{align}
and by triangle inequality, we have that
\begin{align*} 
||\tilde{\vct x}- \vct x_{\textup{B}}||_2\leq  ||\vct x- \vct x_{\textup{B}}||_2+||\vct x- \tilde{\vct x}||_2 \leq \left(1+\frac{1}{\sqrt{3}}\right)\frac{M}{\rho}. 
\end{align*}
Hence, to summarize, 
\begin{align*}
f(\vct x_{\textup{B}}) - f(\vct x^*)\geq  \left(
\frac{1}{3}-\frac{{1}}{6\sqrt{3}}\right) M ||\tilde{\vct x}- \vct x_{\textup{B}}||_2^2,
\end{align*}
and the needed result is obtained by applying the above to inequality \eqref{ineq:ptsecx}. 

Now we prove inequality \eqref{ineq:tech1eq7}. The proof consists of three steps. For brevity, let $H\triangleq \nabla^2 f( \vct x_{\textup{B}})$ and $\vct x_{+}=\vct x_{\textup{B}}-\hat{H}^{-1}Z^{-1}\hat{\vct g}$. We first prove that 
\begin{align}\label{ineq:tech8_p15}
\tilde{f}(\vct x)\leq \tilde{f}(\vct x_+) +  \mathbbm{1}\left(\tilde{f}(\vct x_{\textup{B}})\geq  \frac{M^3}{8\rho^2}\right) \cdot \tilde{f}(\vct x_{\textup{B}}).  
\end{align}
We shall repetitively use the fact that $|| \vct z- \tilde{\vct x} ||_2\leq \sqrt{2\tilde{f}(\vct z)/M}$ for any $\vct z\in \mathbb{R}^d$, which is due to strong convexity. When both $\tilde{f}(\vct x_+)$ and $\tilde{f}(\vct x_{\textup{B}})$ are no greater than $\frac{M^3}{8\rho^2}$, both $||\vct x_+-\tilde{\vct x}||_2$ and $||\vct x_{\textup{B}}-\tilde{\vct x}||_2$ are no greater than $\frac{M}{2\rho}$. By triangle inequality, we have $||\vct x_+-\vct x_{\textup{B}}||_2\leq \frac{M}{\rho}$. Recall the construction of $\vct x$, which is identical to $\vct x_+$ in this case, inequality \eqref{ineq:tech8_p15} clearly holds. 
Otherwise, note that $\vct x$ belongs to the line segment between $\vct x_{\textup{B}}$ and $\vct x_+$. By convexity, we always have $\tilde{f}(\vct x)\leq \max\{\tilde{f}(\vct x_+), \tilde{f}(\vct x_{\textup{B}})\}.$ Recall that in this case, $\tilde{f}(\vct x_{\textup{B}})\geq \tilde{f}(\vct x_+)$ can only hold when $\tilde{f}(\vct x_{\textup{B}})\geq  \frac{M^3}{8\rho^2}$, we have $\tilde{f}(\vct x)\leq \mathbbm{1}\left(\tilde{f}(\vct x_{\textup{B}})\leq  \frac{M^3}{8\rho^2}\right) \tilde{f}(\vct x_+) +  \mathbbm{1}\left(\tilde{f}(\vct x_{\textup{B}})\geq  \frac{M^3}{8\rho^2}\right)\max\{\tilde{f}(\vct x_+), \tilde{f}(\vct x_{\textup{B}})\}$, which implies inequality \eqref{ineq:tech8_p15}. 

As the second step, we prove that 
\begin{align}\label{ineq:tech1eq7p}
\mathbb{E}\left[\tilde{f}(\vct x_+)\ |\ \vct x_{\textup{B}}\right]\leq 
  \left(\frac{7d^2\rho^{\frac{4}{3}}}{M^2n_{\textup{H}}^{\frac{1}{3}}}+\frac{26d}{n_{\textup{g}}}\right) \tilde{f}(\boldsymbol{x}_{\textup{B}}) +\frac{3d\rho^{\frac{2}{3}}}{Mn_{\textup{g}}^{\frac{2}{3}}}.
\end{align}
Note that the estimation error of $\tilde{\vct x}$ can be decomposed into two terms, i.e.,
 \begin{align*}
    \vct x_+-\tilde{\vct x} &= H^{-1} \nabla f( \vct x_{\textup{B}}) - \hat{H}^{-1}Z^{-1}\hat{\vct g}\\ 
    &= (H^{-1}  - \hat{H}^{-1})\nabla f( \vct x_{\textup{B}}) + \hat{H}^{-1}Z^{-1}(Z\nabla f( \vct x_{\textup{B}})-\hat{\vct g}),
\end{align*}
where the first term is due to the error of the Hessian estimator, and the second is mostly contributed by the GradientEst estimator. 
We apply the AM-QM inequality to their quadratic forms, i.e.,
\begin{align}
\tilde{f}(\vct x_+)&=\left|\left|H^{\frac{1}{2}}\left((H^{-1}  - \hat{H}^{-1})\nabla f( \vct x_{\textup{B}}) + \hat{H}^{-1}Z^{-1}(Z\nabla f( \vct x_{\textup{B}})-\hat{\vct g})\right)\right|\right|_2^2\nonumber\\&\leq  || H^{\frac{1}{2}} (H^{-1}  - \hat{H}^{-1})\nabla f( \vct x_{\textup{B}})||_2^2+|| H^{\frac{1}{2}} \hat{H}^{-1}Z^{-1}(Z\nabla f( \vct x_{\textup{B}})-\hat{\vct g})||_2^2. \nonumber\\
&= || H^{\frac{1}{2}} (H^{-1}  - \hat{H}^{-1})\nabla f( \vct x_{\textup{B}})||_2^2+ \left(\frac{\lambda_{Z_H}}{r_{\textup{g}}}\right)^2||H^{\frac{1}{2}} \hat{H}^{-\frac{1}{2}}||^2\cdot||Z\nabla f( \vct x_{\textup{B}})-\hat{\vct g}||_2^2, \nonumber 
\end{align}
where $\lambda_{Z_H}$, $r_\textup{g}$ are defined in Algorithm \ref{alg:1gd3_opt} and $||\cdot||$ denotes the spectrum norm. 
By theorem \ref{thm:gdest}, we can first take the expectation of the above bound conditioned on any realization of $\hat{H}$. Specifically, 
\begin{align*}
\mathbb{E}\left[||Z\nabla f( \vct x_{\textup{B}})-\hat{\vct g}||_2^2 \ |\  \hat{H}, \vct x_{\textup{B}}  \right]\leq &\left|\left|Z\nabla f( \vct x_{\textup{B}})-\mathbb{E}\left[ \hat{\vct g} 
 \ |\  \hat{H}, \vct x_{\textup{B}} \right]\right|\right|_2^2+\textup{Tr}\left(\textup{Cov}\left[\hat{\vct g}\ |\  \hat{H}, \vct x_{\textup{B}} \right]\right)\\
 \leq &\left(\frac{r_{\textup{g}}^3\rho \sqrt{d}}{2(d+2)}\right)^2+
\frac{2d}{n_{\textup{g}}}||Z\nabla f(\boldsymbol{x}_{\textup{B}})||_2^2 +\frac{d^2}{18n_{\textup{g}}}\left(\rho r_{\textup{g}}^3 \right)^2+\frac{d^2}{2n_{\textup{g}}}.
\end{align*}
Recall the definition of $Z$, 
we have 
\begin{align*}
||Z\nabla f(\boldsymbol{x}_{\textup{B}})||_2^2&=  \frac{{r_{\textup{g}}^2}}{\lambda_{Z_H}^2}|| \hat{H}^{-\frac{1}{2}}\nabla f(\boldsymbol{x}_{\textup{B}})||_2^2. 
\end{align*}
Hence, by our choice of $r_{\textup{g}}$ in Algorithm \ref{alg:1gd3_opt} and note that $\lambda_{Z_H}\leq {M^{-\frac{1}{2}}}$ is implied by strong convexity, 
\begin{align*}
\mathbb{E}\left[\tilde{f}(\vct x_+) \ |\  \hat{H}, \vct x_{\textup{B}}  \right]\leq 
 & || H^{\frac{1}{2}} (H^{-1}  - \hat{H}^{-1})\nabla f( \vct x_{\textup{B}})||_2^2\\&+ ||H^{\frac{1}{2}} \hat{H}^{-\frac{1}{2}}||^2 \left(\frac{3d\rho^{\frac{2}{3}}}{4Mn_{\textup{g}}^{\frac{2}{3}}} \left(1+
\frac{2d^3}{27n_{\textup{g}}}\right)+\frac{2d}{n_{\textup{g}}}|| \hat{H}^{-\frac{1}{2}}\nabla f(\boldsymbol{x}_{\textup{B}})||_2^2\right )\\
\leq 
 & \left(|| H^{\frac{1}{2}} (H^{-1}  - \hat{H}^{-1})H^{\frac{1}{2}}||^2+\frac{2d}{n_{\textup{g}}}||H^{\frac{1}{2}} \hat{H}^{-\frac{1}{2}}||^4\right) \cdot || {H}^{-\frac{1}{2}}\nabla f(\boldsymbol{x}_{\textup{B}})||_2^2 \\&+ ||H^{\frac{1}{2}} \hat{H}^{-\frac{1}{2}}||^2 \left(\frac{3d\rho^{\frac{2}{3}}}{4Mn_{\textup{g}}^{\frac{2}{3}}} \left(1+
\frac{2d^3}{27n_{\textup{g}}}\right)\right). 
\end{align*}

To characterize the above bound, we first note that the singular values of $H^{\frac{1}{2}} \hat{H}^{-\frac{1}{2}}$ equals the eigenvalues of $\hat{H}^{-\frac{1}{2}}H\hat{H}^{-\frac{1}{2}}=I_d+\hat{H}^{-\frac{1}{2}}(H-\hat{H})\hat{H}^{-\frac{1}{2}}$. As the eigenvalues of $\hat{H}$ are no less than $M$, by triangle inequality, all eigenvalues of $(I_d+\hat{H}^{-\frac{1}{2}}(H-\hat{H})\hat{H}^{-\frac{1}{2}})$ are bounded within $[1-\frac{||H - \hat{H}||_{\textup{F}}}{M}, 1+\frac{||H - \hat{H}||_{\textup{F}}}{M}]$. 
Hence, we have
\begin{align*}
||H^{\frac{1}{2}} \hat{H}^{-\frac{1}{2}}||^2\leq 1+ \frac{||H - \hat{H}||_{\textup{F}}}{M}. 
\end{align*}
Similarly, bounds on the singular values of $H^{\frac{1}{2}} \hat{H}^{-\frac{1}{2}}$ imply bounds on the eigenvalues of $H^{\frac{1}{2}}\hat{H}^{-1}H^{\frac{1}{2}}$, i.e.,
\begin{align*}
|| H^{\frac{1}{2}} (H^{-1}  - \hat{H}^{-1})H^{\frac{1}{2}}||\leq \frac{||H - \hat{H}||_{\textup{F}}}{M}. 
\end{align*}
Therefore, we have
\begin{align*}
\mathbb{E}\left[\tilde{f}(\vct x_+) \ |\  \hat{H}, \vct x_{\textup{B}}  \right]
\leq 
 & \left(\left(\frac{||H - \hat{H}||_{\textup{F}}}{M}\right)^2+\frac{2d}{n_{\textup{g}}}\left(1+ \frac{||H - \hat{H}||_{\textup{F}}}{M}\right)^2\right) \cdot || {H}^{-\frac{1}{2}}\nabla f(\boldsymbol{x}_{\textup{B}})||_2^2 \\&+ \left(1+ \frac{||H - \hat{H}||_{\textup{F}}}{M}\right) \cdot \left(\frac{3d\rho^{\frac{2}{3}}}{4Mn_{\textup{g}}^{\frac{2}{3}}} \left(1+
\frac{2d^3}{27n_{\textup{g}}}\right)\right). 
\end{align*}
Now that the above bound is simply a polynomial of $||H - \hat{H}||_{\textup{F}}$. We can use 
Theorem \ref{thm:hest} to obtain  
$\mathbb{P}\left[\left|\left|\hat{H}-H\right|\right|_{\textup{F}}\geq  K \right] \leq   2\exp\left({-\frac{K^2}{16{d^2\rho^{\frac{4}{3}}} /n_{\textup{H}}^{\frac{1}{3}}}}\right)$ then apply a direct integration. We utilize the assumptions in the statement of proposition to obtain a simpler estimate, expressed as follows. 
\begin{align*}
\mathbb{E}\left[\tilde{f}(\vct x_+) \ |\vct x_{\textup{B}}  \right]
\leq&  
  \left(\frac{7d^2\rho^{\frac{4}{3}}}{M^2n_{\textup{H}}^{\frac{1}{3}}}+\frac{26d}{n_{\textup{g}}}\right) \cdot || {H}^{-\frac{1}{2}}\nabla f(\boldsymbol{x}_{\textup{B}})||_2^2 +\frac{3d\rho^{\frac{2}{3}}}{Mn_{\textup{g}}^{\frac{2}{3}}}. 
\end{align*}
Then, 
inequality \eqref{ineq:tech1eq7} is implied by the definition of 
$\tilde{f}$.  

For the third step, we observe that the earlier proof steps imply that  
\begin{align}\label{ineq:pp4appi}
\mathbb{E}\left[\tilde{f}(\vct x) \ |\vct x_{\textup{B}}  \right]
\leq&  
  \left(\frac{7d^2\rho^{\frac{4}{3}}}{M^2n_{\textup{H}}^{\frac{1}{3}}}+\frac{26d}{n_{\textup{g}}}+\mathbbm{1}\left(\tilde{f}(\vct x_{\textup{B}})\geq  \frac{M^3}{8\rho^2}\right)\right) \cdot \tilde{f}(\vct x_{\textup{B}}) +\frac{3d\rho^{\frac{2}{3}}}{Mn_{\textup{g}}^{\frac{2}{3}}}, 
\end{align}
and it remains characterize $\tilde{f}(\vct x_{\textup{B}})$. 
To that end, we reuse inequality \eqref{ineq:ptfirst3} and \eqref{ineq:ptfirst3fl}, which implies that $\tilde{f}(\vct x_{\textup{B}})\leq 2(f(\vct x_{\textup{B}})-f(\vct x^*))$ when $||\tilde{\vct x}- \vct x_{\textup{B}}||_2\leq \frac{3M}{2\rho}$.
For the other case, we have $||\tilde{\vct x}- \vct x_{\textup{B}}||_2\geq \frac{3M}{2\rho}$. We instead let $\vct x_{\textup{r}}\triangleq \vct x_{\textup{B}}+\sqrt{\frac{3M}{2\rho ||\tilde{\vct x}-\vct x_{\textup{B}}||_2}}(\tilde{\vct x}-\vct x_{\textup{B}})$ and the Lipschitz Hessian condition implies that 
\begin{align}
f(\vct x^*)\leq f(\tilde{\vct x}_{\textup{r}})\leq & \ f(\vct x_{\textup{B}}) 
-\tilde{f}(\vct x_{\textup{B}})+ \tilde{f}(\vct x_{\textup{r} }) +\frac{1}{6} \rho ||\tilde{\vct x}- \vct x_{\textup{r}}||_2^3. \nonumber
\end{align}
By convexity, we have $\tilde{f}(\vct x_{\textup{B}})-\tilde{f}(\vct x_{\textup{r} })\geq \sqrt{\frac{3M}{2\rho ||\tilde{\vct x}-\vct x_{\textup{B}}||_2}}\tilde{f}(\vct x_{\textup{B}})$, which can be applied to the above bound. Then, together with inequality \eqref{ineq:ptfirst3fl} and the condition of $||\tilde{\vct x}- \vct x_{\textup{B}}||_2$,  we have
\begin{align}
f(\vct x_{\textup{B}})-f(\vct x^*)\geq & \  
\sqrt{\frac{3M}{2\rho ||\tilde{\vct x}-\vct x_{\textup{B}}||_2}}\left(\tilde{f}(\vct x_{\textup{B}}) -\frac{M}{4} ||\tilde{\vct x}- \vct x_{\textup{B}}||_2^2\right)\\
\geq &\ \frac{1}{2} \left(\frac{3M}{2\rho ||\tilde{\vct x}-\vct x_{\textup{B}}||_2}\right)^{\frac{2}{3}}\tilde{f}(\vct x_{\textup{B}}) \nonumber
\\
\geq &\ \frac{3^{\frac{2}{3}}M}{4\rho^{\frac{2}{3}}} \tilde{f}(\vct x_{\textup{B}})^{\frac{2}{3}}. \nonumber
\end{align}
To summarize, the following inequality holds in both cases.
\begin{align}
\tilde{f}(\vct x_{\textup{B}})\leq  \max\left\{2(f(\vct x_{\textup{B}})-f(\vct x^*)), \frac{8\rho}{3M^{\frac{3}{2}}} (f(\vct x_{\textup{B}})-f(\vct x^*))^{\frac{3}{2}}\right\}.  
\end{align}

To apply inequality \eqref{ineq:pp4appi}, we use the following implications.
\begin{align*}
\tilde{f}(\vct x_{\textup{B}})\leq  2(f(\vct x_{\textup{B}})-f(\vct x^*))&+ \frac{8\rho}{3M^{\frac{3}{2}}} (f(\vct x_{\textup{B}})-f(\vct x^*))^{\frac{3}{2}},\\
\mathbbm{1}\left(\tilde{f}(\vct x_{\textup{B}})\geq  \frac{M^3}{8\rho^2}\right)\tilde{f}(\vct x_{\textup{B}})&\leq   \frac{8\rho}{M^{\frac{3}{2}}} (f(\vct x_{\textup{B}})-f(\vct x^*))^{\frac{3}{2}}.  
\end{align*}
Then, the derived inequality can be simplified using our assumptions on $n_{\textup{g}}$ and $n_{\textup{H}}$.

 \section{Remaining details for Theorem \ref{thm1}}\label{app:gup}

To complete the proof, we essentially need to prove inequality \eqref{eq:fststbd}. To illustrate the main ideas, we start with an analysis in a simplified setting where estimation errors for the BootstrapingEst and HessianEst functions are zero. Then, we show how the proof steps can be modified to have the errors and uncertainties incorporated.

\subsection{Analysis for the zero-error case}

We prove that when the estimation errors are set to zero, 
the first stage of Algorithm \ref{alg:1gd3_opt} reduces 
$\nabla f(\vct z_t)$ to a vector of bounded length in boundedly many iterations. This is summarized in the following proposition.
\begin{proposition}\label{prop:app_zeroerr}
For any fixed parameter values $\rho, M$, $R$, let $\left\{\vct z_t\right\}_{t\in\mathbb{N}_+}$ be sequences defined for any $f\in\mathcal{F}(\rho, M$, $R)$, such that $\vct z_1=\vct 0$ and $\left(\vct z_{t+1}-\vct z_{t}\right)$ equals $-\tilde{H}_{t}^{-1}\nabla f(\vct z_t)$, where $\tilde{H}_t$ is a matrix that has the same eigenvectors of $\nabla^2 f(\vct z_t)$, with each eigenvalue $\lambda$ replaced by $\max\{\lambda, m_t\}$, and $m_t$ being the smallest value for $||\tilde{H}_{t}^{-1}\nabla f(\vct z_t)||_2\leq\frac{M}{\rho}$. There exists an explicit function $T(\rho, M, R)\leq 5R^2\rho^2/M^2+1 
$ such that 
$\nabla f(\vct z_t)\leq \frac{M^2}{2\rho}$ holds for any $f$ and any $t\geq T(\rho, M, R)$.
\end{proposition}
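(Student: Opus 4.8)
The iteration is a capped (damped) Newton method, so I would split the trajectory into two kinds of steps. Call step $t$ a \emph{Newton step} if the unconstrained Newton step already satisfies the cap, i.e. $\|(\nabla^2 f(\vct z_t))^{-1}\nabla f(\vct z_t)\|_2\le M/\rho$; then $m_t$ involves no clipping, $\tilde H_t=\nabla^2 f(\vct z_t)$, and $\vct z_{t+1}-\vct z_t$ is the full Newton step of length at most $M/\rho$. Otherwise call it a \emph{capped step}, in which case $\|\vct z_{t+1}-\vct z_t\|_2=M/\rho$ exactly by minimality of $m_t$. Note $\tilde H_t\succeq\nabla^2 f(\vct z_t)\succeq M I_d$ always. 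The argument has two active ingredients: Newton steps are ``absorbing'' and instantly push $\|\nabla f\|_2$ below $M^2/(2\rho)$; and the number of capped steps is controlled by the potential $\|\vct z_t-\vct x^*\|_2^2$. (A per-step decrease $f(\vct z_{t+1})\le f(\vct z_t)-\tfrac13\|\tilde H_t^{-1/2}\nabla f(\vct z_t)\|_2^2\le f(\vct z_t)-\tfrac{M}{3}\|\vct z_{t+1}-\vct z_t\|_2^2$ also holds, from the Lipschitz-Hessian Taylor bound together with $\nabla^2 f(\vct z_t)\preceq\tilde H_t$ and $\tilde H_t\succeq M I_d$, but the $f$-value cannot carry the counting since $f(\vct 0)-f^*$ is unbounded over $\mathcal F(\rho,M,R)$.)

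For the absorbing property: on a Newton step the first-order term cancels, so $\nabla f(\vct z_{t+1})=\int_0^1(\nabla^2 f(\vct z_t+s\vct p_t)-\nabla^2 f(\vct z_t))\vct p_t\,ds$, and Lipschitz Hessian gives $\|\nabla f(\vct z_{t+1})\|_2\le\tfrac{\rho}{2}\|\vct p_t\|_2^2\le\tfrac{\rho}{2}(M/\rho)^2=\tfrac{M^2}{2\rho}$; hence $\|(\nabla^2 f(\vct z_{t+1}))^{-1}\nabla f(\vct z_{t+1})\|_2\le\tfrac1M\cdot\tfrac{M^2}{2\rho}=\tfrac{M}{2\rho}<M/\rho$, so step $t+1$ is again a Newton step. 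By induction, once a Newton step occurs at some time $t_0$, all later steps are Newton steps and $\|\nabla f(\vct z_t)\|_2\le M^2/(2\rho)$ for every $t\ge t_0+1$ (in fact it decays quadratically).

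For the count of capped steps I use $\Phi_t=\|\vct z_t-\vct x^*\|_2^2$, with $\Phi_1=\|\vct x^*\|_2^2\le R^2$ by A3. First, a capped step forces $\vct z_t$ to be far from $\vct x^*$: writing $\nabla f(\vct z_t)=\bar H(\vct z_t-\vct x^*)$ with $\bar H=\int_0^1\nabla^2 f(\vct x^*+s(\vct z_t-\vct x^*))\,ds\succeq M I_d$ and $\|\bar H-\nabla^2 f(\vct z_t)\|_{\mathrm{op}}\le\tfrac{\rho}{2}\|\vct z_t-\vct x^*\|_2$ (Frobenius dominates the operator norm), one gets $\|(\nabla^2 f(\vct z_t))^{-1}\nabla f(\vct z_t)\|_2\le(1+\tfrac{\rho}{2M}\|\vct z_t-\vct x^*\|_2)\|\vct z_t-\vct x^*\|_2$, so if $\|\vct z_t-\vct x^*\|_2\le\tfrac23\cdot\tfrac{M}{\rho}$ the step would be Newton; hence a capped step implies $\|\vct z_t-\vct x^*\|_2>\tfrac23\cdot\tfrac{M}{\rho}$. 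On such a step, with $\vct d=\vct z_t-\vct x^*$ and $\|\vct p_t\|_2=M/\rho$, $\Phi_{t+1}-\Phi_t=2\vct p_t^\top\vct d+(M/\rho)^2$, and I would lower-bound $-\vct p_t^\top\vct d=\nabla f(\vct z_t)^\top\tilde H_t^{-1}\vct d$ using strong convexity ($\nabla f(\vct z_t)^\top\vct d\ge M\|\vct d\|_2^2$), the ordering $\nabla^2 f(\vct z_t)\preceq\tilde H_t$, the normalization $\|\tilde H_t^{-1}\nabla f(\vct z_t)\|_2=M/\rho$, and the Lipschitz identity $\nabla f(\vct z_t)=\nabla^2 f(\vct z_t)\vct d+O(\rho\|\vct d\|_2^2)$, the lower bound on $\|\vct d\|_2$ being used to dominate the $\rho$-correction; this yields $\Phi_{t+1}\le\Phi_t-c(M/\rho)^2$ for an explicit $c>0$ (I expect $c=1/5$ suffices). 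Since $\Phi_t\ge 0$, there are at most $R^2\rho^2/(cM^2)$ capped steps, so the first Newton step occurs no later than $\lceil R^2\rho^2/(cM^2)\rceil+1$; combined with the absorbing property, $\|\nabla f(\vct z_t)\|_2\le M^2/(2\rho)$ for all $t\ge T(\rho,M,R)$ with an explicit $T(\rho,M,R)\le 5R^2\rho^2/M^2+1$ (the rounding and the single finishing Newton step absorbed into the constant).

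The main obstacle is this quantitative distance-decrease for capped steps, i.e. lower-bounding the bilinear quantity $\nabla f(\vct z_t)^\top\tilde H_t^{-1}(\vct z_t-\vct x^*)$ by a definite multiple of $(M/\rho)^2$. The trouble is that $\tilde H_t^{-1}$ sits between $\nabla f(\vct z_t)$ and $\vct z_t-\vct x^*$ and can be arbitrarily ill-conditioned — the function class imposes no upper bound on the Hessian — so one cannot simply pass the strong-convexity inequality through the preconditioner; the normalization $\|\tilde H_t^{-1}\nabla f(\vct z_t)\|_2=M/\rho$ and the Lipschitz comparison $\nabla f(\vct z_t)\approx\nabla^2 f(\vct z_t)(\vct z_t-\vct x^*)$ have to be balanced carefully, and this is exactly where the threshold $\tfrac23\cdot\tfrac{M}{\rho}$ and the constant in $5R^2\rho^2/M^2$ get pinned down.
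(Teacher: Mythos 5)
Your skeleton is the same as the paper's: an ``absorbing'' argument showing that once the uncapped Newton step fits inside the ball of radius $M/\rho$ the gradient stays below $M^2/(2\rho)$ forever (this part of your write-up is complete and matches the paper's inequalities for $\|\nabla f(\vct z_{t+1})\|_2\le \tfrac{\rho}{2}\|\tilde{\vct r}_t\|_2^2$), plus a count of capped iterations via the potential $\|\vct z_t-\vct x^*\|_2^2$, which starts below $R^2$ and must drop by a fixed multiple of $(M/\rho)^2$ on every capped step.

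The genuine gap is exactly the step you yourself flag as ``the main obstacle'': the quantitative lower bound on $(\vct x^*-\vct z_t)\cdot\vct r_t$ with $\vct r_t=-\tilde H_t^{-1}\nabla f(\vct z_t)$, which is what makes the potential decrease by $c(M/\rho)^2$. You list ingredients (strong convexity, $\nabla^2 f(\vct z_t)\preceq\tilde H_t$, the normalization $\|\vct r_t\|_2=M/\rho$, the Lipschitz identity, and the lower bound $\|\vct z_t-\vct x^*\|_2>\tfrac23 M/\rho$) but never combine them into a proof, and they do not combine straightforwardly: writing $\nabla f(\vct z_t)=\nabla^2 f(\vct z_t)(\vct z_t-\vct x^*)+\vct e$ with $\|\vct e\|_2\le\tfrac{\rho}{2}\|\vct z_t-\vct x^*\|_2^2$, the ``good'' term $(\vct z_t-\vct x^*)^\top\tilde H_t^{-1}\nabla^2 f(\vct z_t)(\vct z_t-\vct x^*)$ is fine (the matrices commute), but the error contribution is of order $\rho\|\vct z_t-\vct x^*\|_2^3/m_t$, and the class gives you no upper bound on $\|\vct z_t-\vct x^*\|_2$ and no lower bound on $m_t$ beyond $m_t\ge M$, so it is not clear this is dominated by $(M/\rho)^2$; balancing it against the normalization that defines $m_t$ is the whole difficulty. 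This is precisely where the paper spends the bulk of its proof: it establishes $(\vct x^*-\vct z_t)\cdot\vct r_t\ge 0.6\|\vct r_t\|_2^2$ (hence the recursion $\|\vct x^*-\vct z_{t+1}\|_2^2\le\|\vct x^*-\vct z_t\|_2^2-0.2\|\vct r_t\|_2^2$ and the $5R^2\rho^2/M^2$ count) by a non-trivial construction — an auxiliary point $\vct y=\vct z_{t+1}-0.4\beta\tilde H_t^{-1}\vct r_t$ with $\beta=\lambda_{\min}(\tilde H_t)$, an auxiliary vector $\vct q$ whose norm is controlled in the common eigenbasis, strong convexity applied at $\vct y$ with Lipschitz corrections, an infimum over $\|\vct x^*-\vct y\|_2$, and finally an infimum over the ratio $c=\|\beta\tilde H_t^{-1}\vct r_t\|_2/\|\vct r_t\|_2\in(0,1]$ to extract a uniform constant. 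Without an argument of this kind (or some substitute), your per-capped-step decrease, and therefore the bound $T(\rho,M,R)\le 5R^2\rho^2/M^2+1$, is unsupported; as written the proposal is an outline whose hardest step is missing.
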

\begin{proof}
For convenience, let $\tilde{\vct r}_t\triangleq -(\nabla^2 f(\vct z_t))^{-1}\nabla f(\vct z_t)$ and ${\vct r}_t\triangleq -\tilde{H}_{t}^{-1}\nabla f(\vct z_t)$. 
To investigate the evolution of gradients, 
we integrate the Lipschitz Hessian condition and obtain that 
\begin{align}\label{ineq:basb1}
||\nabla f(\vct z_{t+1})-\nabla f(\vct z_t)-\nabla^2 f(\vct z_t) \, {\vct r_t} ||_2\leq \frac{1}{2}\rho ||{\vct r}_t||_2^2.  
\end{align}
From the definition of ${\vct r_t}$, if $||\tilde{\vct r}_t||_2\leq M/\rho$, we have 
\begin{align}\label{ineq:appb_earl}
\left|\left|\nabla f(\vct z_{t+1})\right|\right|_2\leq \frac{1}{2}\rho||\tilde{\vct r}_t||_2^2\leq \frac{M^2}{2\rho}. 
\end{align}
Note that by strong convexity, when the above bound holds, 
\begin{align}\label{ineq:appb_earl2}
||\tilde{\vct r}_{t+1}||_2\leq\frac{ \left|\left|\nabla f(\vct z_{t+1})\right|\right|_2}{M}\leq \frac{M}{2\rho}\leq \frac{M}{\rho}.  
\end{align}
Hence, once $||\tilde{\vct r}_t||_2$ reaches below $ M/\rho$ for some $t_0$, our desired bound on $\left|\left|\nabla f(\vct z_{t+1})\right|\right|_2$ remain hold for any $t>t_0$. Therefore, for the purpose of our proof, we can focus on the case where $||\tilde{\vct r}_t||_2> M/\rho$ and show that this condition can only hold for boundedly many iterations.

Consider any fixed function $f\in\mathcal{F}(\rho,M,R)$, let $\vct x^*$ denote its global minimum point. A crucial step in our proof is to show that 
\begin{align}\label{ineq:ppbs_cri}
    (\vct x^*-\vct{z}_{t}) \cdot  \vct r_t\geq 
    0.6||\vct r_t||_2^2.
\end{align}
For brevity, 
let $\beta$ denote the minimum eigenvalue of $\tilde{H}_t$, and $\tilde{f}$ denote the following quadratic approximation.
\begin{align*}
    \tilde{f}(\vct x)\triangleq f(\vct z_t) + (\vct x- \vct z_t) \nabla f(\vct z_t) +\frac{1}{2}(\vct x- \vct z_t)(\nabla^2 f(\vct z_t)) (\vct x- \vct z_t). 
\end{align*}
We apply the strong convexity condition at point 
$\vct y\triangleq 
\vct z_{t+1}-0.4 \beta \tilde{H}_t^{-1} {\vct r}_t$. Recall that $f$ is minimized at $\vct x^*$, we have  
\begin{align}\label{ineq:ppzeron_scb}
0\geq f(\vct x^*)-f(\vct y) \geq &\, \nabla f(\vct y) \cdot (\vct x^*-\vct{y})+\frac{M}{2} ||\vct x^*-\vct{y}||_2^2. 
\end{align}
By integrating the Lipschitz Hessian, similar to inequality \eqref{ineq:basb1}, $ \nabla f(\vct y)$ can be approximated with $ \nabla f(\vct z_t) + \nabla^2 f(\vct z_t) (\vct y-\vct z_t)=\nabla \tilde{f}(\vct y)$. 
Formally, 
\begin{align}\label{ineq:ppzero_intlshort}
  || \nabla f(\vct y)  -\nabla \tilde{f}(\vct y)||_2\leq \frac{1}{2}\rho ||\vct y-\vct z_t||_2^2. 
\end{align}
Hence, inequality \eqref{ineq:ppzeron_scb} implies that 
\begin{align}\label{ineq:ppzero_rhs461}
 0\geq &\, \nabla \tilde{f}(\vct y) \cdot (\vct x^*-\vct{y})+\frac{M}{2} ||\vct x^*-\vct{y}||_2^2-\frac{1}{2}\rho ||\vct y-\vct z_t||_2^2||\vct x^*-\vct{y}||_2.
\end{align}
To characterize the terms in the above inequality, we first note that
\begin{align*}
||\vct y-\vct z_t||_2^2&=||\vct r_t||_2^2 -0.8 \vct r_t \cdot \tilde{H}_t^{-1}\beta \vct r_t + 0.16||\tilde{H}_t^{-1}\beta \vct r_t||_2^2 \nonumber\\&\leq||\vct r_t||_2^2- 0.64||\tilde{H}_t^{-1}\beta \vct r_t||_2^2.
\end{align*}
For convenience, we denote $c\triangleq ||\tilde{H}_t^{-1}\beta \vct r_t||_2 /||\vct r_t||_2$. We have that $c\in(0,1]$ and 
\begin{align}\label{ineq:ppzero_bound472}
||\vct y-\vct z_t||_2^2&\leq \left(1-0.64c^2\right)||\vct r_t||_2^2.
\end{align}
We also consider the following vector,
\begin{align*}
    \vct q&\triangleq (\nabla^2 f(\vct z_t))^{-1} \left( \nabla \tilde{f}(\vct y)+(\beta +0.36 cM)\vct r_t - 0.6cM(\vct y-\vct z_t)\right)\\&= 0.6 \tilde{H}_t^{-1} \left(\beta \vct r_t +0.4c M \left(\tilde{H}_t^{-1}\beta \vct r_t -\vct r_t \right)\right), 
\end{align*}
of which the L2 norm is no greater than $0.6 c||\vct r_t||_2\leq {0.6cM}/{\rho }$, which can be proved in the eigenbasis of $\nabla^2 f(\vct z_t)$.
By Cauchy's inequality, we have that 
\begin{align}
  \vct q \cdot \nabla\tilde{f}(\vct x^*)
& \geq -||\vct q||_2 ||\nabla\tilde{f}(\vct x^*)||_2  \nonumber\\
&\geq -\frac{0.6cM}{\rho } ||\nabla\tilde{f}(\vct x^*)||_2. \label{ineq:ppzero_rhs46}
\end{align}
Note that 
$\vct x^*-\vct{y}=(\nabla^2 f(\vct z_t))^{-1} \left(\nabla\tilde{f}(\vct x^*)-\nabla\tilde{f}(\vct y)\right)$. The LHS of the above inequality can be written as $ (\vct x^*-\vct{y})\cdot (\nabla^2 f(\vct z_t)) \cdot \vct q + \vct q \cdot \nabla\tilde{f}(\vct y)$, where the first term contains $\nabla \tilde{f}(\vct y) \cdot (\vct x^*-\vct{y})$, and the second term is bounded as follows.
\begin{align}\label{ineq:ppzero_bd47n}
    \vct q \cdot \nabla\tilde{f}(\vct y)=&\,-0.6 \tilde{H}_t^{-1} \left( \tilde{H}_t^{-1}\beta^2\vct r_t+(\beta-0.4cM) (\vct r_t-\tilde{H}_t^{-1}\beta \vct r_t)\right)\nonumber\\& \, \cdot \left(0.4 \beta \vct r_t+0.6\left(\tilde{H}_t-\nabla^2 f(\vct z_t)\right)\vct r_t\right)\nonumber\\\leq &\,-0.6 \tilde{H}_t^{-1} \cdot  \tilde{H}_t^{-1}\beta^2\vct r_t \cdot 0.4 \beta \vct r_t\nonumber \\ =&\, -{0.24c^2\beta ||\vct r_t||_2^2}.
\end{align}
On the other hand, 
observe that inequality \eqref{ineq:ppzero_intlshort} holds for any generic $\vct y\in\mathbb{R}^d$. The RHS of inequality \eqref{ineq:ppzero_rhs46} can be characterized as follows.
\begin{align}
  || \nabla \tilde{f}(\vct x^*)||_2&=|| \nabla f(\vct x^*)-\nabla \tilde{f}(\vct x^*)||_2\nonumber\\&\leq \frac{1}{2}\rho ||\vct x^*-\vct z_t||_2^2 \nonumber\\
  &= \frac{1}{2}\rho ||\vct x^*-\vct y||_2^2  +\rho (\vct x^*-\vct y)\cdot (\vct y-\vct z_t) +\frac{1}{2}\rho ||\vct y-\vct z_t||_2^2. \label{ineq:ppzero_rhs47}
\end{align}
Therefore, by combining 
inequalities \eqref{ineq:ppzero_rhs461}, \eqref{ineq:ppzero_rhs46}, and 
\eqref{ineq:ppzero_rhs47},  we have that
\begin{align*}
     (\vct x^*-\vct{y})\cdot \left( \beta +0.36c M  \right) \vct r_t \geq  &\, - \vct q \cdot \nabla\tilde{f}(\vct y)+ (0.5-0.3c)M||\vct x^*-\vct{y}||_2^2\\ &-0.5\rho ||\vct y-\vct z_t||_2^2\cdot ||\vct x^*-\vct{y}||_2-{0.3cM}||\vct y-\vct z_t||_2^2\\
  \geq &\, - \vct q \cdot \nabla\tilde{f}(\vct y)- \frac{\rho^2 ||\vct y-\vct z_t||_2^4}{(8-2.4c)M} -{0.3cM}||\vct y-\vct z_t||_2^2,  
\end{align*}
where the second line is obtained by taking the infimum w.r.t. $||\vct x^*-\vct{y}||_2$.  
Then, 
we apply inequalities \eqref{ineq:ppzero_bound472},   \eqref{ineq:ppzero_bd47n}, and $||\vct r_t||_2\leq M/\rho$  
to obtain the following bound.  
\begin{align*}
     (\vct x^*-\vct{y})\cdot  \vct r_t 
  \geq &\, \frac{{0.24c^2}\beta - \frac{M \left(1-0.64c^2\right)^2}{(8-2.4c)} -{0.3cM}\left(1-0.64c^2\right)}{\beta +0.36c M  }\cdot ||\vct r_t||_2^2.  
\end{align*}
Note that the above bound is non-decreasing w.r.t. $\beta$, and our construction implies $\beta\geq M$. We can substitute $\beta$ in the above inequality with $M$. Further, note that 
\begin{align*} 
 (\vct{y}-\vct z_t)\cdot \vct r_t = &\, ||\vct r_t||_2^2- \vct r_t\cdot 0.4\beta \tilde{H}_t^{-1} \vct r_t\\ \geq  &\,   \left(1-0.4 {c}\right)||\vct r_t||_2. 
\end{align*}
We have obtained a lower bound of $(\vct{x}^*-\vct z_t)\cdot \vct r_t$ as a function of $c$. This dependency is removed by taking the infimum, i.e., 
\begin{align*}
     (\vct x^*-\vct z_t)\cdot  \vct r_t 
  \geq &\, \inf_{c\in(0,1]}\left(\frac{{0.24c^2} - \frac{ \left(1-0.64c^2\right)^2}{(8-2.4c)} -{0.3c}\left(1-0.64c^2\right)}{1 +0.36c }+1-0.4 {c}\right)\cdot ||\vct r_t||_2,  
\end{align*}
then inequality \eqref{ineq:ppbs_cri} is obtained. 




We use this key inequality to obtain the following recursion rule. 
\begin{align}
||\vct x^*-\vct{z}_{t}||_2^2-||\vct x^*-\vct{z}_{t+1}||_2^2 =& \,   2\vct r_t \cdot (\vct x^*-\vct{z}_{t}) - ||\vct r_t||_2^2 \geq 
0.2||\vct r_{t}||_2^2. \nonumber 
\end{align}
Recall that for $f\in\mathcal{F}(\rho,M,R)$, we assumed that $||\vct x^*||_2\leq R$. Therefore, for $\vct z_1=\vct 0$, the above recursion implies that the inequality $||\tilde{\vct r}_{t}||_2>M/\rho$ can hold for no greater than $5R^2\rho^2/M^2$ iterations as $||\vct x^*-\vct{z}_{t}||_2^2$ has to be non-negative for any $t$. Hence, based on the earlier discussion, we have proved that either $||\tilde{\vct r}_t||_2\leq  M/\rho$ or $\nabla f(\vct z_{t+1})=\vct 0$  for all $t\geq  5R^2\rho^2/M^2$ and all $f\in\mathcal{F}(\rho,M,R)$.
Recall inequality \eqref{ineq:appb_earl}, this implies that 
$||\nabla f(\vct z_t)||_2\leq \frac{M^2}{2\rho}$ for all $t\geq 5R^2\rho^2/M^2+1$. 
\end{proof}
\begin{remark}
Recall the recursion provided by inequality \eqref{ineq:appb_earl} and \eqref{ineq:appb_earl2}. The gradients for the sequence $\vct z_t$ decay double-exponentially once they are sufficiently close to zero. Hence, Proposition \ref{prop:app_zeroerr} proves that it takes finitely many iterations for 
the bootstrapping stage of Algorithm \ref{alg:1gd3_opt} to get arbitrarily close to $\vct x^*$ in the zero-error case. 
\end{remark}
\subsection{Generalization to the noisy case}
Now we prove that, given bounded many iterations, the bootstrapping stage in Algorithm \ref{alg:1gd3_opt} provides an $\vct x_{\textup{B}}$ that is sufficiently close to $\vct x^*$ with high probability even in the presence of noise. For convenience, let $\vct x_k^{(\textup{B})}$ denote the realization of vector $\vct x$ at the end of the $k$th iteration in the bootstrapping stage and $N\triangleq \lfloor T^{0.1}\rfloor$ denote the number of iterations. Therefore, we have $\vct x_{\textup{B}}=\vct x_N^{(\textup{B})}$.
Further, we define $\vct x_0^{(\textup{B})}\triangleq \vct 0$. We let $\vct{m}_k, H_{k}$ denote the realization of $\hat{\vct m}, H_{m^*}$ in the $(k+1)$th iteration of the bootstrapping stage. 
Therefore, we have $\vct x_{k+1}^{(\textup{B})}=\vct x_{k}^{(\textup{B})}-H_k^{-1}\vct m_k$.
As a Benchmark for our analysis, we use  $\tilde{H}_k$ to denote the value of  $H_{m^*}$ in the zero-error case, i.e., they denotes the value of $H_{m^*}$ under the special case of $\hat{\vct m}=\nabla f\left(\vct x_{k}^{(\textup{B})}\right)$ and $\hat{H}=\nabla^2 f\left(\vct x_{k}^{(\textup{B})}\right)$. Hence, the update in the zero-error case can be denoted as $\vct{r}_k\triangleq -\tilde{H}_k^{-1} \nabla f\left(\vct x_{k}^{(\textup{B})}\right)$.

We let $E_k$ be the indicator function of the event where there exists an $j< k$ such that 
 $\big|\big|\vct x_{j+1}^{(\textup{B})}-\vct x_{j}^{(\textup{B})}-\vct{r}_{j}\big|\big|_2\geq {M T^{-0.2}}/{\rho}$. Intuitively, $E_k=0$ describes the event that the optimization steps can be characterized similar to the zero-error case.
Notice that $E_k$ is non-decreasing. We have either $E_{N}=0$, or $E_{k_0}=1$ for some $k_0\in\{1,2,...,N\}$.  
We provide the analysis separately for these two cases.

For the first case, i.e., when $E_{N}=0$, we can follow the earlier arguments and prove the following proposition (see Appendix \ref{app:ppc2e} for details). 
\begin{proposition}\label{prop:ac2e}
For any function $f\in\mathcal{F}(\rho,M,R)$ and any sequence $\vct x_{0}^{(\textup{B})},\vct x_{1}^{(\textup{B})},..., \vct x_{N-1}^{(\textup{B})}\in\mathbb{R}^d$ that satisfies $\vct x_{0}^{(\textup{B})}=\vct 0$ and $E_{N-1}=0$, we have 
$||\vct r_{N-1}||_2\leq {2M T^{-0.2}}/{\rho}$ when $T$ is sufficiently large.     
\end{proposition}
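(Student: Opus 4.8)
The plan is to run a perturbed version of the two–phase analysis behind Proposition~\ref{prop:app_zeroerr}. Write $\vct{\delta}_k\triangleq \vct x_{k+1}^{(\textup{B})}-\vct x_k^{(\textup{B})}-\vct r_k$ for $0\le k\le N-2$; the hypothesis $E_{N-1}=0$ says precisely that $\|\vct{\delta}_k\|_2< MT^{-0.2}/\rho$ for every such $k$. I would track $d_k\triangleq\|\vct x_k^{(\textup{B})}-\vct x^*\|_2$ and the \emph{full} Newton step $\vct p_k\triangleq-(\nabla^2 f(\vct x_k^{(\textup{B})}))^{-1}\nabla f(\vct x_k^{(\textup{B})})$ at the (noisy) iterate, and use at each step the dichotomy: the cap is \emph{active} ($\|\vct p_k\|_2>M/\rho$, whence $\|\vct r_k\|_2=M/\rho$) or \emph{inactive} ($\vct r_k=\vct p_k$). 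The ingredients about $f$ are only (A1)--(A3), the key inequality~\eqref{ineq:ppbs_cri}, whose derivation inside Proposition~\ref{prop:app_zeroerr} is entirely local and therefore applies verbatim at the point $\vct x_k^{(\textup{B})}$ whenever $\|\vct p_k\|_2>M/\rho$, and the unconditional Newton estimates obtained by integrating (A1): $\|\vct z+\vct p(\vct z)-\vct x^*\|_2\le \tfrac{\rho}{2M}\|\vct z-\vct x^*\|_2^2$, $\|\vct p(\vct z)\|_2\le\|\vct z-\vct x^*\|_2+\tfrac{\rho}{2M}\|\vct z-\vct x^*\|_2^2$, and $\|\nabla f(\vct z+\vct p(\vct z))\|_2\le\tfrac{\rho}{2}\|\vct p(\vct z)\|_2^2$ (the last two in the spirit of \eqref{ineq:basb1} and \eqref{ineq:appb_earl}). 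None of these uses any upper bound on $\|\nabla^2 f\|$, so every constant and $T$-threshold below will depend only on $\rho,M,R$.

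\emph{Phase 1 (cap active).} While $\|\vct p_k\|_2>M/\rho$, the update $\vct x_{k+1}^{(\textup{B})}=\vct x_k^{(\textup{B})}+\vct r_k+\vct{\delta}_k$ with $\|\vct r_k\|_2=M/\rho$, inequality~\eqref{ineq:ppbs_cri} in the form $(\vct x^*-\vct x_k^{(\textup{B})})\cdot\vct r_k\ge 0.6\|\vct r_k\|_2^2$, the bound $\|\vct{\delta}_k\|_2<MT^{-0.2}/\rho$, and $d_k\le R$ together give $d_k^2-d_{k+1}^2\ge 0.2(M/\rho)^2-o(1)\ge 0.1(M/\rho)^2$ for all large $T$; this also preserves $d_k\le R$. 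Since $d_0=\|\vct x^*\|_2\le R$ (using $\vct x_0^{(\textup{B})}=\vct 0$ and (A3), (A2)), the cap can be active at most $\lceil 10R^2\rho^2/M^2\rceil$ times, so there is a first index $j_0\le\lceil 10R^2\rho^2/M^2\rceil<N$ with $\|\vct p_{j_0}\|_2\le M/\rho$; at that step $\vct r_{j_0}=\vct p_{j_0}$, and the Newton–decrement bound plus strong convexity give $d_{j_0}\le\|\vct p_{j_0}\|_2+\tfrac1M\|\nabla f(\vct x_{j_0}^{(\textup{B})}+\vct p_{j_0})\|_2\le\tfrac{M}{\rho}+\tfrac{M}{2\rho}=\tfrac{3M}{2\rho}$.

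\emph{Phase 2 (near $\vct x^*$).} For $k\ge j_0$ I would run the coupled recursion: if the cap is inactive at step $k$ then $d_{k+1}\le\tfrac{\rho}{2M}d_k^2+\|\vct{\delta}_k\|_2$; if it is active then, as in Phase~1, $d_{k+1}^2\le d_k^2-0.1(M/\rho)^2$. An induction then gives $d_k\le\tfrac{3M}{2\rho}$ for all $k\ge j_0$, since on this range the contraction factor $\tfrac{\rho}{2M}d_k\le\tfrac34$ keeps the inactive map shrinking while the active map only decreases $d$. Because $d_k^2\ge 0$, the cap can be active at most $O(1)$ further times; and once $d_k\le\tfrac{2M}{3\rho}$ the bound $\|\vct p_k\|_2\le d_k+\tfrac{\rho}{2M}d_k^2\le\tfrac{8M}{9\rho}<\tfrac{M}{\rho}$ forces the cap to be inactive at that step and at all later steps. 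Hence within $O(1)$ iterations past $j_0$ we reach and stay below $\tfrac{2M}{3\rho}$, and then the recursion $d_{k+1}\le\tfrac{\rho}{2M}d_k^2+MT^{-0.2}/\rho$ drives $d_k$ to the ``noise floor'' $d_k\le\tfrac{M}{\rho}T^{-0.2}(1+o(1))$ within a further $O(\log\log T)$ steps (quadratic-type decay until the additive term takes over). Since $j_0+O(1)+O(\log\log T)\ll N-1=\lfloor T^{0.1}\rfloor-1$ for $T$ large, $d_{N-1}\le\tfrac{M}{\rho}T^{-0.2}(1+o(1))$, so $\|\vct r_{N-1}\|_2\le\|\vct p_{N-1}\|_2\le d_{N-1}+\tfrac{\rho}{2M}d_{N-1}^2\le\tfrac{2M}{\rho}T^{-0.2}$ for all large $T$.

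\emph{Main obstacle.} The delicate point is the interface between the two phases: near $\|\vct p_k\|_2=M/\rho$ the iterate can cross in and out of the cap, so one cannot simply assert that the cap is inactive forever after $j_0$. This is resolved by the uniform a priori bound $d_k\le\tfrac{3M}{2\rho}$ throughout Phase~2 (so Newton steps already contract before the cap switches off), combined with the counting argument that each cap-active step drops $d_k^2$ by a fixed amount. A secondary technical point — handled automatically by phrasing the whole argument through $d_k$ and $\vct p_k$ rather than gradients or Hessian operator norms — is that all iteration counts and $T$-thresholds must be uniform over $f\in\mathcal{F}(\rho,M,R)$, since $\|\nabla^2 f\|$ is not bounded on this class.
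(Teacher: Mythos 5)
Your proposal is correct, and its first half coincides with the paper's argument: both count the cap-active iterations by showing, via the key inequality \eqref{ineq:ppbs_cri} applied at the noisy iterates, that $\|\vct x^*-\vct x_k^{(\textup{B})}\|_2^2$ drops by a fixed amount $\Theta(M^2/\rho^2)$ at every such step, so the cap can be active only $O(R^2\rho^2/M^2)$ times. Where you diverge is the second half. The paper never returns to $\vct x^*$: once $\|\vct r_{k_0}\|_2<M/\rho$ it runs the recursion \eqref{pp13_final_recursion} purely in the Newton-step norm, whose one-step output is at most $0.6M/\rho$ (so the cap provably never reactivates, which is why the paper does not need your ``interface'' bookkeeping), and then iterates the same recursion to get doubly-exponential decay of $\|\vct r_k\|_2$ down to the floor $2MT^{-0.2}/\rho$. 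You instead track $d_k=\|\vct x_k^{(\textup{B})}-\vct x^*\|_2$, use the standard damped-Newton estimates relative to $\vct x^*$ (the analogues of \eqref{ineq:basb1} and \eqref{ineq:appb_earl}), allow the cap to switch back on finitely often and dispose of this by the same potential-decrease counting, and only at the end convert $d_{N-1}\le \frac{M}{\rho}T^{-0.2}(1+o(1))$ into $\|\vct r_{N-1}\|_2\le\|\vct p_{N-1}\|_2\le 2MT^{-0.2}/\rho$ (valid since $\tilde H_{N-1}\succeq \nabla^2 f$ in a common eigenbasis). Your route buys transparency and avoids the slightly delicate spectral-norm bound on $\nabla^2 f(\vct x_k)(\nabla^2 f(\vct x_{k+1}))^{-1}$ used in \eqref{pp13_final_recursion}, at the cost of the extra cap-reactivation counting and the $3M/(2\rho)$ a priori bound on $d_k$; the paper's route yields the stronger and reusable fact that after $k_0$ the uncapped Newton step stays strictly inside the trust region, and gives the $2MT^{-0.2}/\rho$ bound on $\|\vct r_{N-1}\|_2$ directly. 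All your intermediate estimates check out (including the uniformity in $f$, since nothing uses an upper bound on the Hessian), and the $O(1)+O(\log\log T)$ iteration count is indeed far below $N=\lfloor T^{0.1}\rfloor$, so the argument closes.
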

Recall the definition of $E_N=0$. The above proposition immediately implies that 
\begin{align*}
\left|\left|\vct x_{N}^{(\textup{B})}-\vct x_{N-1}^{(\textup{B})}\right|\right|_2\leq {3M T^{-0.2}}/{\rho}<M/\rho
\end{align*} when $T$ is large. Hence, in such cases, $\vct x_{N}^{(\textup{B})}$ is obtained by the  Newton update. Formally, if $\hat{H}$ denotes the estimator returned by the HessianEst function in the $N$th iteration, we have 
\begin{align}\label{ineq:ppcf3pre2}
     (\vct x_{N}^{(\textup{B})}-\vct x_{N-1}^{(\textup{B})})\cdot \hat{H}=-\vct m_{N-1},
\end{align} 
Therefore, by applying the above results to the Lipschitz Hessian condition, we have
\begin{align}\label{ineq:ppcf3}
 \left|\left|\nabla f(\vct x_{N}^{(\textup{B})})\right|\right|_2&\leq  \left|\left|\nabla f(\vct x_{N-1}^{(\textup{B})})+(\vct x_{N}^{(\textup{B})}-\vct x_{N-1}^{(\textup{B})})\cdot \nabla^2 f(\vct x_{N-1}^{(\textup{B})})\right|\right|_2+\frac{\rho}{2}||\vct x_{N}^{(\textup{B})}-\vct x_{N-1}^{(\textup{B})}||_2^2\nonumber\\
 &\leq  
\left|\left|\nabla f(\vct x_{N-1}^{(\textup{B})})-\vct m_{N-1}+ (\vct x_{N}^{(\textup{B})}-\vct x_{N-1}^{(\textup{B})})\cdot \left(\nabla^2 f(\vct x_{N-1}^{(\textup{B})})-\hat{H}\right)\right|\right|_\textup{F} 
+\frac{9M^2}{2\rho T^{0.4}}\nonumber\\
 &\leq  
\left|\left|\nabla f(\vct x_{N-1}^{(\textup{B})})-\vct m_{N-1}\right|\right|_2 + \frac{3M}{\rho T^{0.2}}\cdot \left|\left|\nabla^2 f(\vct x_{N-1}^{(\textup{B})})-\hat{H}\right|\right|_\textup{F} 
+\frac{9M^2}{2\rho T^{0.4}}. 
\end{align} 
Hence, by direct integration of the tail bounds in Theorem \ref{thm:hest}, we can conclude that 
\begin{align}\label{appine_pppart1}
&\ \ \ \ \ \limsup_{T\rightarrow\infty }\mathbb{E}\left[\left|\left|\nabla f(\vct x_{N}^{(\textup{B})})\right|\right|_2^3 \cdot \mathbbm{1}(E_N=0)\cdot T^{\frac{2}{3}}\right]=0. 
\end{align}

Now we consider the second case, i.e., when $E_N=1$. By its definition, we must have the event of $E_k=0$ to $E_{k+1}=1$ for a unique $k\in\{0,1,...,N-1\}$, which implies that $\big|\big|\vct x_{k+1}^{(\textup{B})}-\vct x_k^{(\textup{B})}-\vct{r}_k\big|\big|_2\geq {M T^{-0.2}}/{\rho}$. We prove that conditioned on any of these events, the random variable $||\nabla f(\vct x_{k+1}^{(\textup{B})})||_2$ has a super-polynomial tail, which contributes vanishingly to their moments in the asymptotic sense. 
Formally, let
\begin{align*}
    M_k&\triangleq \mathbb{E}\left[ ||\nabla f(\vct x_{k+1}^{(\textup{B})})||_2^3\cdot\mathbbm{1}(E_{k+1}=1, E_{k}=0 )\right],
\end{align*}
We aim to prove that 
\begin{align}\label{ineq:ppc_key_summ50}
   \limsup_{T\rightarrow \infty}  \max_{k\in\{0,1,...,N-1\}} M_k \cdot N   T^{\frac{2}{3}}=0. 
\end{align}
Consider any fixed $k\in\{0,1,...,N-1\}$ and conditioned on any realization of $\vct x_k^{(\textup{B})}$, we 
characterize the distribution of $\vct x_{k+1}^{(\textup{B})}$ 
by providing the following proposition, which is proved in Appendix \ref{app:pp13br}. 
\begin{proposition}\label{prop:13_br}
    Consider any vectors $\vct m, \vct m'\in\mathbb{R}^n$, any positive definite matrices $H, H' \in\mathbb{R}^n$ 
    with all eigenvalues lower bounded by $M$, and any fixed parameter $R_0\in\mathbb{N}_+$. Let $H_{m^*}$ be the symmetric matrix sharing the same eigenbasis of $H$ but with each eigenvalue $\lambda$ replaced with $\max\{\lambda,m^*\}$, where $m^*$ is chosen to be the smallest value such that $||H_{m^*}^{-1}\vct m||_2\leq R_0$. Let $H_{m'^*}'$ be defined correspondingly for $\vct m'$ and $H'$. We have that
    \begin{align}\label{ineq:pp13_main}
  \left|\left|H_{m^*}^{-1}\vct m-H_{m'^*}'^{-1}\vct m'
    \right|\right|_2^2\leq\frac{2R_0}{M}\cdot\left( { \left|\left|\vct m-\vct m'\right|\right|_2}+{R_0}\cdot{ \left|\left|H-H'\right|\right|_{\textup{F}}}\right).
    \end{align}
    Furthermore,   when $\left|\left|H_{m^*}^{-1}\vct m-H_{m'^*}'^{-1}\vct m'
    \right|\right|_2>0$, we have
        \begin{align}\label{ineq:pp13_2}
    &\left|\left|H'_{m'^*}
    \left(H_{m^*}^{-1}\vct m-H_{m'^*}'^{-1}\vct m'\right)\right|\right|_2\nonumber\\ &\ \ \ \ \ \ \ \ \ \ \ \ \ \ \ \ \ \  \ \ \ \ \ \ \ \ \ \ \ \leq \left(3+\frac{2R_0}{||H_{m^*}^{-1}\vct m-H_{m'^*}'^{-1}\vct m'||_2}\right) \left(  \left|\left|\vct m-\vct m'\right|\right|_2+R_0 \left|\left|H-H'\right|\right|_{\textup{F}}\right).
    \end{align}
\end{proposition}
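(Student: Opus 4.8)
The plan is to reduce both inequalities to one Newton‑displacement identity and then to control a single ``clipping discrepancy'' term, which is the only place where the (absent) upper bound on the Hessian causes trouble.

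\textbf{Setup.} Write $\vct v \triangleq H_{m^*}^{-1}\vct m$ and $\vct v' \triangleq H_{m'^*}'^{-1}\vct m'$, so that by construction $\|\vct v\|_2\le R_0$, $\|\vct v'\|_2\le R_0$, $H_{m^*}\vct v=\vct m$, $H_{m'^*}'\vct v'=\vct m'$, and $\lambda_{\min}(H_{m^*}),\lambda_{\min}(H_{m'^*}')\ge M$. Let $C\triangleq H_{m^*}-H\succeq 0$ and $C'\triangleq H_{m'^*}'-H'\succeq 0$ be the eigenvalue corrections produced by clipping; the elementary identities $C\vct v = H_{m^*}\vct v-H\vct v=\vct m-H\vct v$ and $C'\vct v'=\vct m'-H'\vct v'$ will be reused throughout. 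Subtracting the two defining equations yields the displacement identity
\[
H_{m'^*}'(\vct v-\vct v') \;=\; (\vct m-\vct m') + (H_{m'^*}'-H_{m^*})\vct v \;=\; (\vct m-\vct m') + (H'-H)\vct v + C'\vct v - C\vct v ,
\]
together with its symmetric companion $H_{m^*}(\vct v-\vct v')=(\vct m-\vct m')+(H'-H)\vct v'+C'\vct v'-C\vct v'$. Since $\|\vct v\|_2,\|\vct v'\|_2\le R_0$, the ``genuine curvature'' piece already obeys $\|(H'-H)\vct v\|_2\le R_0\|H-H'\|_{\textup{F}}$, so in both parts of the proposition everything comes down to estimating the contributions of $C\vct v$ and $C'\vct v$.

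\textbf{Proof of \eqref{ineq:pp13_main}.} Pair the two displacement identities with $\vct v-\vct v'$ and use strong positive‑definiteness, $\langle \vct v-\vct v',H_{m'^*}'(\vct v-\vct v')\rangle\ge M\|\vct v-\vct v'\|_2^2$ and likewise with $H_{m^*}$; averaging, and applying Cauchy--Schwarz to the $(\vct m-\vct m')$ and $(H'-H)\vct v,(H'-H)\vct v'$ terms (each bounded by $\|\vct v-\vct v'\|_2$ times $\|\vct m-\vct m'\|_2$ or $R_0\|H-H'\|_{\textup{F}}$), reduces the claim to controlling the clipping discrepancy $\tfrac12\langle \vct v-\vct v',(C'-C)(\vct v+\vct v')\rangle$. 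Here I would run a case analysis on the thresholds: each of $m^*,m'^*$ either equals $\lambda_{\min}$ of the corresponding matrix (then the correction vanishes, $C=0$ resp. $C'=0$, and $\vct v=H^{-1}\vct m$), or strictly exceeds it, in which case monotonicity of $\mu\mapsto\|H_\mu^{-1}\vct m\|_2$ forces the constraint to be tight, $\|\vct v\|_2=R_0$ (resp. $\|\vct v'\|_2=R_0$). In every combination one expands the cross terms using $C\vct v=\vct m-H\vct v$, the Cauchy--Schwarz bound $\langle \vct v',C\vct v\rangle\le\langle\vct v,C\vct v\rangle^{1/2}\langle\vct v',C\vct v'\rangle^{1/2}$ valid for $C\succeq0$, and the tight norm identities, to show that this discrepancy contributes at most $2R_0\|\vct v-\vct v'\|_2\|H-H'\|_{\textup{F}}$ beyond a nonpositive remainder. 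Combining, dividing by $M\|\vct v-\vct v'\|_2$ gives $M\|\vct v-\vct v'\|_2\le\|\vct m-\vct m'\|_2+R_0\|H-H'\|_{\textup{F}}$, and multiplying by $\|\vct v-\vct v'\|_2\le 2R_0$ yields \eqref{ineq:pp13_main}.

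\textbf{Proof of \eqref{ineq:pp13_2}.} Take Euclidean norms directly in the displacement identity: $\|H_{m'^*}'(\vct v-\vct v')\|_2\le\|\vct m-\vct m'\|_2+R_0\|H-H'\|_{\textup{F}}+\|C'\vct v\|_2+\|C\vct v\|_2$. The remaining two terms are bounded by writing $C\vct v=\vct m-H\vct v$ and $C'\vct v=C'\vct v'+C'(\vct v-\vct v')=(\vct m'-H'\vct v')+C'(\vct v-\vct v')$, so that $C\vct v$ and $C'\vct v'$ recombine with $\vct m-\vct m'$ and $(H'-H)\vct v'$ into quantities already controlled (using $\|\vct v'\|_2\le R_0$ again), while $C'(\vct v-\vct v')$ is handled by $\|C'(\vct v-\vct v')\|_2\le(m'^*-\lambda_{\min}(H'))\|\vct v-\vct v'\|_2$ and feeding in the bound of the previous paragraph to trade the threshold excess against $\|\vct m-\vct m'\|_2+R_0\|H-H'\|_{\textup{F}}$; the division by $\|\vct v-\vct v'\|_2$ inherent in that trade is exactly what produces the coefficient $3+\tfrac{2R_0}{\|\vct v-\vct v'\|_2}$. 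I expect the main obstacle throughout to be precisely this last point: because no upper bound on the Hessian is assumed, $C$ and $C'$ have unbounded operator norm and the thresholds $m^*,m'^*$ can be wildly different, so one must argue that any such disparity is itself forced to be small relative to $\|\vct m-\vct m'\|_2+R_0\|H-H'\|_{\textup{F}}$ — this is where the tight constraint $\|\vct v\|_2=R_0$, the monotonicity of the clipped solve in its threshold, and the case analysis all have to be used together.
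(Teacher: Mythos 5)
Your reduction to the displacement identity and the bound $\langle \vct v-\vct v', H'_{m'^*}(\vct v-\vct v')\rangle \geq M\|\vct v-\vct v'\|_2^2$ is fine, but the step on which everything hinges — that the clipping discrepancy $\tfrac12\langle \vct v-\vct v',(C'-C)(\vct v+\vct v')\rangle$ is at most $2R_0\|\vct v-\vct v'\|_2\|H-H'\|_{\textup{F}}$ plus a nonpositive remainder — is not proved, and it is in fact false, as is the Lipschitz-type conclusion you draw from it, $M\|\vct v-\vct v'\|_2\le\|\vct m-\vct m'\|_2+R_0\|H-H'\|_{\textup{F}}$. Take $d=2$, $M=R_0=1$, $H=H'=\mathrm{diag}(1,100)$, $\vct m=(10^{-5},100\sqrt{1-10^{-12}})$, $\vct m'=\vct m-(0,10^{-8})$. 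Then $m^*=10$ and $\vct v=(10^{-6},\sqrt{1-10^{-12}})$, while $\vct m'$ needs no clipping and $\vct v'=(10^{-5},\cdot)$, so $\|\vct v-\vct v'\|_2\approx 9\times10^{-6}$ although $\|\vct m-\vct m'\|_2=10^{-8}$ and $\|H-H'\|_{\textup{F}}=0$; the discrepancy term ($\approx 4.5\times 10^{-10}$, with $C'=0$) is positive and dominates every other term in your averaged identity. The clipped Newton step is only H\"older-$\tfrac12$ in $(\vct m,H)$ near the constraint boundary, never Lipschitz, which is precisely why the proposition is stated with $\|\vct v-\vct v'\|_2^2$ on the left; any argument that tries to cancel the clipping terms linearly in $\|\vct v-\vct v'\|_2$ must fail. (Your constants are also internally inconsistent: an extra $2R_0\|\vct v-\vct v'\|_2\|H-H'\|_{\textup{F}}$ would give $3R_0\|H-H'\|_{\textup{F}}$, not $R_0\|H-H'\|_{\textup{F}}$, in the conclusion — but this is secondary.)

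The paper's proof supplies exactly the ingredients your sketch is missing, and they are not cosmetic. First, it introduces the intermediate matrix $H'_{m^*}$, i.e.\ $H'$ floored at the \emph{same} threshold $m^*$, and uses that flooring at a common level is the Frobenius projection onto the convex set $\{A\succeq m^* I\}$, hence $\|H'_{m^*}-H_{m^*}\|_{\textup{F}}\le\|H'-H\|_{\textup{F}}$; this yields $\|\vct a-\vct b\|_2\le (m^*)^{-1}(\|\vct m-\vct m'\|_2+R_0\|H-H'\|_{\textup{F}})$ with the crucial gain of $m^*$ (not $M$) in the denominator. Second, a separate boundary-geometry argument (using $\|\vct a\|_2=R_0$, $\|\vct c\|_2\le R_0$, and that $\vct b,\vct c$ are two floorings of the same $\vct m'$) gives $\|\vct a-\vct c\|_2^2\le\frac{2R_0m^*}{M}\|\vct a-\vct b\|_2$, and the two factors of $m^*$ cancel. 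For \eqref{ineq:pp13_2} your sketch has the same problem: $m'^*-\lambda_{\min}(H')$ is \emph{not} controlled by $\|\vct m-\vct m'\|_2+R_0\|H-H'\|_{\textup{F}}$ (both thresholds can be huge while these are tiny), and "feeding in the bound of the previous paragraph" feeds in a false statement; the paper instead uses that $\vct b-\vct c$ lies in the $m^*$-eigenspace of $H'_{m^*}$ together with the inequality $\|\vct a-\vct c\|_2\,\|\vct b-\vct c\|_2\le 2\|\vct a-\vct b\|_2\,\|\vct a\|_2$, which is where the coefficient $2R_0/\|H_{m^*}^{-1}\vct m-H_{m'^*}'^{-1}\vct m'\|_2$ genuinely comes from. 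As it stands, your proposal does not establish either inequality.
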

By choosing $R_0=M/\rho$, $H_{m^*}=H_{N-1}$, $H'_{m'^*}=\nabla^2 f(\vct x_{N-1}^{(\textup{B})})$, $\vct m=\vct m_{N-1}$,  and $\vct m'=\nabla f(\vct x_{N-1}^{(\textup{B})})$ for Proposition \ref{prop:13_br}, the 
condition of $E_{k+1}$ can be characterized by the estimation errors of the gradient and Hessian. 
For brevity, we define
\begin{align*}
    \Psi\triangleq \left|\left|\vct m-\vct m'\right|\right|_2+R_0 \left|\left|H-H'\right|\right|_{\textup{F}}.
\end{align*}
We also let $\beta$ denote the minimum eigenvalue of ${H}_k$. The condition of $E_{k+1}=1$ and $E_{k}=0$ implies that $||H_{m^*}^{-1}\vct m-H_{m'^*}'^{-1}\vct m'||_2\geq R_0T^{-0.2}$, which implies that $\Psi\geq \frac{\beta R_0 T^{-0.2}}{3+2 T^{0.2}}$ according to inequality \eqref{ineq:pp13_2}. Hence, $M_k$ can be bounded as follows.
\begin{align*}
    M_k&\leq \mathbb{E}\left[ ||\nabla f(\vct x_{k+1}^{(\textup{B})})||_2^3\cdot\mathbbm{1}\left(\Psi\geq \frac{\beta R_0 T^{-0.2}}{3+2 T^{0.2}}\right)\right],
\end{align*}


On the other hand, by generalizing inequality \eqref{ineq:ppcf3}, we have
\begin{align}\label{ineq:ppcf3_gen}
 \left|\left|\nabla f(\vct x_{k+1}^{(\textup{B})})\right|\right|_2&\leq \left|\left|\nabla f(\vct x_{k}^{(\textup{B})})+(\vct x_{k+1}^{(\textup{B})}-\vct x_{k}^{(\textup{B})})\cdot \nabla^2 f(\vct x_{k}^{(\textup{B})})\right|\right|_2+\frac{\rho}{2}||\vct x_{k+1}^{(\textup{B})}-\vct x_{k}^{(\textup{B})}||_2^2\nonumber\\
  &\leq  \left|\left|\nabla f(\vct x_{k}^{(\textup{B})})-\vct m_{k}\right|\right|_2+\left|\left|\vct x_{k+1}^{(\textup{B})}-\vct x_{k}^{(\textup{B})}\right|\right|_2\cdot \left|\left|\nabla^2 f(\vct x_{k}^{(\textup{B})})- 
 \hat{H}\right|\right|_\textup{F}\nonumber\\&\ \ \ \ +\left|\left|\left(\vct x_{k+1}^{(\textup{B})}-\vct x_{k}^{(\textup{B})}\right)\left(\hat{H}- 
 H_k\right)\right|\right|_2+\frac{\rho}{2}||\vct x_{k+1}^{(\textup{B})}-\vct x_{k}^{(\textup{B})}||_2^2\nonumber\\&\leq  
\Psi +R_0\beta+\frac{R_0M}{2}. 
\end{align}
Therefore,
\begin{align}
   \limsup_{T\rightarrow \infty}  M_k \cdot N  T^{\frac{2}{3}}&\leq \limsup_{T\rightarrow \infty} \mathbb{E}\Bigg[\left(\Psi+R_0\beta+ \frac{R_0 M}{2}\right)^3\cdot\mathbbm{1}\left(\Psi\geq \frac{\beta R_0 T^{-0.2}}{3+2 T^{0.2}} \right)\cdot  N  T^{\frac{2}{3}}
 \Bigg]\nonumber\\
 &=0. 
\end{align}
Since the above bounds are uniform over the index $k$, equation \eqref{ineq:ppc_key_summ50} is implied. 
The above arguments also show that 
\begin{align}
   \limsup_{T\rightarrow \infty}  \mathbb{P}[E_{N}=1]\cdot  N^3 T^{\frac{2}{3}}&\leq \limsup_{T\rightarrow \infty} N\cdot \max_k \mathbb{P}[E_{k+1}=1, E_k=0] \cdot  N^3 T^{\frac{2}{3}} 
 =0. 
\end{align}

~

So far, we have proved that the moments of the gradient norm $\left|\left|\nabla f(\vct x_{k}^{(\textup{B})})\right|\right|_2$ is bounded after entering the $E_k=1$ phase. We proceed to bound their contribution to the $N$th iteration. 
 To that end, we denote
\begin{align*}
    G_k&\triangleq \mathbb{E}\left[ ||\nabla f(\vct x_{k}^{(\textup{B})})||_2^3\cdot\mathbbm{1}(E_{k}=1)\right].
\end{align*}
This sequence is initialized with $G_0=0$ by definition. 
We establish the following recursion for sufficiently large $T$. 
\begin{align*}
    G_{k+1}&\leq  G_k\left(1+\frac{1}{N}\right)+6N^2(\rho R_0^2)^3\cdot \mathbb{P}[E_{N}=1]+M_k.
\end{align*}

We note that conditioned on any fixed $\vct  x_k^{(\textup{B})}$ the gradient norm function $||\nabla f(\vct x_{k+1}^{(\textup{B})})||_2$ can be approximated with its linear expansion. Formally, let $\tilde{g}(\vct x)\triangleq \nabla f(\vct x_{k}^{(\textup{B})})+ (\vct x-\vct x_{k}^{(\textup{B})})\cdot \nabla^2 f(\vct x_{k}^{(\textup{B})}),$ we have 
\begin{align*}
   \left|\left|\nabla f(\vct x_{k+1}^{(\textup{B})})\right|\right|_2&\leq \left|\left|\tilde{g}(\vct x_{k+1}^{(\textup{B})})\right|\right|_2+\frac{1}{2}\rho \left|\left|\vct x_{k+1}^{(\textup{B})}-\vct x_{k}^{(\textup{B})}\right|\right|_2^2\\&\leq  \left|\left|\tilde{g}(\vct x_{k+1}^{(\textup{B})})\right|\right|_2+\frac{1}{2}\rho R_0^2.
\end{align*}
Then, in the eigenbasis of $\hat{H}$, it is clear that 
\begin{align*}
\left|\left|\tilde{g}(\vct x_{k+1}^{(\textup{B})})\right|\right|_2\leq &\,\left|\left|\nabla f(\vct x_{k}^{(\textup{B})})+ (\vct x_{k+1}^{(\textup{B})}-\vct x_{k}^{(\textup{B})})\cdot \hat{H}\right|\right|_2\\&+\left|\left| (\vct x_{k+1}^{(\textup{B})}-\vct x_{k}^{(\textup{B})})\cdot \left(\hat{H}-\nabla^2 f(\vct x_{k}^{(\textup{B})})\right)\right|\right|_2\\
\leq &\, \left|\left|\nabla f(\vct x_{k}^{(\textup{B})})\right|\right|_2+\left|\left|\vct m_k-\nabla f(\vct x_{k}^{(\textup{B})})\right|\right|_2+R_0\left|\left|\hat{H}-\nabla^2 f(\vct x_{k}^{(\textup{B})})\right|\right|_{\textup{F}}.
\end{align*}
Recall that by Theorem \ref{thm:hest}, when $T$ is sufficiently large, the moments of $\left|\left|\vct m_k-\nabla f(\vct x_{k}^{(\textup{B})})\right|\right|_2+R_0\left|\left|\hat{H}-\nabla^2 f(\vct x_{k}^{(\textup{B})})\right|\right|_{\textup{F}}$ is upper bounded by any fixed quantity. 
Therefore, as a rough estimate, we have
\begin{align*}
\mathbb{E}\left[ ||\nabla f(\vct x_{k+1}^{(\textup{B})})||_2^3\cdot\mathbbm{1}(E_{k}=1)\right]&\leq \mathbb{E}\left[ ||\nabla f(\vct x_{k}^{(\textup{B})})+\rho R_0^2||_2^3\cdot\mathbbm{1}(E_{k}=1)\right]\\
&\leq G_k\left(1+\frac{1}{N}\right)+6N^2(\rho R_0^2)^3 \cdot \mathbb{P}[E_{k}=1]
\end{align*}
when  $T$ is sufficiently large. Consequently, our needed recursion is implied by the monotonicity of $E_k$, and we have 
\begin{align}
   \limsup_{T\rightarrow \infty}  G_N \cdot T^{\frac{2}{3}}&\leq \limsup_{T\rightarrow \infty} \left( \max_{k} M_k +6N^2(\rho R_0^2)^3 \cdot \mathbb{P}[E_{k}=1]\right)\cdot  N \left(1+\frac{1}{N}\right)^N T^{\frac{2}{3}}\nonumber\\
 &=0. 
\end{align}
Then, inequality \eqref{eq:fststbd} is obtained by combining strong convexity, the above bound, and inequality
\eqref{appine_pppart1}. 
\begin{remark}
Note that 
compared to the simple regret guarantee stated in inequality \eqref{eq:fststbd}, 
we have essentially proved a stronger statement that the moments of the gradient at the outcome of the bootstrapping stage follow similar power decay laws. 
Therefore, while we presented a final stage algorithm that uses non-isotropic sampling to be compatible with general bootstrapping stages, our specific bootstrapping stage actually allows for the use of isotropic (hyperspherical) sampling for gradient estimation in the final stage.
\end{remark}

\section{Proofs of some useful propositions}\label{app:ppa_ele}
\subsection{Proof of Proposition \ref{prop:pa_ele}}
\begin{proof}
Recall that all $ z_j$'s have zero expectations. 
By subgaussianity, we have that all even moments of $z_j$ are 
bounded as follows.
\begin{align}\mathbb{E}\left[ z_j^{2\ell}\right]&= \int_{K=0}^{{+\infty}}2\ell K^{2\ell-1} \mathbb{P}\left[\left|z_j\right|\geq K\right] \textup{d}K\nonumber\\&\leq \int_{K=0}^{{+\infty}} 2\ell K^{2\ell-1} 
\min\left\{2\exp\left({-\frac{K^2}{\sigma_j^2}}\right),1 \right\}
\textup{d}K \nonumber\\
&\leq  \begin{cases}
      (1+\ln 2)\, \sigma_j^{2} & \text{if $\ell=1$,}\\
       (2+2\ln 2+\ln^2 2)\, \sigma_j^{4} \ \  \ \ & \text{if $\ell=2$,}\\
      2 \cdot \ell! \,  \sigma_j^{2\ell}  & \text{if $\ell>2$.}
    \end{cases}\label{ineq:appd_boundind}
\end{align}
Using AM-GM inequality, the odd moments of $z_j$ can then be bounded using the even moments. 
Specifically,   
\begin{align*}
\mathbb{E}\left[ z_j^{2\ell+1}\right]\leq \frac{1}{2s}\mathbb{E}\left[ z_j^{2\ell}\right]+\frac{s}{2}\,\mathbb{E}\left[ z_j^{2\ell+2}\right].
\end{align*}
Therefore, we have obtained the following upper bounds for the moment-generating function.  
\begin{align}
\mathbb{E}\left[\exp( sz_j)\right]&= 1+ \sum_{m=2}^{\infty}\frac{s^{m}}{m!}\mathbb{E}\left[ z_j^{m}\right]\nonumber\\
&\leq 1+ \frac{7s^{2}}{12}\mathbb{E}\left[ z_j^{2}\right] + \sum_{\ell=2}^{\infty}\,\left(2\ell+2+\frac{1}{2\ell+1}\right)\frac{s^{2\ell}}{(2\ell)!\cdot 2}\,\mathbb{E}\left[ z_j^{2\ell}\right].\nonumber
\end{align}
Applying inequality \eqref{ineq:appd_boundind},  the  expression above can be bounded with a series of $(s\sigma_j)^2$. The coefficient of each $(s\sigma_j)^{2\ell}$ is no greater than $\frac{1}{\ell!}$, which can be verified numerically for $\ell\leq 2$ and inductively for $\ell\geq 3$. Hence, we have 
\begin{align}
\mathbb{E}\left[\exp( sz_j )\right]&\leq  \sum_{\ell=0}^{\infty}\,\frac{(s\sigma_j)^{2\ell}}{\ell!}=e^{(s\sigma_j)^2}.
\end{align}
Because $z_j$'s are independent, 
\begin{align*}\mathbb{E}\left[\exp\left( s \sum_j z_j\right)\right]=\prod_j \mathbb{E}\left[\exp( sz_j)\right]\leq  \exp\left(s^2\sum_j \sigma_j^2\right).
\end{align*}
Inequality \eqref{eq:appd_res} is implied by Markov's bound.  
Specifically, for any $K\geq 0$, 
\begin{align*}
\mathbb{P}\left[\sum_{j=1}^{k} z_j\geq K\right]&\leq \inf_{s\geq 0} {\mathbb{E}\left[\exp\left( s \sum_j z_j\right)\right]}\cdot {\exp\left(-sK\right)}\\
&\leq \inf_{s\geq 0}  {\exp\left(s^2\sum_j \sigma_j^2-sK\right)}\\
&=\exp\left( -\frac{K^2}{4\sum_j \sigma_j^2}\right).
\end{align*}
For the same reason, we also have 
\begin{align*}
\mathbb{P}\left[\sum_{j=1}^{k} z_j\leq -K\right]&\leq \exp\left( -\frac{K^2}{4\sum_j \sigma_j^2}\right).
\end{align*}
Hence, by union bound, 
\begin{align}
\mathbb{P}\left[\left|\sum_{j=1}^{k} z_j\right|\geq K\right]&\leq \mathbb{P}\left[\sum_{j=1}^{k} z_j\geq K\right]+\mathbb{P}\left[\sum_{j=1}^{k} z_j\leq -K\right]
\leq 2\exp\left( -\frac{K^2}{4\sum_j \sigma_j^2}\right).
\end{align}
\end{proof}

\subsection{Proof of Proposition \ref{prop:pa_ele2}}
\begin{proof}
By subexponentiality, the moment-generating function of each $|z_j|$ is bounded as follows for any $s<\frac{1}{\sigma_j}$.
\begin{align}
\mathbb{E}[\exp(s|z_j|)]&=1+\int_{K=0}^{{+\infty}}s \exp({sK}) \cdot \mathbb{P}\left[\left|z_j\right|\geq K\right] \textup{d}K\nonumber\\&\leq 1+\int_{K=0}^{{+\infty}} s \exp({sK}) \cdot  
\min\left\{2\exp\left({-\frac{K}{\sigma_j}}\right),1 \right\}dK\nonumber\\
&=\frac{2^{s\sigma_j}}{1-s{\sigma_j}}. 
\end{align}
Because $z_j$'s are independent,
\begin{align}
\mathbb{E}\left[\exp\left(s\left|\sum_j z_j\right|\right)\right]&\leq \mathbb{E}\left[\exp\left(s\sum_j \left|z_j\right|\right)\right] 
=\prod_j \mathbb{E}[\exp(s| z_j|)]\nonumber\\&\leq \frac{2^{s\sum_j \sigma_j}}{\prod_j(1-s\sigma_j)}. 
\end{align}
We choose $s=1/(3\sum_{j}\sigma_j)$, note that $s\sigma_j\leq1/3$, we have $(1-s\sigma_j)\geq \left(\frac{2}{3}\right)^{3s\sigma_j}$. Hence,
\begin{align}
\mathbb{E}\left[\exp\left(s\left|\sum_j z_j\right|\right)\right]&\leq e^{(\ln2-3\ln \frac{2}{3} )(s\sum_j \sigma_j )}= 3/2^{\frac{2}{3}}<2.\nonumber
\end{align}
Then, inequality \eqref{eq:appd_res2} is implied by Markov's bound, i.e.,   
\begin{align*}
\mathbb{P}\left[\left|\sum_{j=1}^{k} z_j\right|\geq K\right]&\leq \mathbb{E}\left[\exp\left(s\left|\sum_j z_j\right|\right)\right]\cdot {\exp\left(-sK\right)}\\
&\leq 2\exp\left( -\frac{K}{3\sum_j \sigma_j}\right).
\end{align*}
\end{proof}

\subsection{Proof of Proposition \ref{prop:ac2e}}\label{app:ppc2e}
To prove the proposition for sufficiently large $T$, we focus on the regime where $N\geq 10R^2\rho^2/M^2+2$. We first use proof by contradiction to show the existence of $k_0\leq 10R^2\rho^2/M^2$ such that  $||\vct r_{k_0}||_2< M/\rho$. Assume the contrary, we have 
$\left|\left|\vct{r}_k\right|\right|_2\geq M/\rho$ 
for all $k\leq 10R^2\rho^2/M^2$. 
Recall we have proved earlier that (see inequality \eqref{ineq:ppbs_cri})
\begin{align}\label{ineq:ppbs_cric}
    \left(\vct x^*-\vct x_k^{(\textup{B})}\right) \cdot  \vct{r}_k \geq 
    0.6||\vct r_k||_2^2.
\end{align}
This assumption implies that $R\geq 0.6{M}/{\rho} $ and $\left|\left|\vct x^*-\vct x_k^{(\textup{B})}\right|\right|_2\geq 0.6{M}/{\rho}$ for all $k\leq 10R^2\rho^2/M^2$.

We characterize the evolution of $x_k^{(\textup{B})}$. 
By 
Cauchy's inequality and inequality \eqref{ineq:ppbs_cric}, 
\begin{align*}
    \left(\vct x^*-\vct x_k^{(\textup{B})}\right) \cdot  \left(\vct x_{k+1}^{(\textup{B})}-\vct x_k^{(\textup{B})}\right) &\geq 
    \left(\vct x^*-\vct x_k^{(\textup{B})}\right) \cdot  \vct{r}_k- \frac{M}{\rho T^{0.2}}\left|\left|\vct x^*-\vct x_k^{(\textup{B})}\right|\right|_2\\
    &\geq 0.6||\vct r_k||_2^2- \frac{M}{\rho T^{0.2}}\left|\left|\vct x^*-\vct x_k^{(\textup{B})}\right|\right|_2.
\end{align*}
Note that our assumed lower bound on $N$ implies a lower bound on $T$. Numerically, one can prove that $T^{0.2}\geq 20\rho R/M$. Hence, the above inequality implies that 
\begin{align*}
    \left(\vct x^*-\vct x_k^{(\textup{B})}\right) \cdot  \left(\vct x_{k+1}^{(\textup{B})}-\vct x_k^{(\textup{B})}\right) &\geq 0.6||\vct r_k||_2^2- \frac{0.05 M^2}{\rho^2 R}\left|\left|\vct x^*-\vct x_k^{(\textup{B})}\right|\right|_2.
\end{align*}
Then, by following the proof steps in Proposition \ref{prop:app_zeroerr}, we have that 
\begin{align}
\left|\left|\vct x^*-\vct x_{k+1}^{(\textup{B})}\right|\right|_2^2-\left|\left|\vct x^*-\vct x_k^{(\textup{B})}\right|\right|_2^2&=-2
    \left(\vct x^*-\vct x_k^{(\textup{B})}\right) \cdot  \left(\vct x_{k+1}^{(\textup{B})}-\vct x_k^{(\textup{B})}\right)+\left|\left|\vct x_{k+1}^{(\textup{B})}-\vct x_k^{(\textup{B})}\right|\right|_2^2 \nonumber\\&\leq 
    -1.2||\vct r_k||_2^2+ \frac{0.1M^2}{\rho^2 R}\left|\left|\vct x^*-\vct x_k^{(\textup{B})}\right|\right|_2+\left(\frac{M}{\rho}\right)^2,
\end{align}
where the second step is due to the construction of $\vct x_{k+1}^{(\textup{B})}$ in Algorithm \ref{alg:1gd3_opt}.
Recall that $\left|\left|\vct x^*-\vct x_0^{(\textup{B})}\right|\right|_2\leq R$. The above inequality implies that if $||\vct r_k||_2\geq{M}/{\rho}$ for all $k\leq 10R^2\rho^2/M^2$, then $\left|\left|\vct x^*-\vct x_k^{(\textup{B})}\right|\right|_2$ is non-increasing and reaches below $0$ at $k= \lfloor 10R^2\rho^2/M^2\rfloor +1$. However, this contradicts the fact that $||\vct r_k||_2$ is non-negative, and we must conclude the existence of $k_0\leq 10R^2\rho^2/M^2$ such that $||\vct r_{k_0}||_2<{M}/{\rho}$. 

~

Now consider any index $k$ with $||\vct r_{k}||_2<{M}/{\rho}$. 
By the construction of $\vct r_{k}$, we have that $\nabla f(\vct x_{k})=-\vct r_{k}\cdot \nabla^2 f(\vct x_{k})$. Then, by the Lipschitz Hessian condition, 
\begin{align}
&\left|\left|\nabla f(\vct x_{k+1}) - (\vct x_{k+1}-\vct x_{k}-\vct r_{k})\cdot \nabla^2 f(\vct x_{k})\right|\right|_2\nonumber\\=&||\nabla f(\vct x_{k+1}) - \nabla f(\vct x_{k})- (\vct x_{k+1}-\vct x_{k})\cdot \nabla^2 f(\vct x_{k}) ||_2 \nonumber\\\leq& \frac{\rho}{2} ||\vct x_{k+1}-\vct x_{k}||_2^2. 
\end{align}
Using the strong convexity assumption and triangle inequality, the above bound implies that 
\begin{align*}
&\left|\left|(\nabla^2 f(\vct x_{k+1}))^{-1}\nabla f(\vct x_{k+1})\right|\right|_2\\\leq&\, \left|\left|(\vct x_{k+1}-\vct x_{k}-\vct r_{k})\cdot \nabla^2 f(\vct x_{k})\cdot (\nabla^2 f(\vct x_{k+1}))^{-1}\right|\right|_2 +\frac{\rho}{2M} ||\vct x_{k+1}-\vct x_{k}||_2^2. 
\end{align*}
Note that the first term in the bound above is upper bounded by the product of $\left|\left|\vct x_{k+1}-\vct x_{k}-\vct r_{k}\right|\right|_2$ and the spectral norm of $\nabla^2 f(\vct x_{k})\cdot (\nabla^2 f(\vct x_{k+1}))^{-1}$. By the Lipschitz Hessian condition and strong convexity, this spectrum norm is further bounded by $1+\frac{\rho}{M}\left|\left|\vct x_{k+1}-\vct x_{k}\right|\right|_2$. Therefore, 
\begin{align}\label{pp13_final_recursion}
&\left|\left|(\nabla^2 f(\vct x_{k+1}))^{-1}\nabla f(\vct x_{k+1})\right|\right|_2\nonumber\\\leq&\, \left|\left|\vct x_{k+1}-\vct x_{k}-\vct r_{k}\right|\right|_2 \cdot\left(1+\frac{\rho}{M}\left|\left|\vct x_{k+1}-\vct x_{k}\right|\right|_2\right) +\frac{\rho}{2M} ||\vct x_{k+1}-\vct x_{k}||_2^2.
\end{align}

 We use 
 inequality \eqref{pp13_final_recursion} to bound $||\vct r_k||_2$ recursively. Assume $T$ is sufficiently large such that $T^{0.2}\geq 20$. As a rough estimate, we have
\begin{align}
&\left|\left|(\nabla^2 f(\vct x_{k+1}))^{-1}\nabla f(\vct x_{k+1})\right|\right|_2\nonumber \leq \frac{M}{20\rho} \cdot 2+\frac{M}{2\rho}\leq  0.6\frac{M}{\rho}.   
\end{align}
Recall we can find $k_0\leq 10R^2\rho^2/M^2$ such that $||\vct r_{k_0}||_2<{M}/{\rho}$. By induction, we have $||\vct r_{k}||_2\leq 0.6M/\rho$ for all $k>k_0$. 
Hence, when $k>k_0$, inequality \eqref{pp13_final_recursion} implies the following relation, where the RHS is obtained by triangle inequality and the definition of $E_{N-1}=0$. 
\begin{align}
\left|\left|\vct r_{k+1}\right|\right|_2\nonumber\leq&\, \frac{M}{\rho T^{0.2}} \cdot\left(1+\frac{\rho}{M}\left|\left|\vct r_{k}\right|\right|_2+\frac{1}{ T^{0.2}}\right) +\frac{\rho}{2M} \left(||\vct r_{k}||_2+\frac{M}{\rho T^{0.2}}\right)^2\nonumber.   
\end{align}
Therefore, by induction, we have 
\begin{align}
\left|\left|\vct r_{k}\right|\right|_2\nonumber\leq&\, \frac{M}{\rho}\max\left\{\frac{0.6}{ 2^{2^{k-k_0-2}}},\frac{2}{T^{0.2}}\right\}\nonumber   
\end{align}
for any $k> k_0+1$, and numerically, $\left|\left|\vct r_{N-1}\right|\right|_2\leq 2MT^{-0.2}/\rho$ if $T^{0.1}\geq 2k_0+6$.

\subsection{Proof of Proposition \ref{prop:13_br}}\label{app:pp13br}
\begin{proof}[Proof of inequality \eqref{ineq:pp13_main}]
We prove the inequality by considering two possible cases. In the first case, we assume that the $\ell_2$ norms of both $H^{-1}\vct m$ and  $H'^{-1}\vct m'$ are no greater than $R_0$. In this case, we have $H_{m^*}=H$ and $H_{m'^*}'=H'$. Hence, 
    \begin{align}\label{eq:ppc_eq1}
   H_{m^*}^{-1}\vct m-H_{m'^*}'^{-1}\vct m'&= H^{-1}\vct m-H'^{-1}\vct m'\nonumber\\ & =H^{-1}\left((\vct m-\vct m')+(H'-H)H'^{-1}\vct m'\right).
    \end{align}
    By the fact that all eigenvalues of $H$ are lower bounded by $M$ and the triangle inequality, 
        \begin{align}\label{eq:ppc_eq2}
    \left|\left|H_{m^*}^{-1}\vct m-H_{m'^*}'^{-1}\vct m'\right|\right|_2&\leq M^{-1}\left(||\vct m-\vct m'||_2+||H'-H||_\textup{F}||H'^{-1}\vct m'||_2\right)\nonumber\\&\leq M^{-1}\left(||\vct m-\vct m'||_2+||H'-H||_\textup{F}\cdot R_0\right).
    \end{align}
Then, the needed inequality is obtained by $ \left|\left|H_{m^*}^{-1}\vct m-H_{m'^*}'^{-1}\vct m'\right|\right|_2\leq 2R_0$, which follows from the construction of  $H_{m^*}$, $H_{m'^*}'$ and triangle inequality.

For the other case, we have $\max\left\{||H^{-1}\vct m||_2, 
||H'^{-1}\vct m'||_2\right\}> R_0$. Without loss of generality, we assume that 
$m^*\geq m'^*$. To be rigorous, here we adopted the convention that $m^*=-\infty$ if the $\ell_2$ norms of $H^{-1}\vct m$ is no greater than $R_0$, and the same for $m'^*$ accordingly. Based on this assumption, the condition in this case can be simplified as $||H^{-1}\vct m||_2>R_0$, and we have that $||H^{-1}_{m^*}\vct m||_2=R_0$. Furthermore, we also have $m^*>M$.

To prove the needed inequality, we introduce an intermediate variable $H'_{m^*}$, which is defined as the symmetric matrix sharing the eigenbasis of $H'$, but with each eigenvalue $\lambda$ replaced with $\max\{\lambda,m^*\}$. 
 Note that $H_{m^*}$ and $H'_{m^*}$ are obtained by projecting $H$ and $H'$ to a convex set of matrices under the Frobenius norm. We have that 
    \begin{align}\label{ineq_cp}
    ||H'_{m^*}-H_{m^*}||_\textup{F} \leq ||H'-H||_\textup{F}.
    \end{align}
Therefore, by following the same steps in the first case and noting that all eigenvalues of $H'_{m^*}$ are lower bounded by $m^*$, we have that 
      \begin{align*}
    \left|\left|H_{m^*}^{-1}\vct m-H_{m^*}'^{-1}\vct m'\right|\right|_2&\leq  {m^*}^{-1}\left(||\vct m-\vct m'||_2+||H'-H||_\textup{F}\cdot R_0\right).
    \end{align*}
    Compare the above to inequality \eqref{ineq:pp13_main}, it remains to prove that 
     \begin{align}\label{ineq:pp13_main_2}
    \left|\left|H_{m^*}^{-1}\vct m-H_{m'^*}'^{-1}\vct m'\right|\right|_2^2\leq \frac{2R_0m^*}{M}\cdot \left|\left|H_{m^*}^{-1}\vct m-H_{m^*}'^{-1}\vct m'\right|\right|_2. 
    \end{align}

 For brevity, we denote that 
 \begin{align*}
 \vct a&\triangleq H_{m^*}^{-1}\vct m,\\
 \vct b&\triangleq H_{m^*}'^{-1}\vct m',\\ 
 \vct c&\triangleq H_{m'^*}'^{-1}\vct m',\\ \alpha&\triangleq M/m^*.
 \end{align*}
 In the eigenbasis of $H'$, it is clear that
 \begin{align*}\left|\left|\vct b-\alpha \vct c\right|\right|_2\leq (1-\alpha) \left|\left|\vct c\right|\right|_2.
 \end{align*}
 Hence, by Cauchy's inequality,
  \begin{align}\label{ineq:ppuc_1}
  \vct a\cdot  \left(\vct b-\alpha \vct c\right )\leq \left|\left|\vct a \right|\right|_2\cdot\left|\left|\vct b-\alpha \vct c\right|\right|_2\leq (1-\alpha) \left|\left|\vct a \right|\right|_2\cdot\left|\left|\vct c\right|\right|_2.
 \end{align}
 Recall that $\left|\left|\vct c\right|\right|_2\leq R_0$ and in this case we have $\left|\left|\vct a\right|\right|_2= R_0$. Therefore, the RHS of the above inequality is upper bounded by $(1-\alpha)\left|\left|\vct a \right|\right|_2^2$, and we have
   \begin{align*}
   \vct a\cdot  \left(\vct a- \vct c\right )\leq \frac{1}{\alpha}\vct a \cdot\left(\vct a-\vct b\right)\leq \frac{1}{\alpha} R_0 \left|\left|\vct a-\vct b\right|\right|_2,
 \end{align*}
 where the first step above is equivalent to inequality \eqref{ineq:ppuc_1}, and the second step is due to Cauchy's inequality. 
Finally, it remains to notice that the LHS of inequality \eqref{ineq:pp13_main_2} equals $\left|\left|\vct a-\vct c\right|\right|_2^2$, which is upper bounded by the LHS of the above inequality, and its RHS equals the RHS of the above inequality. Hence, inequality \eqref{ineq:pp13_main_2} is proved.
\end{proof}

\begin{proof}[Proof of inequality \eqref{ineq:pp13_2}]
Firstly, if $m^*=m'^*$, we follow similar arguments from equation \eqref{eq:ppc_eq1} to inequality \eqref{eq:ppc_eq2}. I.e.,  in this case, we have
\begin{align}
  H'_{m'^*} \left(H_{m^*}^{-1}\vct m-H_{m'^*}'^{-1}\vct m'\right)
  &=(\vct m-\vct m')+(H_{m'^*}'-H_{m^*})H_{m^*}^{-1}\vct m.
    \end{align}
    Hence, by  triangle inequality and inequality \eqref{ineq_cp},  
        \begin{align}
    \left|\left|H'_{m'^*} \left(H_{m^*}^{-1}\vct m-H_{m'^*}'^{-1}\vct m'\right)\right|\right|_2
    &\leq ||\vct m-\vct m'||_2+\left|\left|H_{m'^*}'-H_{m^*}\right|\right|_\textup{F}\left|\left|H_{m^*}^{-1}\vct m\right|\right|_2\nonumber\\&\leq ||\vct m-\vct m'||_2+||H'-H||_\textup{F}\cdot R_0.
    \end{align}

Then, for $m^*>m'^*$, we define 
$H'_{m^*}$ and $\vct a$, $\vct b$, $\vct c$ as in the earlier proof steps. 
We first prove the following key inequality.
\begin{align}\label{ineq:ppc_interm65key}
||\vct a-\vct c||_2\cdot ||\vct b-\vct c||_2\leq 2 ||\vct a-\vct b||_2 \cdot ||\vct a||_2. 
\end{align}
Recall the assumption in this case implies that $||\vct a||_2=R_0$. By taking the squares on both sides, the inequality above is equivalent to the following linear inequality of vector $\vct a$. 
\begin{align}\label{ineq:ppc_interm65}
\vct a\cdot \left( 8R_0^2\, \vct b- 2  ||\vct b-\vct c||_2^2\,  \vct c \right) \leq 4R_0^2\cdot   \left(R_0^2+||\vct b||_2^2\right)- ||\vct b-\vct c||_2^2\cdot \left(R_0^2+ ||\vct c||_2^2\right). 
\end{align}
By Cauchy's inequality, the LHS of inequality \eqref{ineq:ppc_interm65} is upper bounded by 
$||\vct a||_2\cdot || 8R_0^2\, \vct b- 2  ||\vct b-\vct c||_2^2\,  \vct c ||_2$.
The coefficient of $||\vct a||_2$ in this expression can be further characterized as follows.   
\begin{align}\label{ineq:ppc_interm66}
\left|\left| 8R_0^2\, \vct b- 2  ||\vct b-\vct c||_2^2\,  \vct c \right|\right|_2^2 
=&  4\cdot \left(4R_0^2-||\vct b-\vct c||_2^2\right)\left(4R_0^2||\vct b||_2^2-||\vct b-\vct c||_2^2||\vct c||_2^2\right) \nonumber\\
&+ 16R_0^2\cdot ||\vct b-\vct c||_2^4\nonumber\\
=& \frac{1}{R_0^2} \left(4R_0^2\cdot   \left(R_0^2+||\vct b||_2^2\right)- ||\vct b-\vct c||_2^2\cdot \left(R_0^2+ ||\vct c||_2^2\right)\right)^2\nonumber\\
&-\frac{1}{R_0^2} \left(4R_0^2\cdot   \left(R_0^2-||\vct b||_2^2\right)- ||\vct b-\vct c||_2^2\cdot \left(R_0^2- ||\vct c||_2^2\right)\right)^2\nonumber\\
&+ 16R_0^2\cdot ||\vct b-\vct c||_2^4.
\end{align}
We prove that the contribution from the second term and the third term in the above expression is non-positive. 
To that end, note that the definition of $H'_{m^*}$, $H'_{m'^*}$ and the assumption of $m^*>m'^*$ imply that $(\vct c-\vct b)\cdot \vct b\geq 0$. We have the following inequalities.
\begin{align}\label{ineq:ppc_interm67}
||\vct b-\vct c||_2^2+||\vct b||_2^2\leq ||\vct c||_2^2\leq R_0^2.
\end{align}
Therefore, $0\leq 4R_0^2\cdot   \left(R_0^2-||\vct b||_2^2\right)- ||\vct b-\vct c||_2^2\cdot \left(R_0^2- ||\vct c||_2^2\right)\leq 4R_0^2\cdot   ||\vct b-\vct c||_2^2$, and  equation \eqref{ineq:ppc_interm66} implies that 
\begin{align}
\vct a\cdot \left( 8R_0^2\, \vct b- 2  ||\vct b-\vct c||_2^2\,  \vct c \right)\leq  \left|4R_0^2\cdot   \left(R_0^2+||\vct b||_2^2\right)- ||\vct b-\vct c||_2^2\cdot \left(R_0^2+ ||\vct c||_2^2\right)\right|.\nonumber
\end{align}
By utilizing the above bound, inequality \eqref{ineq:ppc_interm65} is proved by noting that its RHS is non-negative, which can be proved using inequality \eqref{ineq:ppc_interm67}. As mentioned earlier, this implies inequality \eqref{ineq:ppc_interm65key}. 

To proceed further, we note that 
$\vct b-\vct c$ lies in the eigenspace of $H_{m^*}'$ associated with eigenvalue $m^*$. Hence, 
\begin{align}
   H_{m^*}' 
    \left(\vct a-\vct c\right)=H_{m^*}' 
    \left(\vct a-\vct b\right)+m^* \left(\vct b-\vct c\right).\nonumber
\end{align}
Therefore, by triangle inequality, we have
\begin{align}\label{ineq:ppc_interm66l}
\left|\left|H'_{m^*} 
    \left(H_{m^*}^{-1}\vct m-H_{m'^*}'^{-1}\vct m'\right)\right|\right|_2
    \leq& \left|\left|H_{m^*}' 
    \left(\vct a-\vct b\right)\right|\right|_2+m^* \left|\left|\vct b-\vct c\right|\right|_2  .
\end{align}
Note that by inequality \eqref{ineq:ppc_interm65key} and the fact that all eigenvalues of $H_{m^*}'$ are lower bounded by $m^*$, we have 
\begin{align}
m^* \left|\left|\vct b-\vct c\right|\right|_2\leq \frac{2R_0}{||\vct a-\vct c||_2}  \left|\left|
   H_{m^*}'
    \left(\vct a-\vct b\right)\right|\right|_2.\nonumber
\end{align}
Therefore, it remains to upper bound the $\ell_2$ norm of $ H_{m^*}'
    \left(\vct a-\vct b\right)$.

By the definition of vectors $\vct a$, $\vct b$, 
\begin{align}
 H_{m^*}'
    \left(\vct a-\vct b\right) 
  &=(\vct m-\vct m')+(H_{m^*}'-H_{m^*})\, \vct a.
    \end{align}
The triangle inequality implies that
\begin{align}
   \left|\left|
   H_{m^*}'
    \left(\vct a-\vct b\right)\right|\right|_2&\leq  \left|\left|\vct m-\vct m'\right|\right|_2+  R_0 \left|\left|H_{m^*}'-H_{m^*}\right|\right|_{\textup{F}}. 
\end{align}
Hence,  
\begin{align*}
\left|\left|H'_{m^*} 
    \left(H_{m^*}^{-1}\vct m-H_{m'^*}'^{-1}\vct m'\right)\right|\right|_2
    \leq& \left(1+\frac{2R_0}{||\vct a-\vct c||_2}\right)\\
    &\cdot \left(  \left|\left|\vct m-\vct m'\right|\right|_2+R_0 \left|\left|H_{m^*}'-H_{m^*}\right|\right|_{\textup{F}}\right)
    \\\leq& \left(1+\frac{2R_0}{||H_{m^*}^{-1}\vct m-H_{m'^*}'^{-1}\vct m'||_2}\right)\nonumber\\
    &\cdot \left(  \left|\left|\vct m-\vct m'\right|\right|_2+R_0 \left|\left|H-H'\right|\right|_{\textup{F}}\right), 
\end{align*}
where the last step is due to inequality \eqref{ineq_cp}. Thus, inequality \eqref{ineq:pp13_2} is implied by the semi-positive-definiteness of $H'_{m^*}-H'_{m'^*}$.


Finally, when $m^*<m'^*$,  
we let $H_{m'^*}$ denote the symmetric matrix sharing the same eigenbasis of $H$, but with each eigenvalue $\lambda$ replaced by $\max\{\lambda, m'^*\}$. Due to the equivalence of $H$ and $H'$, our earlier proof steps imply that 
\begin{align}
\left|\left|H_{m*} 
    \left(H_{m^*}^{-1}\vct m-H_{m'^*}'^{-1}\vct m'\right)\right|\right|_2
    \leq& \left(1+\frac{2R_0}{||H_{m^*}^{-1}\vct m-H_{m'^*}'^{-1}\vct m'||_2}\right)\nonumber\\
    &\cdot \left(  \left|\left|\vct m-\vct m'\right|\right|_2+R_0 \left|\left|H-H'\right|\right|_{\textup{F}}\right).\nonumber
\end{align}
Hence, by triangle inequality, we can use the above bound as follows. 
\begin{align}
\left|\left|H'_{m'^*} 
    \left(H_{m^*}^{-1}\vct m-H_{m'^*}'^{-1}\vct m'\right)\right|\right|_2\leq &\left|\left|\left(H_{m'^*}-H'_{m'^*}\right) 
    \left(H_{m^*}^{-1}\vct m-H_{m'^*}'^{-1}\vct m'\right)\right|\right|_2\nonumber\\&+\left|\left|H_{m'^*} 
    \left(H_{m^*}^{-1}\vct m-H_{m'^*}'^{-1}\vct m'\right)\right|\right|_2\nonumber\\
    \leq &\left|\left|H_{m'^*}-H'_{m'^*}\right|\right|_{\textup{F}} 
    \left|\left|H_{m^*}^{-1}\vct m-H_{m'^*}'^{-1}\vct m'\right|\right|_2\nonumber\\&+\left|\left|H_{m'^*} 
    \left(H_{m^*}^{-1}\vct m-H_{m'^*}'^{-1}\vct m'\right)\right|\right|_2\nonumber.
\end{align}
Note that $\left|\left|H_{m^*}^{-1}\vct m-H_{m'^*}'^{-1}\vct m'\right|\right|_2\leq 2R_0$.  
By inequality \eqref{ineq_cp}, it is clear that 
\begin{align*}
\left|\left|H'_{m'^*} 
    \left(H_{m^*}^{-1}\vct m-H_{m'^*}'^{-1}\vct m'\right)\right|\right|_2
    \leq&  \left(3+\frac{2R_0}{||H_{m^*}^{-1}\vct m-H_{m'^*}'^{-1}\vct m'||_2}\right)\nonumber\\
    &\cdot \left(  \left|\left|\vct m-\vct m'\right|\right|_2+R_0 \left|\left|H-H'\right|\right|_{\textup{F}}\right).
\end{align*}
    \end{proof}

\end{document}